\definecolor{LightCyan}{rgb}{0.8, 0.9, 1}
\titlespacing*{\section}{0pt}{*0.5}{*0.5}
\titlespacing*{\subsection}{0pt}{*0.5}{*0.5}
\titlespacing*{\subsubsection}{0pt}{*0.5}{*0.5}
\title{Towards a Sharp Analysis of Offline Policy Learning for $f$-Divergence-Regularized Contextual Bandits}
\author{Qingyue Zhao$^{1}$\thanks{Equal Contribution}\ \ ,
Kaixuan Ji$^{1}$$^*$, 
Heyang Zhao$^{1}$$^*$,
Tong Zhang$^{2}$,
Quanquan Gu$^{1}$\\
$^1$University of California, Los Angeles\\
$^2$University of Illinois Urbana-Champaign\\
\texttt{\{kaixuanji,zhaoqy24,hyzhao\}@cs.ucla.edu}\\
\texttt{tozhang@illinois.edu,qgu@cs.ucla.edu}
}
\newcommand{\piref}{\pi^{\mathsf{ref}}}
\newcommand{\kl}[2]{\ensuremath{{\mathsf{KL}}\left(#1\|#2\right)}}
\newcommand{\tv}[2]{\ensuremath{{\mathsf{TV}}\left(#1\|#2\right)}}
\newcommand{\confbandit}{\cE}
\newcommand{\confduelbandit}{\cE}
\newcommand{\fls}{{\bar{g}}} 
\newcommand{\fps}{{\hat{g}}} 
\newcommand{\fgt}{{g^*}} 
\newcommand{\fcl}{{\cG}} 
\def \algcb {\text{KL-PCB}}
\def \algcdb {\text{KL-PCDB}}
\def \algfcb {\text{$f$-CB}}
\def \algfcdb {\text{$f$-CDB}}
\newcommand{\pistar}{\pi^*}
\newcommand{\subopt}{\mathrm{SubOpt}}
\newcommand{\objfdiv}{J_{f\mathrm{div}}}
\newcommand{\pistarfdiv}{\pi^*_{f\mathrm{div}}}
\newcommand{\suboptfdiv}{\mathrm{SubOpt}_{f\mathrm{div}}}
\newcommand{\rE}{\mathbb E}
\newcommand{\KL}{\mathsf{KL}}
\newcommand{\la}{\left\langle}
\newcommand{\ra}{\right\rangle}
\newcommand{\E}{\rE}
\newcommand{\minus}{\scalebox{0.75}[1.0]{$-$}}
\begin{document}

\maketitle

\begin{abstract}
Many offline reinforcement learning algorithms are underpinned by $f$-divergence regularization, but their sample complexity \emph{defined with respect to regularized objectives} still lacks tight analyses, especially in terms of concrete data coverage conditions. 
In this paper, we study the exact concentrability requirements to achieve the $\tilde{\Theta}(\epsilon^{-1})$ sample complexity for offline $f$-divergence-regularized contextual bandits. 
For reverse Kullback–Leibler (KL) divergence, arguably the most commonly used one, we achieve an $\tilde{O}(\epsilon^{-1})$ sample complexity under single-policy concentrability for the first time via a novel pessimism-based analysis, surpassing existing $\tilde{O}(\epsilon^{-1})$ bound under all-policy concentrability and $\tilde{O}(\epsilon^{-2})$ bound under single-policy concentrability. 
We also propose a near-matching lower bound, demonstrating that a multiplicative dependency on single-policy concentrability is necessary to maximally exploit the curvature property of reverse KL. 
Moreover, for $f$-divergences with strongly convex $f$, to which reverse KL \emph{does not} belong, we show that the sharp sample complexity $\tilde{\Theta}(\epsilon^{-1})$ is achievable even without pessimistic estimation or single-policy concentrability.
We further corroborate our theoretical insights with numerical experiments and extend our analysis to contextual dueling bandits. We believe these results take a significant step towards a comprehensive understanding of objectives with $f$-divergence regularization.
\end{abstract}

\section{Introduction}\label{sec:intro}

Due to the data-hungry and instable nature of reinforcement learning (RL), divergences that are {straightforward to estimate via Monte Carlo} or {amenable to constrained optimization} stand out from numerous candidates \citep{renyi1961measures,csiszar1967information,muller1997integral,basseville2013divergence} as regularizers; the former family is typically \emph{$f$-divergence} \citep{renyi1961measures} because any of them is an expectation, for which empirical average is a good proxy~\citep{levine2018reinforcement,levine2020offline}; and the latter class subsumes those with nice positive curvatures (e.g., Bregman divergence \citep{bregman1967relaxation} induced by strongly convex functions). In particular, \emph{Kullback-Leibler (KL) divergence} is the only one at the intersection of $f$-divergence and Bregman divergence \citep[Theorem~5]{jiao2014information}, indicating its theoretical
advantage
among common choices from both computational and statistical aspects. Also, the \emph{KL-regularized RL objective} is arguably the most popular one in practice:
\begin{align}\label{eq:obj-teaser}
    J(\pi) = \EE_{\pi}[r] - \eta^{-1}\KL({\pi}\|{\piref}),
\end{align}
where $r$ is the reward, $\piref$ is a reference policy, $\KL({\pi}\|{\piref})$ is the reverse KL divergence, and $\eta>0$ is the inverse temperature. When $\piref$ is uniform, \Cref{eq:obj-teaser} reduces to the entropy-regularized objective that encourages diverse actions and enhances robustness \citep{williams1992simple,ziebart2008maximum,levine2013guided,levine2016end,haarnoja2018soft,richemond2024offline,liu2024enhancing}. KL regularization has also been widely used in the RL fine-tuning of large language models \citep{ouyang2022training,rafailov2023direct}, where $\piref$ is the base model. Given its widespread use, there has been a surge of interest in understanding the role of KL regularization in RL by both empirical studies \citep{ahmed2019understanding,liu2019regularization} and theoretical analysis \citep{geist2019theory,vieillard2020leverage,kozuno2022kl}. There are also lines of research on KL regularization in online learning \citep{cai2020provably,he2022near,ji2023horizon} and convex optimization \citep{neu2017unified}. However,
most of these works still study the unregularized reward maximization objective, against which the sample complexity is at least $\Omega(\epsilon^{-2})$.\footnote{See \Cref{sec:attempts} for detailed reasons.}

\newcolumntype{g}{>{\columncolor{LightCyan}}c}
\begin{table*}[t!]
\caption{Comparison of sample complexity
bounds for finding $\epsilon$-optimal policy for offline contextual bandits with KL-
and (strongly convex) $f$-divergence regularization.
Constants and $\mathrm{polylog}$ factors are omitted here except the metric entropy $\log \cN$. ``Reverse-KL'' stands for KL-regularized contextual bandits and ``$f$-divergence w/ s.c., $f$'' for the counterpart with
an $\alpha$-strongly convex $f$. The two existing upper bounds are adapted from the implicit form in \citet[Theorem~3.1]{xiong2024iterative} and \citet[Theorem~3.3 and Theorem~4.4]{zhao2024sharp}, of which the detailed adaptions are deferred to \Cref{sec:exist-result}. The relationship between $D^2_{\pi^*}$ and $C^{\pi^*}$ is detailed in \Cref{sec:cb:setup}.}\label{tab:teaser}
\centering
\begin{tabular}{c  c  c  c  g}
\toprule
\multicolumn{2}{c}{Regularizer} & \citet{xiong2024iterative} & \citet{zhao2024sharp} & This work \\
\midrule
\multirow{2}{*}{ Reverse KL} & Upper & $d\epsilon^{-2}$ & $\eta D^2 \epsilon^{-1} \log \cN $ & $\eta D^2_{\pi^*} \epsilon^{-1} \log \cN $ \\
& Lower & - &$\eta \epsilon^{-1} \log \cN $  & $\eta C^{\pi^*} \epsilon^{-1} \log \cN $ \\
\midrule
\multirow{2}{*}{ \shortstack{$f$-divergence \\ w/ s.c. $f$} } & Upper & - & - & $ \alpha^{-1}\eta \epsilon^{-1} \log \cN $ \\
& Lower & - & - & $ \alpha^{-1}\eta \epsilon^{-1} \log \cN $ \\
\bottomrule
\end{tabular}
\vspace*{-0.2in}
\end{table*}

Several recent papers \citep{xiong2024iterative,xie2024exploratory,zhao2024sharp,foster2025good,aminian2025theoretical} switched the focus to analyzing the sub-optimality defined via the regularized objective \eqref{eq:obj-teaser}, under which an $\Omega(\epsilon^{-1})$ sample complexity is possible \citep[Theorem~3.6]{zhao2024sharp}.
However, even restricted to the pure $\iid$ setting, existing analyses in this vein either result in still $\tilde{O}(\epsilon^{-2})$ bounds \citep{xiong2024iterative,xie2024exploratory} or has stringent (local) all-policy concentrability dependencies in their upper bounds \citep{zhao2024sharp,aminian2025theoretical}.\footnote{See \Cref{sec:cb:setup} for details on coverage conditions.} Thus, there are by far no tight bounds in terms of both the dependency of $\epsilon^{-1}$ and data coverage conditions for KL-regularized offline decision making. In addition, all
analyses 
above
set KL as the right target by default; but reverse KL is the $f$-divergence with $f(x) = x\log x$, which is merely convex. Therefore, it is also unknown whether $f$-divergence regularizers with even nicer (e.g., strongly convex) $f$, whose performance against the reward maximization objective
are
provably promising \citep{zhan2022offline,gabbianelli2024importance,huang2024correcting}, can enjoy a better coverage dependency in their sample complexity when
the corresponding regularized objectives serve as the performance metric.
Because data coverage (i.e., concentrability) conditions captures the crucial distributional shift issue in offline RL~\citep{levine2020offline},
the aforementioned perspectives motivate a pivotal open problem:
\begin{center}
    \emph{What is the weakest coverage condition required for} offline learning to be near-optimal with respect to \emph{$f$-divergence-regularized objectives?}
\end{center}
We attack this problem by showing near-optimal sample complexity
with matching concentrability dependencies for two representative subclasses of $f$-divergence.
First, for contextual bandits with KL regularization, we achieve a near-optimal sample complexity guarantee with linear dependence on \emph{single}-policy coverage ratio. Our novel lower bound further indicates that this multiplicative dependency on \emph{single}-policy concentrability is necessary.
Surprisingly, for $f$-divergence with $\alpha$-strongly-convex $f$, we prove nearly matching sample complexity bounds of $\tilde{\Theta}(\alpha^{-1} \eta \epsilon^{-1})$,
eliminating the dependence on coverage for the first time. For the ease of comparison, we adapt existing counterparts under our notation to the offline setting and summarize them
in \Cref{tab:teaser}. 

\subsection{Contributions}

\begin{itemize}[leftmargin=*]
    \item For KL regularization, we propose a pessimism-based algorithm achieving the tight sample complexity under \emph{single}-policy concentrability. We also obtain a lower bound that linearly scale with the density-ratio-based
    single-policy concentrability. Both results strictly improves upon previous works~\citep{zhao2024sharp,foster2025good} in the offline setting, showing that single-policy concentrability is both sufficient and necessary to achieve the $\tilde{\Theta}(\epsilon^{-1})$ sample complexity.
    \item Technically speaking, our analysis exploits the strong convexity of KL and pessimism of the reward estimator, to refine a mean-value-type risk upper bound (\Cref{lem:taylor-expansion}) to its, which in turn leads to a novel moment-based analysis, effectively bypassing the need for uniform control over the discrepancy between any two functions in the function class. To the best of our knowledge, this machinery has not been used in the standard analysis of existing offline RL algorithms and may be of independent interest.
    
    \item For $f$-divergence-regularized objectives with strongly convex $f$, we design a truly lightweight algorithm free of pessimism-based gadgets and still obtain the $\tilde{\Theta}(\epsilon^{-1})$ sample complexity certified by a matching lower bound without coverage conditions.
    
    \item  We verify the statistical rates above in numerical experiments, and demonstrate the versatility of all algorithmic and constructive proof ideas 
    above by extending them to $f$-divergence-regularized contextual dueling bandits (CDBs), achieving similar $\tilde{\Theta}(\epsilon^{-1})$ sample complexity bounds. Moreover, all algorithms are applicable for reward function classes with small metric entropy.
\end{itemize}


\subsection{Key Related Work}\label{sec:key-related-works}

We review two key lines of theoretical progress that are relevant to our algorithm design and analysis. 
\textbf{Pessimism in offline RL.} The principle of pessimism has been underpinning offline RL for both the tabular \citep{rashidinejad2021bridging} and function approximation \citep{jin2021pessimism} settings 
under the name of lower confidence bound (LCB). For contextual bandits, it is behind the adaptively optimal sample complexity analysis \citep{li2022pessimism}.
\citet{shi2022pessimistic} proposed a LCB-based model-free algorithm for tabular RL with near-optimal guarantee.
\citet{jin2021pessimism,xiong2022nearly,di2024pessimistic} utilized LCB in conjunction with the classic least-square value iteration paradigm to derive $\tilde{O}(\epsilon^{-2})$ sample complexity results for model-free RL with function approximation. 
The line of work from \citet{rashidinejad2021bridging,xie2021policy} to \citet{li2024settling} settled the sample complexity
of tabular model-based RL via pessimistic estimators exploiting the variance information.
It is also possible to leverage the idea of pessimism to design model-based algorithms under general function approximation that are at least statistically efficient \citep{xie2021bellman,uehara2021pessimistic,wang2024model}. The principle of pessimism has also been applied in counterfactual empirical risk minimization~\citep{swaminathan2015counterfactual,london2019bayesian} and offline policy learning~\citep{sakhi2023pac,sakhi2024logarithmic}, which are orthogonal to our contributions.


However, in terms of risk decomposition, to the best of our knowledge, none of these pessimism-based analyses really goes beyond the performance difference lemma \citep[Lemma~13]{foster2023foundations} or simulation lemma \citep[Lemma~23]{foster2023foundations};
both of which are not able to capture the strong concavity of KL-regularized objectives even in the bandit setting.
The algorithmic idea of using pessimistic least-square estimators under general function approximation in \citet{jin2021pessimism,di2024pessimistic} is similar to ours, but their sub-optimality gap is bounded by the sum of bonuses, which cannot directly lead to the desired sample complexity of our objective.

\textbf{Offline CDBs.} CDBs \citep{dudik2015contextual} is the contextual extension of dueling bandits
in the classic literature of online learning from pairwise comparisons \citep{yue2012k,zoghi2014relative}.
Since the empirical breakthrough of preference-based RL fine-tuning of LLMs \citep{ouyang2022training}, the theory of offline CDBs has received more attention
under linear function approximation \citep{zhu2023principled,xiong2024iterative} and general  function approximation \citep{zhan2022offline,zhao2024sharp,song2024importance,huang2024correcting}. Preference models without stochastic transitivity \citep{munos2023nash,ye2024theoretical,wu2024self,zhang2024general} are beyond the scope of this work, namely, our preference labels are assumed to follow the Bradley-Terry Model \citep{bradley1952rank}.


\paragraph{Notation. } The sets $\cS$ and $\cA$ are assumed to be countable throughout the paper.
For nonnegative sequences $\{x_n\}$ and $\{y_n\}$, we write $x_n = O(y_n)$ if $\limsup_{n\to\infty}{x_n}/{y_n} < \infty$, $y_n = \Omega(x_n)$ if $x_n = O(y_n)$, and $y_n = \Theta(x_n)$ if $x_n = O(y_n)$ and $x_n = \Omega(y_n)$. We further employ $\tilde{O}(\cdot), \tilde{\Omega}(\cdot)$, and $\tilde{\Theta}$ to hide $\polylog$ factors. 
For countable $\cX$ and $\cY$, we denote 
the family of probability kernels from $\cX$ to $\cY$ by $\Delta(\cY|\cX)$. 
For $g:\cX \to \RR$, its infinity norm is denoted by $\norm{g}_{\infty} \coloneqq \sup_{x\in\cX}|g(x)|$. 
For a pair of probability measures $P \ll Q$ on the same space and function $f: \RR_{+} \to \RR$, their $f$-divergence is $D_f(P\|Q) \coloneqq \int f(\ud P/ \ud Q) \ud Q$. Specifically, when $f(x)=x\log x$, $f$-divergence becomes KL divergence denoted as $\kl{P}{Q} \coloneqq \int \log({\ud P}/{\ud Q})\ud P$, and when $f(x) = |x-1|/2$, it becomes the total variation (TV) distance, which is denoted as $\tv{P}{Q} \coloneqq 0.5\int|\ud P - \ud Q|$. We use supp$(P)$ to denote the support set of $P$.


\section{KL-regularized Contextual Bandits}\label{sec:cb}

In this section, we introduce a pessimism-based algorithm, PCB-KL, for offline KL-regularized contextual bandits. We then showcase our novel analysis techniques for PCB-KL, which couples the algorithmic pessimism with the curvature property of KL-regularized objectives.

\subsection{Problem Setup}\label{sec:cb:setup}

We consider contextual bandit, which is denoted by a tuple $(\cS, \cA, r, \piref)$. Specifically, $\cS$ is the context space, $\cA$ is the action space and $r: \cS \times \cA \to [0, 1]$ is the reward function. In the offline setting, the agent only has access to an $\iid$ dataset $\cD = \{(s_i, a_i, r_i)\}_{i=1}^n$. Here $s_i's$ are states sampled from $\rho\in\Delta(\cS)$, $a_i \in \cA$ is the action taken from a \emph{behavior policy}, and $r_i$ is the observed reward given by $r_i = r(s_i, a_i) + \varepsilon_i$, where $\varepsilon_t$ is $1$-sub-Gaussian~\citep[Definition~5.2]{lattimore2020bandit}.
In this work, we consider the \emph{KL-regularized objective}
\begin{align}
    J_{\eta}(\pi) \coloneqq\EE_{(s,a) \sim \rho \times \pi} \bigg[r(s,a) - \eta^{-1} \log \frac{\pi(a|s)}{\piref(a|s)}\bigg], \label{eq:kl-objective-bandit}
\end{align}
where $\piref$ is a known reference policy and the ``inverse temperature'' $\eta$ controls the intensity of regularization. For simplicity, we assume that $\piref$ is also the behavior policy that generates the dataset $\cD$, which is similar to the type of ``behavior regularization'' studied in \citet{zhan2022offline}. The unique optimal policy $\pi^*_\eta \coloneqq \argmax_{\pi \in \Delta(\cA | \cS)} J_\eta(\pi)$ is
given by (See, e.g., \citealt[Proposition~7.16]{zhang2023ltbook})\footnote{We suppress $J_\eta$ into $J$ and $\pistar_\eta$ into $\pistar$ when they are clear in context in the following presentation.}
\begin{align}\label{eq:opt-exp}
    \pi^*(\cdot|s) \propto \piref(\cdot|s)\exp\big(\eta \cdot r(s, \cdot)\big), \forall s\in\cS.
\end{align}
A policy $\pi$ is said to be $\epsilon$-optimal if $\subopt_\mathrm{RKL}(\pi)\coloneqq J(\pi^*) - J(\pi) \leq \epsilon$ and the goal of the agent is to find one such policy using $\cD$. Note that $\subopt_\mathrm{RKL}(\cdot)$ is defined through \Cref{eq:kl-objective-bandit} and thus \textbf{depends on $\eta$}.
To ensure that $\epsilon$-optimality is achievable, we assume that $r$ lies in a known function class $\fcl \subset (\cS \times \cA \to [0,1])$, from which the agent obtains an estimator $\hat{r}$. 
More specifically, we work with general function approximation under realizability, which is as follows.
\begin{assumption}\label{assume:general-function-approx}
    For this known function class $\fcl \subset (\cS \times \cA \to [0,1])$, $\exists\fgt \in \fcl$ with $\fgt = r$. 
\end{assumption}
We also employ the standard notion of covering number \citep[Definition~5.1]{wainwright2019high} as the complexity measure of the reward function class
$\fcl$.

\begin{definition}[{$\epsilon$-net and covering number}]
Given a function class $\cG \subset (\cS \times \cA \to \RR)$, a finite set $\cG(\epsilon) \subset \cG$ is an $\epsilon$-net
of $\cG$ w.r.t. $\|\cdot\|_\infty$, if for any $g \in \cG$, there exists $g' \in \cG(\epsilon)$ such that $\| g - g'\|_\infty \leq \epsilon$. The $\epsilon$-covering number is the smallest cardinality $\cN_{\cG}(\epsilon)$ of such $\cG(\epsilon)$.
\end{definition}

\begin{assumption}\label{assume:poly-covering}
For any $\epsilon_c > 0$, the $\epsilon_c$-covering number $\cN_{\fcl}(\epsilon_c)$ of $\fcl$ is $\poly(\epsilon_c^{-1})$.
\end{assumption}
 \Cref{assume:poly-covering}
 allowing $\log \cN_{\cG}(\epsilon)$ to be roughly negligible is arguably mild. For example, when $\fcl$ is the class of linear functions of dimension $d$ and radius $R$, the covering number is $\cN_{\fcl}(\epsilon) = O((1+R\epsilon^{-1})^d)$ \citep[Lemma~D.6]{jin2020provably}, which satisfies \Cref{assume:poly-covering}.

\textbf{Concentrability.} The data quality of $\cD$ collected by $\piref$ is typically characterized by \emph{concentrability} in offline RL \citep{farahmand2010error,chen2019information,jiang2024offline}, which quantifies the ability of the behavioral policy to generate diverse actions.
We first define the density-ratio-based concentrability as follows. 

\begin{definition}[\emph{Density-ratio-based} concentrability]
For policy class $\Pi$, reference policy $\piref$, the density-ratio-based all-policy concentrability $C^{\Pi}$ is $C^{\Pi} \coloneqq \sup_{\pi \in \Pi, s \in \cS, a \in \cA}{\pi(a|s)} / {\piref(a|s)}$, whose single-policy counterpart under the optimal policy $\pistar$ is $C^{\pistar} \coloneqq \sup_{s\in \cS, a \in \cA} {\pistar(a|s)} / {\piref(a|s)}$.
\end{definition}

In the definition above, small all-policy concentrability intuitively corresponds to $\text{supp}(\piref)$ covering all possible inputs. On the other hand, small single-policy concentrability means that $\text{supp}(\piref)$ only subsumes $\text{supp}(\pi^*)$. In this paper, in addition to density-ratio-based concentrability, we also adopt the following $D^2$-based concentrabilites to better capturing the nature of function class $\fcl$. In detail, we start with the $D^2$-divergence as follows.

\begin{definition}\label{def:bandit:D-sq}
Given a function class $\fcl \subset ( \cS \times \cA \to \mathbb{R})$ and a fixed policy $\pi$, define the $D^2$-divergence $D^2_{\fcl}((s,a);\pi)$ as
\begin{align*}
    \sup_{ g,h \in \fcl} \frac{\big( g(s, a) - h(s, a)\big)^2}{\E_{(s',a') \sim \rho \times \pi}[( g(s', a') - h(s', a'))^2]}.
\end{align*}
\end{definition}
The ``eluder dimension''-type \Cref{def:bandit:D-sq} is directly inspired by \citet{di2024pessimistic,zhao2024sharp}, the intuition behind which is that given $(s,a) \in \cS \times \cA$, a small $D^2$-divergence indicates that for two functions $g$ and $h$, if they are close under the behavior policy $\pi$, then they will also be close on such pair $(s,a)$. Therefore, the $D^2$-divergence
quantifies how well the estimation on dataset collected by the behavior policy $\pi$ can be generalized to a specific state-action pair.

\begin{remark}
{For the tabular setting, a direct computation yields $D^2(s,a) = (\rho(s)\pi^{\text{ref}}(a|s))^{-1}$, which can be estimated by the visitation frequency empirically. Under linear function approximation, it is well known that $D^2(s,a) = \|\bphi(s,a)\|^2_{\bSigma^{-1}}$ under mild conditions of the parameter space, where $\bSigma = \mathbb{E}_{\rho\times \pi^{\text{ref}}} \bphi(s,a)\bphi(s,a)^\top$ is the covariance matrix, which can be estimated by empirical covariance matrices in practice, potentially with ridge regularization. For more general function classes like neural networks, the $D^2$ can also be efficiently approximated by heuristics as discussed in~\citet{xiong2024iterative,guptap3o,xu2025learning}.}
\end{remark}

We are now ready to define the two notions of concentrability conditions.

\begin{assumption}[All-policy concentrability]\label{assume:all-coverage-bandit}
    Given a reference policy $\piref$, there exists $D < \infty$ such that $D^2 = \sup_{(s,a) \in \cS \times \cA} D^2_{\fcl}((s,a); \piref) $.
\end{assumption}
\Cref{assume:all-coverage-bandit} indicates that the errors on any state-action pairs can be bounded by the error on the samples from $\rho \times \pi$ up to a factor $D$, whose relaxed counterpart under the same $\piref$ is as follows.
\begin{assumption}[Single-policy concentrability]\label{assume:single-coverage-bandit}
    $D_{\pi^*}^2 \coloneqq \E_{(s,a) \sim \rho \times \pi^*} D^2_{\fcl}((s,a); \piref) <\infty$.
\end{assumption}
\Cref{assume:single-coverage-bandit} indicates that the errors on the distributions of state-action pairs $\rho \times \pi^*$ can be bounded by the error on the samples from  $\rho \times \piref$ up to some constant. For both types, the single-policy concentrability assumption is strictly weaker than the all-policy concentrability assumption. However, in general, the two quantities characterizing single-policy concentrability $C^{\pi^*}$ and $D^2_{\pi^*}$ cannot be bounded by each other up to constant factors.
In particular, we have $D^2_{\pi^*} \leq |\cS||\cA|C^{\pi^*}$, indicating that $C^{\pi^*}$ subsumes $D^2_{\pi^*}$ when $|\cS|$ and $|\cA|$ can be seen as constants. \textcolor{black}{We refer the reader to Appendix~\ref{app:discussion-coverage} for a further discussion on the relation between $C^{\pi^*}$ and $D^2_{\pi^*}$.}

\subsection{Algorithm}

In this subsection, we present an offline bandit algorithm, $\algcb$, for KL-regularized contextual bandits in \Cref{algorithm:bandit-pess}. 
$\algcb$ first leverages least-square estimator to find a function $\fls \in \fcl$ that minimizes its risk on the offline dataset. In~\citet{zhao2024sharp}, such $\fls$ is directly applied to construct the estimated policy. 
In contrast, we construct a pessimistic estimator of $\fgt$ following the well-known pessimism principle in offline RL \citep{jin2021pessimism}.
Specifically, we define the bonus term $\Gamma_n$ through the confidence radius $\beta = \sqrt{128\log\big(2\cN_{\fcl}(\epsilon)/\delta\big)/3n + 18 \epsilon}$ as
\begin{align}
    \Gamma_n(s,a)= \beta  D_{\fcl}\big((s,a),\piref\big), \forall (s,a) \in \cS \times \cA. \label{eq:bonus-bandit}
\end{align}
We then obtain our pessimistic estimation $\fps$ by setting $\fps = \fls - \Gamma_n$, which is less than $\fgt$ with high probability. Formally, let the event $\confbandit(\delta)$ given $\delta>0$ defined as
\begin{align}\label{eq:confbandit}
    \confbandit(\delta) \coloneqq \Big\{ \sup\nolimits_{(s,a) \in \cS \times \cA} \Big[ \big| \fls - \fgt\big| - \Gamma_n \Big] (s,a) \leq 0 \Big\},
\end{align}
on which the least square estimation $\fls$ obtained in Line~\ref{line:bandit-lsq} of
\Cref{algorithm:bandit-pess}
does not deviate too much from the true function $\fgt$ and therefore $\fps$ is a pessimistic estimation of $\fgt$. We have the following lemma indicating that this event holds with high probability.
\begin{lemma}\label{lem:bandit:conf}
For all $\delta > 0$, $\confbandit(\delta)$ holds with probability at least $1-\delta$.
\end{lemma}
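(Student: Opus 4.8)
The plan is to prove \Cref{lem:bandit:conf}, which asserts that the least-squares estimator $\fls$ stays within the bonus-defined confidence region around the true reward $\fgt$ with probability $1-\delta$. I would start from the standard least-squares localization bound: since $\fgt \in \fcl$ by \Cref{assume:general-function-approx} and $\fls$ minimizes the empirical squared loss, a basic optimality/Bernstein argument (together with a union bound over an $\epsilon_c$-net $\fcl(\epsilon_c)$ of $\fcl$ and a discretization step absorbing the $\|\cdot\|_\infty$ approximation error into the $18\epsilon_c$ term) yields a high-probability bound of the form $\sum_{i=1}^n (\fls(s_i,a_i) - \fgt(s_i,a_i))^2 \lesssim \log(\cN_{\fcl}(\epsilon_c)/\delta) + n\epsilon_c$, i.e., the empirical $L^2$ error of $\fls-\fgt$ is controlled. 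The constants $128/3$ and $18$ in the definition of $\beta$ should come out of carrying the Bernstein/sub-Gaussian concentration through carefully; I would not grind through those constants in the sketch.

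Next I would convert this empirical squared-error bound into a population (expected) squared-error bound $\E_{(s,a)\sim\rho\times\piref}[(\fls(s,a)-\fgt(s,a))^2] \lesssim \beta^2$, again via a uniform concentration argument over the net — this is where a one-sided Bernstein bound on $\E[g^2] - \frac1n\sum g(s_i,a_i)^2$ for $g = \fls-\fgt$ is used, exploiting boundedness of the functions in $[0,1]$. Having $\E_{\rho\times\piref}[(\fls-\fgt)^2] \le \beta^2$ (up to the discretization slack already folded into $\beta$), I then invoke the definition of the $D^2$-divergence (\Cref{def:bandit:D-sq}): for every fixed $(s,a)$,
\begin{align*}
    \big(\fls(s,a) - \fgt(s,a)\big)^2 \le D^2_{\fcl}\big((s,a);\piref\big)\cdot \E_{(s',a')\sim\rho\times\piref}\big[(\fls(s',a') - \fgt(s',a'))^2\big] \le \beta^2 D^2_{\fcl}\big((s,a);\piref\big),
\end{align*}
since both $\fls$ and $\fgt$ lie in $\fcl$. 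Taking square roots gives $|\fls(s,a)-\fgt(s,a)| \le \beta D_{\fcl}((s,a);\piref) = \Gamma_n(s,a)$ for all $(s,a)$, which is exactly the event $\confbandit(\delta)$.

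The main obstacle is the chaining/discretization bookkeeping: one must pass from the finite net $\fcl(\epsilon_c)$ to all of $\fcl$ in $\|\cdot\|_\infty$, track how the $\epsilon_c$-approximation perturbs both the empirical loss and the population $L^2$ quantity, and verify that all these perturbations are genuinely absorbed by the $18\epsilon_c$ additive term inside $\beta$ — while simultaneously keeping the sub-Gaussian noise term $\varepsilon_i$ under control in the cross term $\sum_i \varepsilon_i(\fls(s_i,a_i)-\fgt(s_i,a_i))$, which is the step that forces the self-normalized/Bernstein-type (rather than naive Hoeffding) concentration and hence the specific constant $128/3$. A secondary subtlety is that $\fls$ is data-dependent, so the concentration must hold uniformly over the net before plugging in $\fls$; this is routine but is the place where the $\log(2\cN_{\fcl}(\epsilon_c)/\delta)$ factor (with the factor of $2$ from a two-sided bound or from splitting into the empirical-to-population and noise terms) enters. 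None of these steps is conceptually deep, but getting the constants exactly as stated is the fiddly part, so I would present the argument modularly and defer the constant-chasing to the appendix as the excerpt already does.
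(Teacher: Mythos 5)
Your proposal is correct and follows essentially the same route as the paper: the paper also first bounds the empirical least-squares error $\sum_i(\fls-\fgt)^2(s_i,a_i)$, then transfers it to the population error $\E_{\rho\times\piref}[(\fls-\fgt)^2]\le\beta^2$ via a Bernstein-type uniform bound over an $\epsilon_c$-net (both steps quoted as \citealt[Lemmas~C.1--C.2]{zhao2024sharp}), and finally extends pointwise through the definition of $D^2_{\fcl}((s,a);\piref)$ exactly as you do. The only difference is that you sketch re-deriving the two concentration lemmas while the paper simply cites them, so the constant-chasing you defer is likewise deferred there.
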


After obtaining the pessimistic estimation, $\algcb$ output the policy $\hat{\pi}$, which maximizes the estimated objective
\begin{align*}
    \hat{J}(\pi) =\EE_{(s,a) \sim \rho \times \pi} \bigg[\fps(s,a) - \eta^{-1} \log \frac{\pi(a|s)}{\piref(a|s)}\bigg],
\end{align*}
the maximizer of which is the counterpart of \Cref{eq:opt-exp}, i.e.,
\begin{align*}
    \hat \pi(a|s) \propto \piref(a|s) \exp \big(\eta \cdot \fps(s,a)\big).
\end{align*}

\begin{algorithm*}[t]
	\caption{Offline KL-Regularized Pessimistic Contextual Bandits (\algcb)}\label{algorithm:bandit-pess}
	\begin{algorithmic}[1]
    \REQUIRE regularization $\eta$, reference policy $\piref$, offline dataset $\cD$, function class $\fcl$

\STATE Least square estimation of reward function $\fls \in \argmin_{g \in \fcl} \sum_{(s_i,a_i, r_i) \in \cD} \big(g(s_i, a_i) - r_i\big)^2$ \label{line:bandit-lsq}
    \STATE Let $\fps \leftarrow \fls - \Gamma_n$, where $\Gamma_n$ is the bonus term in \Cref{eq:bonus-bandit} \label{line:bandit:pess}
    \ENSURE $\hat \pi(a|s) \propto \piref(a|s) \exp \big(\eta \cdot \fps(s,a)\big)$
\end{algorithmic}
\end{algorithm*}

\subsection{Theoretical Results}

The sample complexity for KL-regularized contextual bandits is settled in this subsection. We first give the upper bound of $\algcb$.

\begin{theorem}\label{thm:bandit}
Under \Cref{assume:single-coverage-bandit}, for sufficiently small $\epsilon \in (0, 1)$, if we set $\Gamma_n$ as in~\eqref{eq:bonus-bandit}, then $n = \tilde{O}\big(\eta D^2_{\pi^*} \epsilon^{-1} \log \cN_{\fcl}(\epsilon)\big)$ suffices to guarantee the output policy $\hat \pi$ of \Cref{algorithm:bandit-pess} to be $\epsilon$-optimal with probability at least $1 - \delta$.
\end{theorem}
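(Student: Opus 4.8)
The plan is to avoid the performance-difference/simulation-lemma route, which only yields a first-moment bound $J(\pi^*)-J(\hat\pi)\lesssim\EE_{\rho\times\pi^*}[\,|\fgt-\fps|\,]$ and hence an $\epsilon^{-2}$ rate, and instead to show that the suboptimality is controlled by a \emph{second} moment, $J(\pi^*)-J(\hat\pi)\lesssim\eta\,\EE_{\rho\times\pi^*}[(\fgt-\fps)^2]$, after which single-policy concentrability finishes the job.

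\textbf{Step 1 (good event and pessimism).} First I would condition on the standard least-squares event, on which $\EE_{(s,a)\sim\rho\times\piref}[(\fls-\fgt)^2]\le\beta^2$ with probability $\ge1-\delta$ for the prescribed $\beta$. Combined with \Cref{def:bandit:D-sq}, this upgrades to the pointwise bound $|\fls(s,a)-\fgt(s,a)|\le\beta D_{\fcl}((s,a);\piref)=\Gamma_n(s,a)$ for every $(s,a)$, so $\fps=\fls-\Gamma_n$ is a genuine underestimate of $\fgt$ and $0\le\Delta(s,a):=\fgt(s,a)-\fps(s,a)\le2\Gamma_n(s,a)$. Averaging this under $\pi^*$ and invoking \Cref{assume:single-coverage-bandit} gives $\EE_{\rho\times\pi^*}[\Delta^2]\le4\beta^2\,\EE_{\rho\times\pi^*}[D^2_{\fcl}((s,a);\piref)]=4\beta^2 D^2_{\pi^*}$. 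All of the following is on this event.

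\textbf{Step 2 (decomposition and a mean-value representation).} Since $\hat\pi$ maximizes $\hat J$ and the KL terms cancel when the same policy is plugged into $J$ and $\hat J$, I get $J(\pi^*)-J(\hat\pi)\le\EE_{\rho\times\pi^*}[\Delta]-\EE_{\rho\times\hat\pi}[\Delta]$. Fix a context $s$, let $p=\pi^*(\cdot|s)$, and consider the exponential tilt $p_t(a)\propto p(a)e^{-t\Delta(s,a)}$. Because $\hat\pi(\cdot|s)\propto\piref(\cdot|s)e^{\eta\fps(s,\cdot)}$ and $\pi^*(\cdot|s)\propto\piref(\cdot|s)e^{\eta\fgt(s,\cdot)}$, this path runs from $\pi^*(\cdot|s)$ at $t=0$ to $\hat\pi(\cdot|s)$ at $t=\eta$, and differentiating the log-moment-generating function $\phi_s(t)=\log\EE_{a\sim p}[e^{-t\Delta(s,a)}]$ twice yields the exact identity
\begin{align*}
\EE_{a\sim\pi^*(\cdot|s)}[\Delta(s,a)]-\EE_{a\sim\hat\pi(\cdot|s)}[\Delta(s,a)]=\int_0^{\eta}\mathrm{Var}_{a\sim p_t}\big[\Delta(s,a)\big]\,\mathrm{d}t.
\end{align*}
This is where the strong convexity of the negative entropy (equivalently the curvature of the KL regularizer) is exploited: a first moment of the estimation error has been re-expressed as an integral of variances along the Gibbs path, and ``refining the mean-value argument to its extreme'' amounts to keeping this representation exact rather than bounding it crudely.

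\textbf{Step 3 (the moment bound via a good/bad split).} The crux is to show the right-hand side above is $\le2\eta\,\EE_{a\sim\pi^*(\cdot|s)}[\Delta(s,a)^2]$ for every $s$, \emph{without} any $\|\Delta\|_\infty$-type control, which would reintroduce all-policy concentrability. Since $\Delta\ge0$ and $t\mapsto e^{-t\Delta}$ is nonincreasing, one has $p_t(a)\le p(a)/W(s)$ on $[0,\eta]$ with $W(s)=\EE_{a\sim\pi^*(\cdot|s)}[e^{-\eta\Delta(s,a)}]$, hence $\mathrm{Var}_{p_t}[\Delta]\le\EE_{p_t}[\Delta^2]\le\EE_{\pi^*(\cdot|s)}[\Delta^2]/W(s)$; the obstacle is that $W(s)$ need not be bounded below. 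I would handle it by calling $s$ \emph{good} if $\eta\,\EE_{\pi^*(\cdot|s)}[\Delta(s,\cdot)]\le1/2$, so that $W(s)\ge1/2$ and the integral is $\le2\eta\,\EE_{\pi^*(\cdot|s)}[\Delta^2]$; on a \emph{bad} $s$ I instead bound the integral by $\EE_{\pi^*(\cdot|s)}[\Delta]-\EE_{\hat\pi(\cdot|s)}[\Delta]\le\EE_{\pi^*(\cdot|s)}[\Delta]$ (using pessimism, $\Delta\ge0$) and then convert it, via the defining inequality of ``bad'' together with $(\EE[\Delta])^2\le\EE[\Delta^2]$, into $\EE_{\pi^*(\cdot|s)}[\Delta]<2\eta(\EE_{\pi^*(\cdot|s)}[\Delta])^2\le2\eta\,\EE_{\pi^*(\cdot|s)}[\Delta^2]$. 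Taking $\EE_{s\sim\rho}$ of the per-context bound and chaining with Step 1,
\begin{align*}
J(\pi^*)-J(\hat\pi)\le2\eta\,\EE_{\rho\times\pi^*}[\Delta^2]\le8\eta\beta^2 D^2_{\pi^*}.
\end{align*}
Substituting $\beta^2=\tilde O\big(\log(\cN_{\fcl}(\epsilon_c)/\delta)/n+\epsilon_c\big)$ and choosing $\epsilon_c=\Theta(\epsilon/(\eta D^2_{\pi^*}))$ so the discretization term costs only $O(\epsilon)$, the bound is $\le\epsilon$ once $n=\tilde O\big(\eta D^2_{\pi^*}\epsilon^{-1}\log\cN_{\fcl}(\epsilon)\big)$. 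I expect the good/bad decomposition in Step 3 to be the main obstacle: it is exactly the ``moment-based'' device that tames the $1/W(s)$ blow-up while only ever invoking the single expectation $\EE_{\rho\times\pi^*}[D^2_{\fcl}(\cdot\,;\piref)]$ rather than a uniform-over-$\fcl$ quantity.
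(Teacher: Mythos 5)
Your proposal is correct, but the core of your argument is genuinely different from the paper's. You share the same outer skeleton: the concentration event $\EE_{\rho\times\piref}[(\fls-\fgt)^2]\le\beta^2$ upgraded to the pointwise bound $|\fls-\fgt|\le\Gamma_n$ via \Cref{def:bandit:D-sq}, pessimism giving $0\le\Delta=\fgt-\fps\le2\Gamma_n$, and the final pass to $\EE_{\rho\times\pi^*}[\Delta^2]\le4\beta^2 D^2_{\pi^*}$ under \Cref{assume:single-coverage-bandit}. Where you diverge is in how the suboptimality is reduced to that second moment. The paper never interposes $\hat J$: it applies a mean-value (second-order Taylor) lemma along the interpolation $g_\gamma=\gamma\fps+(1-\gamma)\fgt$ to get $J(\pi^*)-J(\hat\pi)\le\eta\,\EE_{\rho\times\pi_{\bar\gamma}}[\Delta^2]$ at an unknown $\bar\gamma$, and then shows $G(\gamma)=\EE_{\rho\times\pi_\gamma}[\Delta^2]$ is maximized at $\gamma=0$ by computing the policy gradient $G'(\gamma)=\eta\,\EE_\rho\big[\EE_{\pi_\gamma}[\triangle^3]-\EE_{\pi_\gamma}[\triangle^2]\EE_{\pi_\gamma}[\triangle]\big]\le0$, using the elementary covariance fact that $X\le0$ implies $\EE[X^3]\le\EE[X^2]\EE[X]$ (this is its ``moment-based technique''). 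You instead use optimality of $\hat\pi$ for $\hat J$ to reduce to $\EE_{\rho\times\pi^*}[\Delta]-\EE_{\rho\times\hat\pi}[\Delta]$, write this exactly as $\int_0^\eta\Var_{p_t}[\Delta]\,\ud t$ along the exponential tilt $p_t\propto\pi^*e^{-t\Delta}$ (which is the same Gibbs path as the paper's $\pi_\gamma$, reparametrized $t=\gamma\eta$), and then control the variance by $\EE_{\pi^*(\cdot|s)}[\Delta^2]$ through the good/bad split on whether $\eta\,\EE_{\pi^*(\cdot|s)}[\Delta]\le1/2$, with Jensen lower-bounding the normalizer in the good case and $(\EE[\Delta])^2\le\EE[\Delta^2]$ plus $\Delta\ge0$ in the bad case. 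Both routes exploit exactly the same two ingredients (curvature of the KL-regularized objective and one-sidedness of the pessimistic error), and both land on $J(\pi^*)-J(\hat\pi)\lesssim\eta\,\EE_{\rho\times\pi^*}[\Delta^2]$; what the paper's monotonicity argument buys is a case-free statement ($0\in\argmax_\gamma G(\gamma)$) with constant $4$ rather than your $8$, while your exact log-moment-generating-function identity avoids importing the mean-value lemma from \citet{zhao2024sharp} and the third-moment covariance lemma altogether, at the price of the dichotomy on contexts. The remaining steps (choice of $\epsilon_c$, union of events) are routine in both versions, so your proof is a valid alternative derivation of \Cref{thm:bandit}.
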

Previously, \citet{zhao2024sharp} achieved an $\tilde{O}(\epsilon^{-1})$ sample complexity under \Cref{assume:all-coverage-bandit}. As a comparison, $\algcb$ achieves the same $\tilde{O}(\epsilon^{-1})$ sample complexity but only requiring \Cref{assume:single-coverage-bandit}, which is weaker than \Cref{assume:all-coverage-bandit}.
We also provide the sample complexity lower bound of KL-regularized contextual bandits in the following theorem, {which, together with~\Cref{thm:bandit}, demonstrates that single-policy concentrability is both necessary and sufficient for near-optimal offline learning evaluated by KL-regularized objectives.}
\begin{theorem}\label{thm:lowerbound-bandit}
For $\forall S \geq 1$, $\eta > 4\log 2$, $C^* \in (2, \exp(\eta/4)]$, and any algorithm \textsf{Alg}, there is a KL-regularized contextual bandit
with $C^{\pi^*} \leq C^*$ such that \textsf{Alg} requires
$\Omega\big( \min\{\eta\epsilon^{-1}, \epsilon^{-2}\} C^* \log \cN_{\fcl}(\epsilon)\big)$ samples to find an $\epsilon$-optimal policy for sufficiently small $\epsilon$.
\end{theorem}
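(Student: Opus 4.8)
The plan is to construct a hard family of KL-regularized contextual bandit instances on which no algorithm can reliably identify an $\epsilon$-optimal policy without roughly $\Omega(\min\{\eta\epsilon^{-1},\epsilon^{-2}\}\,C^*\log\cN_{\fcl}(\epsilon))$ samples, and then invoke a standard information-theoretic (Le Cam / Assouad / Fano) argument. First I would reduce to a single-context instance with a $2$-action space $\cA=\{a_0,a_1\}$ and a reference/behavior policy that is deliberately imbalanced: $\piref(a_1)=1/C^*$ and $\piref(a_0)=1-1/C^*$, so that the optimal policy (which, by \eqref{eq:opt-exp}, reweights toward the higher-reward action) has density ratio exactly $C^{\pi^*}\le C^*$. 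To get the $\log\cN_{\fcl}(\epsilon)$ factor I would take a product of $m\asymp\log\cN_{\fcl}(\epsilon)$ such independent ``coordinate'' contexts (chosen so the covering number of the resulting reward class is $\exp(\Theta(m))$), turning the problem into an Assouad-style hypercube where each coordinate hides a binary decision about which action is slightly better. This is the usual device for injecting a metric-entropy term into a bandit lower bound, and I would cite the analogous construction in \citet[Theorem~3.6]{zhao2024sharp} as a template.

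The crux is the per-coordinate gap calibration, and this is where the strong-convexity-versus-mere-convexity distinction of KL surfaces. On each coordinate I would let the two rewards differ by a small parameter $\Delta$ on action $a_1$; the two hypotheses ($r(a_1)=\tfrac12\pm\tfrac\Delta2$) are statistically distinguishable only after $\Omega(1/(\piref(a_1)\Delta^2)) = \Omega(C^*/\Delta^2)$ pulls, because only the $\asymp n/C^*$ samples landing on $a_1$ carry information. So I need to relate $\Delta$ to the suboptimality $\epsilon$ the regularized objective assigns to guessing the wrong action. Using the closed form $J(\pi^*)-J(\hat\pi)=\eta^{-1}\KL(\hat\pi\|\pi^*)$ together with the explicit exponential-tilting form of $\pi^*$, a mis-identification on one coordinate costs roughly $\eta^{-1}\cdot\min\{\eta^2\Delta^2,\,1\}\cdot\piref(a_1)$-type quantity; more precisely, for small $\eta\Delta$ the per-coordinate regret scales like $\eta\Delta^2\cdot(\text{something }O(1))$, while for $\eta\Delta=\Omega(1)$ it saturates at $\Theta(\Delta)$ (the unregularized regime). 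Summing $m$ coordinates and setting the total equal to $\epsilon$ gives two regimes: in the strongly-regularized regime I would set $\eta\Delta^2 m\asymp\epsilon$, i.e. $\Delta^2\asymp\epsilon/(\eta m)$, yielding sample complexity $\Omega(C^* m/\Delta^2)=\Omega(\eta\epsilon^{-1}C^* m)$; in the weakly-regularized regime I set $\Delta m\asymp\epsilon$ and $\eta\Delta\gtrsim 1$, yielding $\Omega(C^* m/\Delta^2)=\Omega(\epsilon^{-2}C^* m)$. Taking the minimum of the two matching the hypothesis $C^*\le\exp(\eta/4)$ (which is exactly the condition ensuring the first regime is the binding one for the stated range of $C^*$), and recalling $m\asymp\log\cN_{\fcl}(\epsilon)$, produces the claimed bound.

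The information-theoretic step itself is routine once the calibration is in place: I would use Assouad's lemma, noting that flipping one coordinate changes the per-sample KL between the two data distributions by $O(\piref(a_1)\Delta^2)=O(C^{*-1}\Delta^2)$, so $n$ samples give total information $O(n C^{*-1}\Delta^2)$ across that coordinate; for this to be $o(m)$ — forcing a constant fraction of coordinates to be guessed at chance — one needs $n=o(C^* m/\Delta^2)$, which is precisely the negation of the target. A couple of technical points need care: verifying that the constructed $\pi^*$ genuinely has $C^{\pi^*}\le C^*$ (the tilting by $\exp(\eta r)$ with $r\in[0,1]$ inflates the ratio by at most $e^\eta$, so I must keep $\Delta$ and the baseline reward chosen so the bound is not violated — this is where $C^*\le\exp(\eta/4)$ and $\eta>4\log2$ are used), and checking that the covering-number lower bound $\cN_{\fcl}(\epsilon)\ge\exp(\Omega(m))$ holds for the product class at the relevant scale, which follows from a volume/packing argument on the hypercube of perturbations. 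The main obstacle I anticipate is getting the per-coordinate regret asymptotics of $\eta^{-1}\KL(\hat\pi\|\pi^*)$ sharp in \emph{both} the $\eta\Delta\to0$ and $\eta\Delta\to\infty$ limits simultaneously, and stitching them at the crossover $\eta\Delta\asymp1$ so that the final $\min\{\eta\epsilon^{-1},\epsilon^{-2}\}$ emerges cleanly rather than with a lossy $\max$ or an extra $\eta$ factor; this is the delicate quantitative heart of the argument, whereas the Assouad bookkeeping around it is standard.
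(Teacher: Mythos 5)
Your proposal follows essentially the same route as the paper's proof: an Assouad hypercube over $S\asymp\log\cN_{\fcl}(\epsilon)$ contexts with two actions, an imbalanced reference placing mass $1/C^*$ on the informative action (so the per-sample KL between adjacent hypotheses is $O(\delta^2/(SC^*))$), a baseline reward offset calibrated (in the paper, $\alpha=\eta^{-1}\log(C^*-1)$) so that $C^{\pi^*}\le C^*$, the exact identity $\subopt_s=\eta^{-1}\KL(\hat\pi\|\pi^*_\tau)$, and a two-regime per-coordinate gap ($\eta\delta^2$ vs.\ $\delta$) that produces the $\min\{\eta\epsilon^{-1},\epsilon^{-2}\}$. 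The one step you flag but leave open—the uniform-in-$\hat\pi$ lower bound on $\subopt_s(\hat\pi;\tau)+\subopt_s(\hat\pi;\tau')$—is handled in the paper by comparing $\hat\pi$ to the normalized geometric mean of $\pi^*_\tau$ and $\pi^*_{\tau'}$ and using non-negativity of KL, which yields exactly the two regimes you anticipate.
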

Previously, \citet{zhao2024sharp} provided a sample complexity lower bound of $\Omega(\eta \log \cN_{\fcl}(\epsilon)/{\epsilon})$ under KL regularization. \citet{foster2025good} also provided a lower bound of $\Omega(C^{\pi^*})$ for KL-regularized objective to show the necessity of coverage. Compared to their results, our result shows that the \textit{multiplicative} dependency on $C^{\pi^*}$ is necessary for the first time.
\begin{remark}
\Cref{thm:lowerbound-bandit} shows that when $\epsilon$ is sufficiently small, any algorithm for offline KL-regularized contextual bandits requires at least $\Omega(\eta C^{\pi^*}) \epsilon^{-1} \log \cN_{\fcl}(\epsilon))$ samples to output an $\epsilon$-optimal policy.
The presence of $\exp(\poly(\eta))$ in the range of $C^*$ is inevitable, since we always have $C^{\pi^*} \leq \exp(\eta)$ in reverse KL regularized bandits with bounded rewards. 
\end{remark}

\begin{remark}
 As discussed before, we might have some easy instances with $D^2_{\pi^*} \leq C^{\pi^*}$, where $\algcb$ outperforms the lower bound. This does not volates Theorem~\ref{thm:lowerbound-bandit} since Theorem~\ref{thm:lowerbound-bandit} only guarantees that \textit{there exist} some hard instances that all algorithms require at least $\Omega( \min\{\eta\epsilon^{-1}, \epsilon^{-2}\} C^* \log \mathcal{N}_{G}(\epsilon))$ samples. 
\end{remark}



\subsection{Proof Overview of \Cref{thm:bandit}}

In this section, we summarize the novel techniques in the proof of \Cref{thm:bandit}, which is deferred to Appendix~\ref{app:proof-kl-upper}. At a high level, if we consider the regularized objective \Cref{eq:obj-teaser} multi-arm bandits, then $P\mapsto \kl{P}{Q}$ is $1$-strongly convex w.r.t. $\tv{\cdot}{\cdot}$ \citep[Exercise~I.37]{polyanskiy2025information}, and thus $J(\pi)$ is strongly concave. Therefore, $J(\pi^*) - J(\hat{\pi})$ is possible to be of the order $[\tv{\pi^*}{\hat{\pi}}]^2 \approx \tilde{O}(n^{-1})$, pretending that $\pi^*$ is the unconstrained maximizer. In detail, we follow the regret decomposition in~\citet{zhao2024sharp}, which is encompassed by the following lemma.

\begin{lemma}\label{lem:taylor-expansion}
Let $g:\cS \times \cA \to \RR$ be any reward function, then there exist some $\gamma \in [0,1]$ such that the sub-optimality gap \textcolor{black}{of $\pi_g(\cdot|s) \propto \piref(\cdot|s)\exp\big( \eta  g(s, \cdot)\big)$} can be bounded as
\begin{align*}
    J(\pi^*) - J(\pi_g) \leq \eta \EE_{(s,a) \sim \rho\times \pi_{\gamma}} \big[\big(\fgt - g\big)^2(s,a) \big],
\end{align*}
where \textcolor{black}{$g_{\gamma} \coloneqq \gamma g + (1-\gamma) g^*$ and $\pi_{\gamma}(\cdot|s) \propto \piref(\cdot|s)\exp\big( \eta  g_\gamma(s, \cdot)\big)$}.
\end{lemma}

In \citet{zhao2024sharp}, because the $g$ in \Cref{lem:taylor-expansion} is substituted with only the least-square estimator $\bar{g}$ with no extra structures, the reliance on the ``mid-point'' policy $\pi_{\gamma}$ can only be controlled all-policy concentrability. However, our $g$ is the pessimistic estimator $\hat{g}$ of $\fgt$ in \Cref{algorithm:bandit-pess}, and thus the presence of $\pi_\gamma$ can be eliminated for free: let $G(\gamma) \coloneqq \EE_{\rho \times \pi_{\gamma}}\Big[ \big(\fps - {\fgt}\big)^2(s,a) \Big]$ and $\triangle(s,a) \coloneqq \big(\fps - {\fgt})(s,a) \leq 0$, then a direct computation (detailed in the proof of \Cref{lem:expansion-to-endpoint}) yields
\begin{align}
    G'(\gamma) = \eta \EE_{\rho}\Big[ \EE_{\pi_{\gamma}} \big[ \triangle^3(s,a) \big] - \EE_{\pi_{\gamma}}\big[ \triangle^2(s,a) \big]\EE_{\pi_{\gamma}}\big[ \triangle(s,a) \big] \Big]  \leq 0. \label{eq:key-machine}
\end{align}
This gives $J(\pi^*) - J(\hat{\pi}) \leq \eta \EE_{\rho \times \pi^*}\big[ (\fps - {\fgt})^2(s,a) \big]$, which can be bounded with single-policy concentrability while still achieves the sharp dependency $\epsilon^{-1}$ on $\epsilon$. 
Here,
\eqref{eq:key-machine}
holds due to a moment-based machinery in \Cref{lem:non-positive-cov}.
\begin{lemma}\label{lem:non-positive-cov}
    If $\PP(X \leq 0) = 1$
    and $\EE|X|^3 < \infty$, then $\EE[X^3] - \EE[X^2]\EE[X] \leq 0$.
\end{lemma}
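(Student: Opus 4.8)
The plan is to recognize the quantity $\EE[X^3]-\EE[X^2]\EE[X]$ as the covariance $\mathrm{Cov}(X^2, X)$ and to show it is nonpositive by a standard decoupling (independent-copy) argument, exploiting that on the half-line $(-\infty,0]$ the maps $x\mapsto x^2$ and $x\mapsto x$ are monotone in opposite directions. Concretely, let $X'$ be an independent copy of $X$, so that $X'\le 0$ almost surely as well. First I would establish the pointwise inequality
\begin{align*}
    \bigl(X^2-(X')^2\bigr)\bigl(X-X'\bigr)\le 0 \quad\text{almost surely:}
\end{align*}
indeed, if $X\le X'\le 0$ then $X^2\ge (X')^2$, so the first factor is $\ge 0$ while the second is $\le 0$; the case $X'\le X\le 0$ is symmetric; hence the product is always $\le 0$. (The boundedness of $X$ is only used to ensure all moments below are finite so the expectations are well defined.)

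Next I would take expectations of this inequality and expand using independence of $X$ and $X'$ together with the fact that they are identically distributed:
\begin{align*}
    0 \ge \EE\bigl[(X^2-(X')^2)(X-X')\bigr]
      = 2\EE[X^3] - 2\,\EE[X^2]\,\EE[X],
\end{align*}
where the cross terms $\EE[X^2 X']=\EE[X^2]\EE[X]$ and $\EE[(X')^2 X]=\EE[X^2]\EE[X]$ by independence and equal distribution. Dividing by $2$ gives $\EE[X^3]-\EE[X^2]\EE[X]\le 0$, as claimed. There is essentially no obstacle here: the only point requiring care is the sign analysis in the pointwise step, which hinges on $X\le 0$; an alternative one-line route is to substitute $Y=-X\ge 0$ and invoke Chebyshev's sum (association) inequality for the two nondecreasing functions $y\mapsto y^2$ and $y\mapsto y$ to get $\EE[Y^3]\ge\EE[Y^2]\EE[Y]$, then negate. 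I would present the decoupling version since it is self-contained.
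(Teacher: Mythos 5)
Your proof is correct, and it takes a genuinely different route from the paper's. The paper substitutes $Y=-X$ and then applies Jensen's inequality twice: first $\EE[Y^3]=\EE[(Y^2)^{3/2}]\geq(\EE[Y^2])^{3/2}$, and then $\sqrt{\EE[Y^2]}\geq\EE[Y]$, so that $\mathrm{Cov}(Y,Y^2)\geq\EE[Y^2]\bigl(\sqrt{\EE[Y^2]}-\EE[Y]\bigr)\geq0$, which negates to the claim. You instead prove the covariance sign by the classical decoupling argument: with an independent copy $X'$, the pointwise inequality $\bigl(X^2-(X')^2\bigr)(X-X')\leq0$ (valid because on $(-\infty,0]$ the maps $x\mapsto x$ and $x\mapsto x^2$ are monotone in opposite directions) expands to $2\EE[X^3]-2\EE[X^2]\EE[X]\leq0$; your sign analysis and the cross-term computation are both right, and your remark that boundedness is only needed for finiteness of the moments is accurate. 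The two arguments buy slightly different things: the paper's Jensen route yields the stronger intermediate bound $\EE[Y^3]\geq(\EE[Y^2])^{3/2}$, while your decoupling (equivalently the Chebyshev association inequality you mention as an alternative) is more modular, generalizing immediately to any pair of functions of $X$ that are monotone in the same or opposite senses, and it never needs the closed-form moment comparison. Either proof serves the paper's purpose equally well.
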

The intuition behind \Cref{lem:non-positive-cov} is natural: $X$ and $X^2$ cannot be positively correlated. Moreover, to the best of our knowledge, we are the first to unveil this moment-based structure
in our
non-standard pessimism-based
analysis,
from which the sharp upper bound
follows. While pessimism is widely adopted to derive near-optimal statistical rates under
single-policy concentrability in offline RL with reward maximization as the goal (See, e.g., \citet{jin2021pessimism,xiong2022nearly}),
the standard pessimism-based pipeline is not sharp enough for bounding the $\subopt_\mathrm{RKL}(\hat{\pi})$ \emph{defined through regularized objectives}, the
reason
of which is detailed in the last paragraph of \Cref{sec:attempts}. 


\section{$f$-divergence-regularized Contextual Bandits}\label{sec:cb-f}

As discussed in \Cref{sec:cb}, the fast rate implied by \Cref{thm:bandit,thm:lowerbound-bandit} is primarily achieved due to the strong convexity of $\pi \mapsto \KL(\pi \| \piref)$. However, KL is just an instance of $f$-divergence with $f(x) = x\log x$, which is only locally strongly convex but not strongly convex. Motivated by this observation, we further examine
$f$-divergence regularization with strongly convex $f$, which may introduce a more favorable curvature in the performance metric of offline learning in principle.

\subsection{Problem Setup}

We study a contextual bandit setting similar to that in \Cref{sec:cb:setup}. In this section, we consider the following $f$-divergence regularized objective
\begin{align}
    J_{\eta, D_f}(\pi) \coloneqq\EE_{(s,a) \sim \rho \times \pi}[r(s,a)] - \eta^{-1}\EE_{s \sim \rho}\big[D_{f}\big(\pi(\cdot|s) \| \piref(\cdot|s)\big)\big], \label{eq:f-objective-bandit}
\end{align}
where $\eta$ is the
 regularization intensity and $D_{f}(p \| q) \coloneqq \EE_{a \sim q}\Big[f\big({p(a)}/{q(a)}\big)\Big]$ is the $f$-divergence.
Let the optimal policy be $\pistar_{\eta, D_f} \coloneqq \argmax_{\pi \in \Delta(\cA | \cS)} J_{\eta, D_f}(\pi)$ and we re-define the learning objective as searching for a policy $\pi$ with $\suboptfdiv(\pi) \coloneqq J(\pistar) - J(\pi) \leq \epsilon$.\footnote{We again suppress $J_{\eta, D_f}(\cdot)$ into $J(\cdot)$ and $\pistar_{\eta, D_f}$ into $\pistar$ when there is no confusion.}
We consider those functions $f :(0,+\infty) \to \RR$ with a nice positive curvature condition in \Cref{assume:f-strong-convex}.
\begin{assumption}\label{assume:f-strong-convex}
 $f$ is $\alpha$-strongly convex, twice continuously differentiable, and $f(1) = 0$.
\end{assumption}
Many elementary functions like quadratic polynomials naturally satisfy \Cref{assume:f-strong-convex}.
For instance, the 1-strongly convex $f(x) = (x-1)^2/2$ induces $D_f(P\|Q) = \chi^{2}(P\|Q)$, which is the $\chi^2$-divergence recently considered in RL literature (see e.g., \citet{zhan2022offline,huang2024correcting,amortila2024scalable}). This regularization exhibits a promising theoretical potential for relaxing the data coverage requirement for efficient offline policy learning~\citep{huang2024correcting} and to be effective in preventing reward hacking~\citep{laidlawcorrelated} against unregularized objectives. These favorable benefits are primary due to the observation that strongly convex $f$'s impose a stronger penalization on actions out of the coverage of $\piref$.


\subsection{Algorithm and Main Results}

\begin{algorithm*}[t]
	\caption{Offline $f$-divergence Regularized Contextual Bandits (\algfcb)}\label{algorithm:bandit-fdiv}
	\begin{algorithmic}[1]
    \REQUIRE regularization $\eta$, reference policy $\piref$, function class $\fcl$, offline dataset $\cD$

\STATE Least square estimation $\fls \in \argmin_{g \in \fcl} \sum_{(s_i,a_i, r_i) \in \cD} \big(g(s_i, a_i) - r_i\big)^2$
\STATE Compute the optimal policy under the least-square reward estimator $\fls$ for $s \in \cS$ as

$$\hat \pi(\cdot | s) \leftarrow \argmax_{\pi(\cdot | s) \in \Delta(\cA)} \dotp{\pi(\cdot|s)}{\fls(s, \cdot)} + \eta^{-1} D_f\big(\pi (\cdot|s) \| \piref(\cdot|s)\big) $$
    \ENSURE $\hat \pi$
\end{algorithmic}
\end{algorithm*}

In this subsection, we present an offline learning algorithm for $f$-divergence regularized bandit, $\algfcb$, in \Cref{algorithm:bandit-fdiv}.
\Cref{algorithm:bandit-fdiv} first leverages least-square estimator to find a function $\fls \in \fcl$ that minimizes its risk on the offline dataset. The algorithm then uses the least squares estimation $\fls$ to construct the output policy $\hat{\pi}$. Compared to \Cref{algorithm:bandit-pess}, $\algfcb$ does not require any procedure to construct pessimistic reward estimation, whose sample complexity upper bound is given as follows.

\begin{theorem}\label{thm:f-upper-bound}
Under \Cref{assume:f-strong-convex}, for sufficiently small $\epsilon \in (0, 1)$, with probability at least $1 - \delta$, $n = \tilde{O}(\alpha^{-1} \eta \epsilon^{-1} \log \cN_{\fcl}(\epsilon))$ is sufficient to guarantee the output policy $\hat \pi$ of $\algfcb$ to be $\epsilon$-optimal.
\end{theorem}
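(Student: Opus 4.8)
The plan is to bound $J(\pi^*) - J(\hat\pi)$ on a high-probability event on which the least-squares reward estimate $\fls$ is accurate \emph{in the $L^2(\rho\times\piref)$ sense}, and then to convert this on-behavior-policy accuracy into a bound on the regularized suboptimality through a decomposition that reads off the strong convexity of $D_f(\cdot\|\piref)$ in the geometry naturally attached to it. First I would invoke the standard least-squares generalization argument already behind \Cref{lem:bandit:conf} (discretization over an $\epsilon_c$-net plus a Bernstein-type bound): since $r=\fgt\in\fcl$ and the noise is $1$-sub-Gaussian, with probability at least $1-\delta$ one has $\EE_{(s,a)\sim\rho\times\piref}[(\fls-r)^2(s,a)] \le \tilde{O}\!\big(\log(\cN_{\fcl}(\epsilon_c)/\delta)/n + \epsilon_c\big)$, and $\epsilon_c$ (say $\asymp 1/n$) is fixed at the end so this reads $\tilde{O}(\log(\cN_{\fcl}(\epsilon)/\delta)/n)$.

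Next, writing $\hat J(\pi) \coloneqq \EE_{\rho\times\pi}[\fls] - \eta^{-1}\EE_{s\sim\rho}[D_f(\pi\|\piref)]$ for the estimated objective that $\hat\pi$ maximizes, the $D_f$ terms cancel in the split $J(\pi^*) - \hat J(\pi^*) + \hat J(\pi^*) - \hat J(\hat\pi) + \hat J(\hat\pi) - J(\hat\pi)$, giving the exact identity
\begin{align*}
J(\pi^*) - J(\hat\pi) \;=\; \big(\hat J(\pi^*) - \hat J(\hat\pi)\big) \;+\; \EE_{s\sim\rho}\big\langle (\pi^*-\hat\pi)(\cdot|s),\, (r-\fls)(s,\cdot)\big\rangle .
\end{align*}
The first term is not merely $\le 0$: for each fixed $s$ the map $p \mapsto \langle p,\fls(s,\cdot)\rangle - \eta^{-1}D_f(p\|\piref(\cdot|s))$ is $\tfrac{\alpha}{\eta}$-strongly concave with respect to the weighted norm $\norm{v}_{\piref(\cdot|s)}^2 \coloneqq \sum_{a\in\mathrm{supp}(\piref(\cdot|s))} v(a)^2/\piref(a|s)$ — this is exactly where \Cref{assume:f-strong-convex} enters, through $\nabla^2 D_f(\cdot\|\piref) \succeq \alpha\,\mathrm{diag}(1/\piref(a|s))$ — so the first-order optimality of $\hat\pi(\cdot|s)$ over the simplex upgrades it to $\hat J(\pi^*) - \hat J(\hat\pi) \le -\tfrac{\alpha}{2\eta}\,\EE_{s\sim\rho}\norm{(\pi^*-\hat\pi)(\cdot|s)}_{\piref(\cdot|s)}^2$.

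Finally, the dual of $\norm{\cdot}_{\piref(\cdot|s)}$ is precisely the $L^2(\piref(\cdot|s))$-norm, $\norm{u}_{\piref(\cdot|s),*}^2 = \EE_{a\sim\piref(\cdot|s)}[u(a)^2]$; applying this Cauchy--Schwarz inequality to the inner product and then AM--GM against the strongly-concave slack, the two $\norm{(\pi^*-\hat\pi)(\cdot|s)}_{\piref(\cdot|s)}^2$ contributions cancel and we are left with
\begin{align*}
J(\pi^*) - J(\hat\pi) \;\le\; \frac{\eta}{2\alpha}\,\EE_{(s,a)\sim\rho\times\piref}\big[(r-\fls)^2(s,a)\big] \;\le\; \tilde{O}\!\Big(\frac{\eta}{\alpha}\cdot\frac{\log(\cN_{\fcl}(\epsilon)/\delta)}{n}\Big),
\end{align*}
invoking the first step. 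Setting the right-hand side $\le \epsilon$ yields $n = \tilde{O}(\alpha^{-1}\eta\,\epsilon^{-1}\log\cN_{\fcl}(\epsilon))$, and the ``sufficiently small $\epsilon$'' hypothesis is what lets the lower-order ($\epsilon_c$ and discretization) contributions be absorbed.

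The routine ingredients — the least-squares bound, the three-term decomposition, Cauchy--Schwarz and AM--GM — are all standard; the one genuinely load-bearing step, and the reason no coverage condition appears, is identifying the \emph{correct} norm in the second step. The strong convexity of $D_f(\cdot\|\piref)$ must be measured in the $1/\piref$-weighted norm, whose dual is $L^2(\piref)$, so that the cross term $\langle \pi^*-\hat\pi,\, r-\fls\rangle$ is controlled by the \emph{on-behavior-policy} error $\EE_{\rho\times\piref}[(r-\fls)^2]$ alone — no pessimistic bonus and no all-policy concentrability needed. Had we instead used the $\ell_1$/Pinsker-flavored strong convexity natural for reverse KL in \Cref{sec:cb}, the cross term would only be bounded by $\norm{r-\fls}_\infty$, reintroducing an all-policy concentrability requirement. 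The residual care is bookkeeping at the simplex boundary: when $f'(0^+)$ is finite (e.g. $\chi^2$-divergence) $\hat\pi(\cdot|s)$ may vanish on some actions, but $\nabla(\eta^{-1}D_f)$ stays continuous up to the boundary, so the variational inequality and the weighted strong-convexity inequality still hold after restricting to $\mathrm{supp}(\piref(\cdot|s))$.
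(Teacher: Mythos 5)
Your proposal is correct, and it reaches the paper's key intermediate bound $J(\pi^*)-J(\hat\pi)\lesssim \alpha^{-1}\eta\,\EE_{(s,a)\sim\rho\times\piref}[(\fgt-\fls)^2(s,a)]$ by a genuinely different (dual) route. The paper works on the conjugate side: it writes the suboptimality as the Bregman divergence of $H_s^*$ between $\fgt$ and $\fls$ (using $\hat\pi=\nabla H_s^*(\fls)$), Taylor-expands, and controls $\nabla^2 H_s^*$ via a Penot-type bound $\nabla^2H_s^*(r)\preceq(\nabla^2H_s(\pi_r))^{-1}\preceq\alpha^{-1}\eta\,\mathrm{diag}(\piref(\cdot|s))$, which is exactly the smoothness-of-the-conjugate statement dual to your primal strong concavity. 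You instead stay on the primal side: the three-term split with cancellation of the regularizers, the variational inequality at the constrained maximizer $\hat\pi(\cdot|s)$ plus $\tfrac{\alpha}{\eta}$-strong concavity in the $1/\piref$-weighted norm, and then Cauchy--Schwarz in the dual pair followed by AM--GM to cancel the $\|(\pi^*-\hat\pi)(\cdot|s)\|^2_{\piref(\cdot|s)}$ terms. Both arguments then finish identically with the least-squares concentration bound under $\rho\times\piref$ (the paper's Equation~\eqref{eq:bandit-bonus-concen}), and your identification of the correct weighted geometry is precisely why no pessimism or coverage condition is needed, matching the paper's message. Your route is somewhat more elementary: it avoids invoking differentiability/curvature of the conjugate (the paper needs the Penot proposition and a second-order expansion of $H_s^*$ at a possibly nonsmooth point), at the cost of the boundary/support bookkeeping you already flag, which the paper's conjugate formulation sidesteps implicitly; your constant is also a factor $2$ better, which is immaterial for the theorem.
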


\begin{remark}
Compared to the $D^2_{\pi^*}$ dependency in \Cref{thm:bandit}, \Cref{thm:f-upper-bound} shows that the sample complexity of \Cref{algorithm:bandit-fdiv} gets rid of the dependency on any data coverage conditions when $f$ is strongly convex.
Intuitively, this is because the $f$-divergence
regularization in this case
is much stronger, so that both $\pi^*$ and $\hat \pi$ are close enough to $\piref$.
\end{remark}

The following hardness result justify the near-optimality of \Cref{thm:f-upper-bound} for $f$-divergence-regularized contextual bandits.
\begin{theorem}\label{thm:f-lower-bound}
For any $\epsilon \in (0, 1), \alpha > 0,\eta > 0, S > 32/3\cdot\log2$, sufficiently small $\epsilon$, and algorithm \textsf{Alg}, there is an $\alpha$-strongly-convex function $f$ and an $f$-divergence-regularized contextual bandit instance such that \textsf{Alg} requires at least $\Omega\big(\alpha^{-1}\eta \epsilon^{-1} \log \cN_{\fcl}(\epsilon) \big)$ samples to return an $\epsilon$-optimal policy.
\end{theorem}


\subsection{Proof Overview of \Cref{thm:f-upper-bound}}

We provide an overview of key analysis techniques for proving \Cref{thm:f-upper-bound}. Unlike KL-regularization, the $\pi^*$ under $f$-divergence might not have a closed form. This means that the proof of Lemma~\ref{lem:taylor-expansion}, which relies on the closed form of $\pi^*$, cannot be directly adopted. Therefore, we address this from a dual-Bregman perspective, \textcolor{black}{inspired by~\citet{abernethy2015fighting}}. For the simplicity of presentation, we consider multi-armed bandits here and omit the subscript for context $s$. 

We consider the function $H(\pi) = \eta^{-1}D_f(\pi \| \piref)$, which is the regularizer in the objective. Then its convex conjugate is given by $H^*(r) = \sup_{\pi \in \Delta^d} \{ \la \pi, r \ra - H(\pi)\}$, which is exactly the expected reward obtained by the optimal policy given reward function $r$. 
One observation is that when $f$ is strongly convex, the induced $f$-divergence, and therefore the function $H$ are also strongly convex.
Therefore, let $\pi_r = \argmax_{\pi}\{ \la \pi, r \ra - H_s(\pi)\}$ given some reward function $r$, the strong convexity of $H(\pi)$ gives that $\nabla H^*(r) = \pi_r$. This leads to the following regret decomposition, which is one of our key observations:
\begin{align*}
    J(\pi^*) - J(\hat \pi) &= \EE_{a \sim \pi^*}[\fgt(a)] - \EE_{a \sim \hat \pi}[\fgt(a)] - \eta^{-1}\big[D_{f}(\pi^* \| \piref) - D_{f}(\hat \pi \| \piref) \big]  \\
    & = H^*(\fgt) - H^*(\fls) - \la \hat \pi, \fgt - \fls \ra \\
    & = H^*(\fgt) - H^*(\fls) - \la \nabla H^*(\fls), \fgt - \fls \ra,
\end{align*}
which is the Bregman divergence of the dual function $H^*$ and therefore can be bounded by $(\fgt -\fls)^\top \nabla^2H^*(\tilde g)(\fgt -\fls)$ for some $\tilde g$. By Proposition 3.2 in~\citet{penot1994sub}, when $H$ is strongly convex, we can bound $\nabla^2 H^*(\tilde g)$ as follows
\begin{align*}
    \nabla^2 H^*(\tilde g) & \preceq \big(\nabla^2H(\pi_{\tilde g})\big)^{-1} \preceq \alpha^{-1} \eta  \mathrm{diag}\big(\piref(a_1), \cdots,\piref(a_{|\cA|})\big),
\end{align*}
which enables us to bound $(\fgt -\fls)^\top \nabla^2 H^*(\tilde g)(\fgt -\fls)$ by $\alpha^{-1}\eta \EE_{\piref}[(\fgt - \fps)^2]$. Since $\EE_{\piref}[(\fgt - \fps)^2]$ is not related to $\pi^*$, the upper bound is independent of any notion of concentrability.

\begin{figure}[htb!]
\subfigure[Empirical rate for KL.]{\label{subfig:kl}\includegraphics[width=0.3\textwidth]{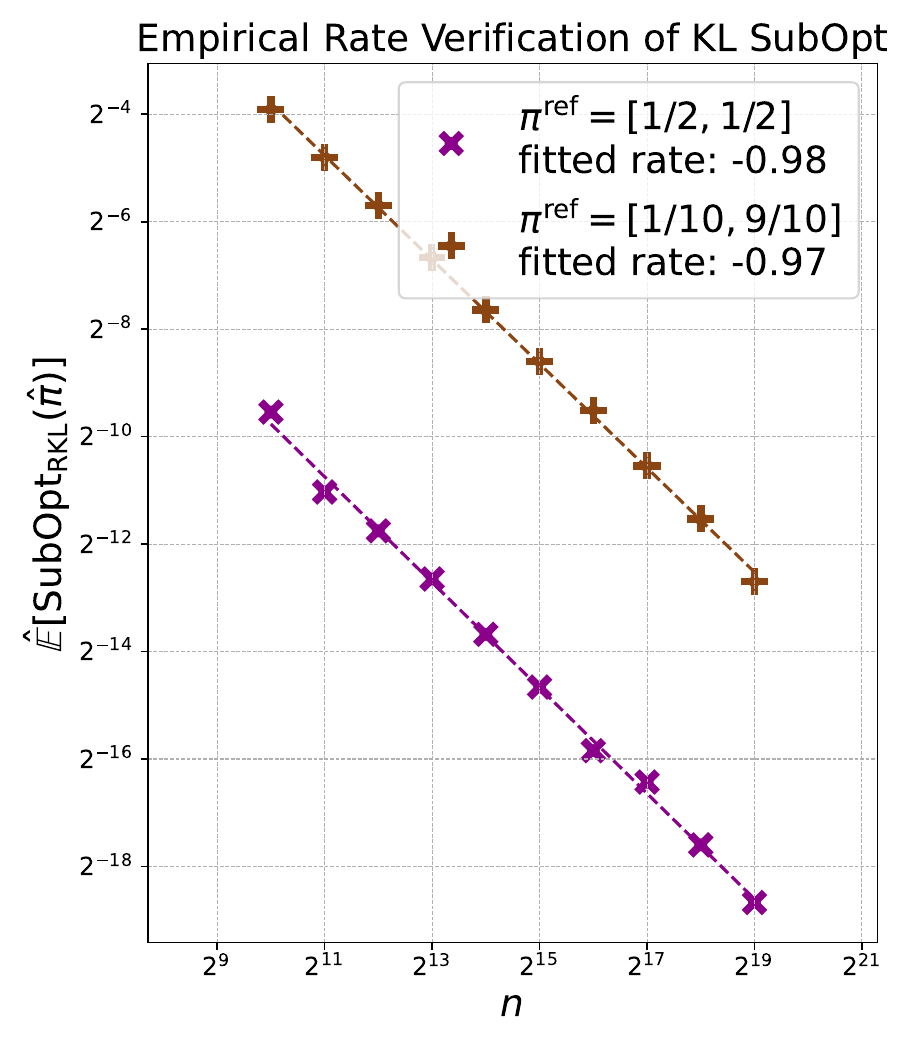}}
\subfigure[LHS: Empirical rate for $\chi^2$; RHS: Empirical scaling of $\alpha$.]{\label{subfig:f}\includegraphics[width=0.7\textwidth]{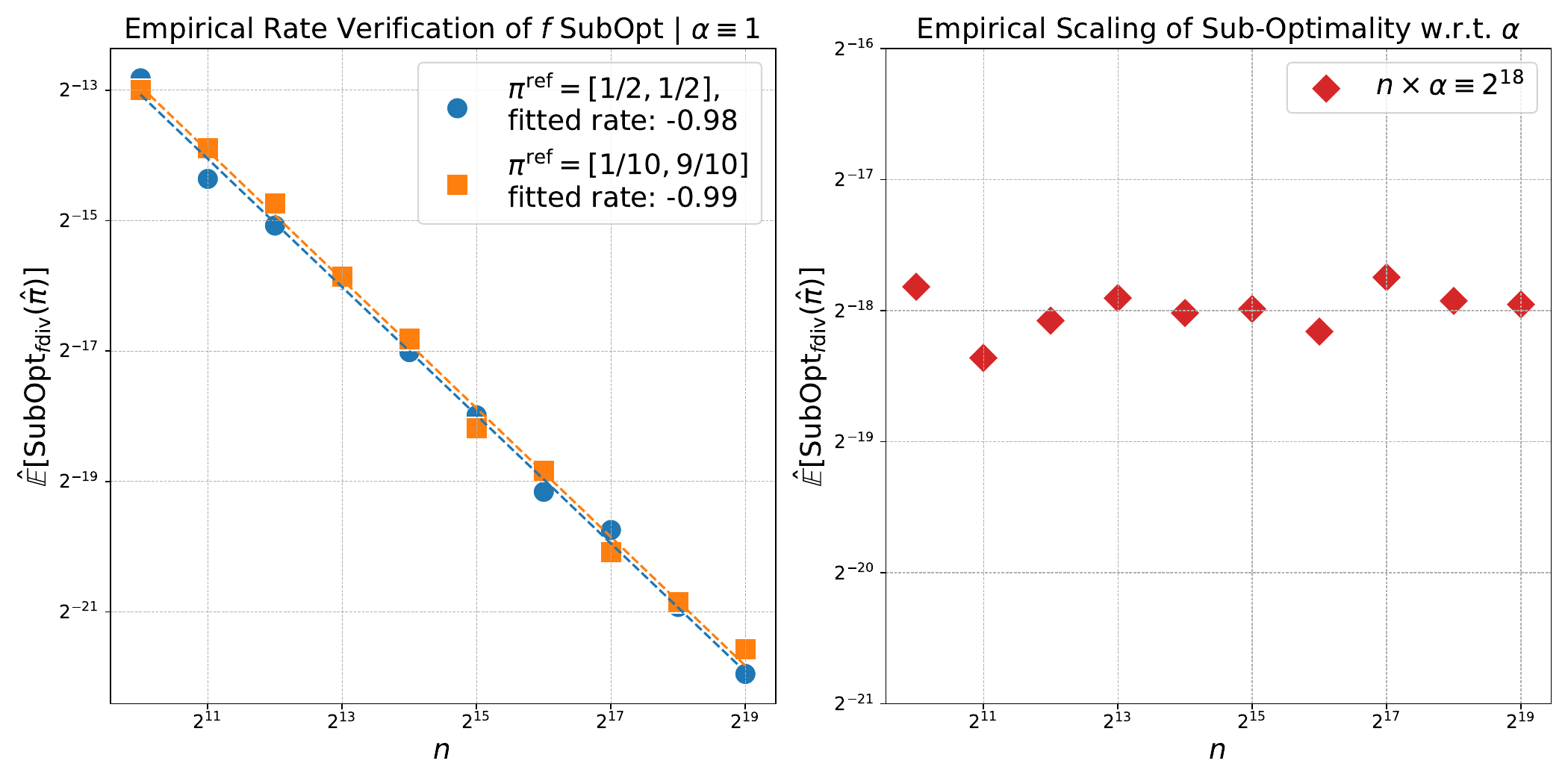}}
\caption{The empirical relation between $\log_2 n$ and $\log_2 \subopt$. The \emph{fitted rate} means the slope of $\log_2 n \sim \log_2 \subopt$ estimated via linear regression. Here $n$ is the sample size. Every point is the {average} over \textbf{100} independent trials.}
\label{fig:veri}
\end{figure}

\section{Experiments}\label{sec:numerical}


\textbf{Simulation on multi-armed bandits.} We first empirically check the correctness of our matching bounds for KL and $f$-divergence on the simplest testbed: \emph{two-armed} bandits, i.e., $\cA = \{0, 1\}$. We use one hard instance constructed in the proof of \Cref{thm:lowerbound-bandit} (\Cref{sec:lowerbound-bandit}) for the simulation under KL and one hard instance constructed in the proof of \Cref{thm:f-lower-bound} (\Cref{sec:f-lower-bound}) for the simulation under $f$-divergence with $f(x) = \alpha(x-1)^2/2$.
Recall that the dependency on $\epsilon$ in all sample complexity bounds above is $\tilde{\Theta}(\epsilon^{-1})$, and thus both $\subopt_\mathrm{RKL}$ and $\suboptfdiv$ should be roughly proportional to $n^{-1}$ as a function of the sample size $n$, which can be verified from the linear regression between $\log_2 n$ and $\log_2 \subopt$; i.e., the estimated slope should be approximately $-1$. Therefore, the two fitted rates in \Cref{subfig:kl} indicates that \algcb{} indeed achieves the near-optimal statistical rate $n^{-1}$ under different $\piref$'s and the counterparts in the LHS of \Cref{subfig:f} indicates the near-optimality of \algfcb{} empirically. The contrast between \Cref{subfig:kl} and the LHS of \Cref{subfig:f} also corroborates that the sample complexity against the KL-regularized objective positively depend on the concentrability, while that against the $\chi^2$-divergence-regularized objective does not vary with the coverage condition of $\piref$. Moreover, on top of the hard instance for $f$-divergence, we further set $\alpha = 2^{15} / n$ to numerically examine the scaling of $\suboptfdiv$ w.r.t. the strong convexity modulus $\alpha$. As shown on the RHS of \Cref{subfig:f}, $\suboptfdiv$ remains stable as $n$ goes up given $n \alpha \equiv 2^{15}$; therefore, \Cref{subfig:f} also empirically verified that $\suboptfdiv$ is inversely proportional to $\alpha$.



{
\color{black}


\begin{figure}[ht]
\begin{center}
    \subfigure[Empirical rate for KL.]{\label{subfig:kl-linear}\includegraphics[width=0.35\textwidth]{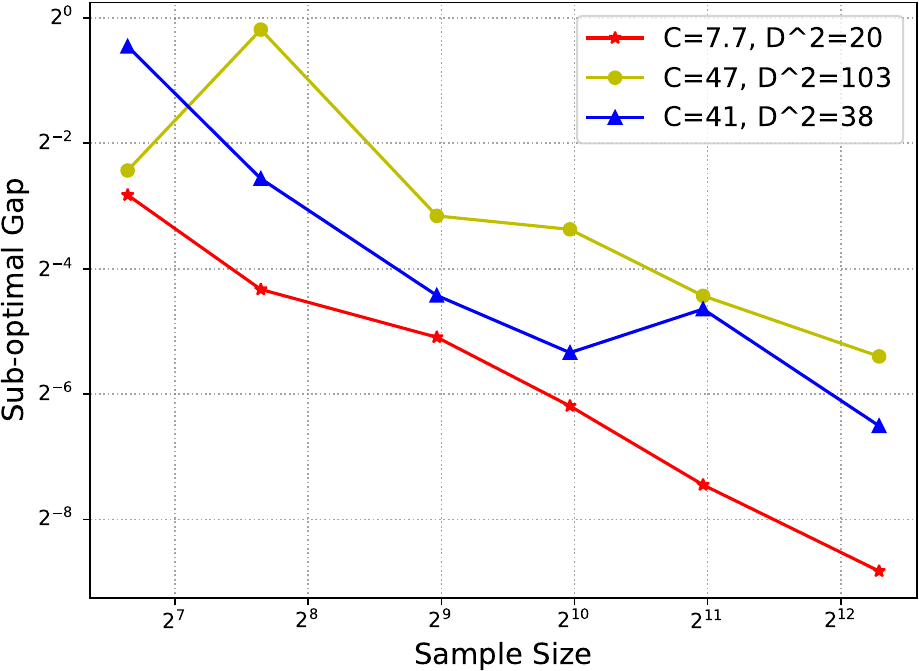}}
\subfigure[Empirical rate for $\chi^2$]{\label{subfig:chi2-linear}\includegraphics[width=0.35\textwidth]{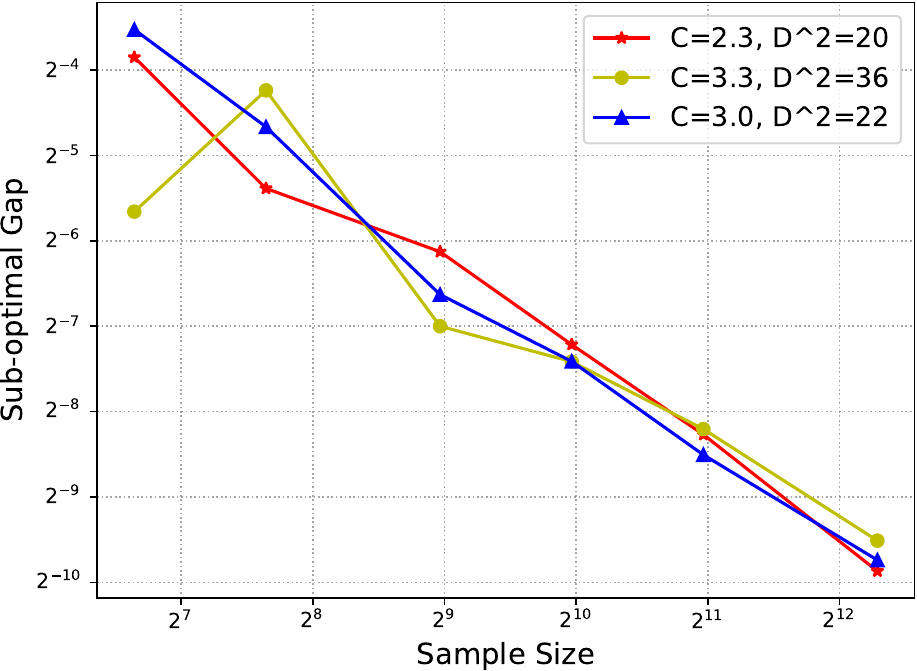}}
\end{center}
\caption{The empirical relation between $\log_2 n$ and $\log_2 \subopt$ for linear bandits. In the legend, we denote $C^{\pi^*}$ (resp. $D^2_{\pistar}$) by \texttt{C} (resp. \texttt{D\^{}2}).}
\label{fig:linear}
\end{figure}

\textbf{Simulation on linear bandits.} We then simulate a linear bandit as follows. The constructions of the feature map $\bphi$, ground-truth parameter $\btheta^*$ and the induced reward are detailed in \Cref{app:lin-exp-details}. The behavior policy is constructed as $\piref = \beta \mathsf{Unif}(\cA) + (1-\beta) \mathsf{Unif}(\cA_k)$, where $\cA_k \subset \cA$ be the subset such that $\cA_k$ consists of the $k$ arms with the lowest expected reward. We consider three different behavior policies, $(\beta, k) \in \{(1, \cdot), (0.1, 4), (0.05, 20)\}$, which induces various $C^{\pi^*}$ and $D^2_{\pi^*}$ so as to demonstrate the influence of coverage under different regularization. The results are compiled in \Cref{fig:linear}. Specifically, for the KL-regularized cases depicted in Figure~\ref{subfig:kl-linear}, we see that as the coverage coefficients $C^{\pi^*}$ and $D^2_{\pistar}$ vary, there is a consistent sub-optimality gap margin between these instances. In contrast, \Cref{subfig:chi2-linear}shows that the sub-optimality gaps under different instances (with distinct coverage coefficient) are very close for sufficiently large sample sizes. These results corroborate our theoretical finding that the sample complexity w.r.t. KL-regularized objectives is concentrability-dependent but that w.r.t $f$-divergence ones is not (for strongly convex $f$).


\begin{figure}[ht]
\begin{center}
    \subfigure[Empirical rate for KL]{\label{subfig:kl-mnist}\includegraphics[width=0.35\textwidth]{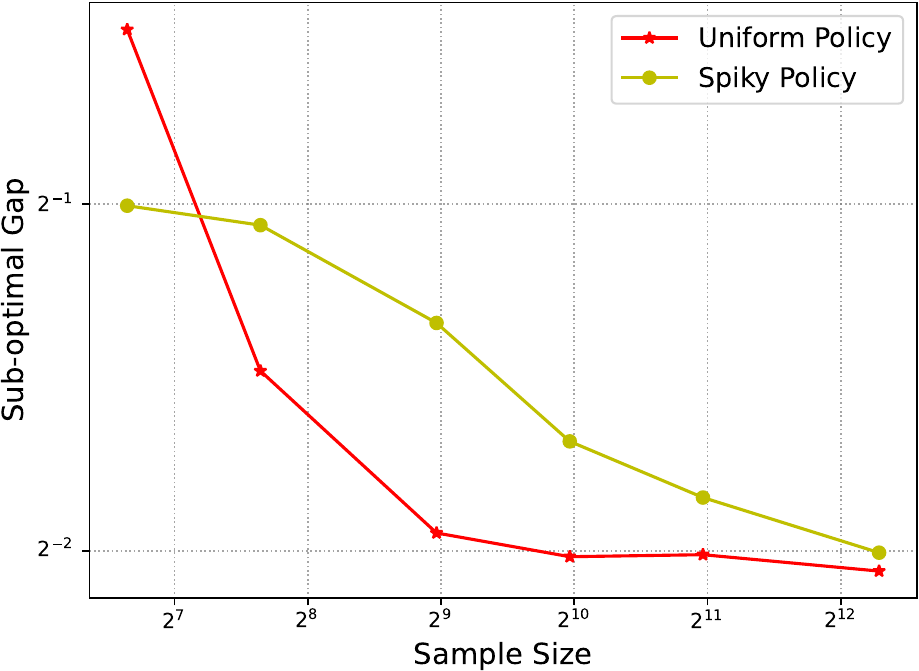}}
\subfigure[Empirical rate for $\chi^2$]{\label{subfig:chi2-mnist}\includegraphics[width=0.35\textwidth]{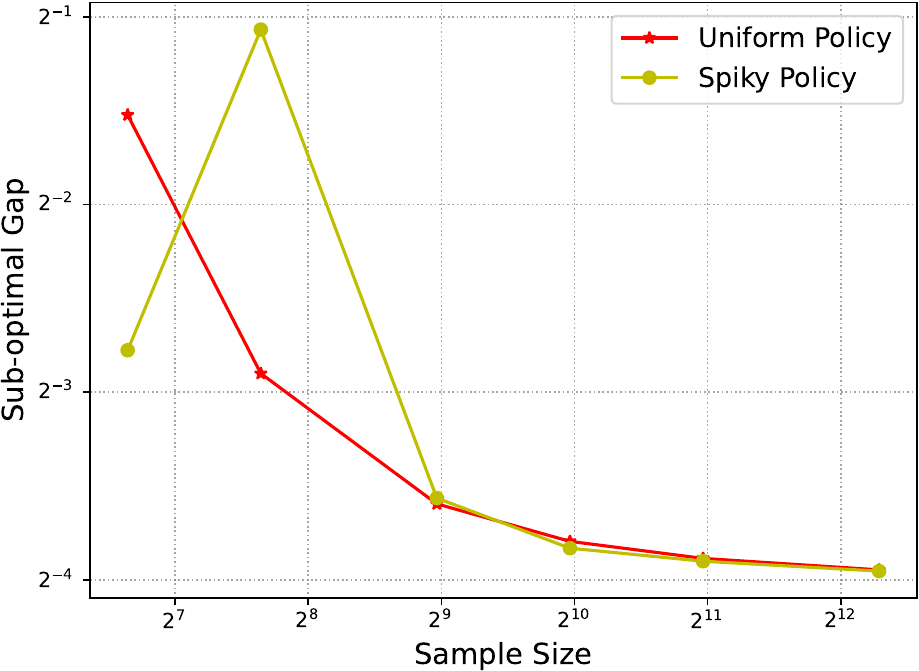}}
\end{center}
\caption{The empirical relation between $\log_2 n$ and $\log_2 \subopt$ on MNIST dataset. }
\label{fig:mnist}
\end{figure}

\textbf{Real-world experiments.} We further verify our theory on a vision dataset, MNIST~\citep{lecun1998mnist}. The construction of the feature map 
is detailed in \Cref{app:exp-details}. We consider two reference polices, a uniform policy $\mathsf{Unif}(\cA)$ and a spiky policy $0.5\mathsf{Unif}(\cA) + 0.5\mathsf{Dirac}(\{0\})$ to obtain instances with different concentrability coefficients. Figure~\ref{fig:mnist} exhibits the $\subopt$ curves, which show that under KL-regularization, when sample size is not large enough, there exists a considerable gap between instances with different behavior policy, but the gap is vanishing as the sample size increases. On the other hand, as for $\chi^2$-divergence regularization, such a gap vanishes quickly when the sample size becomes moderately large and the sub-optimal gap remains similar for larger sample sizes. These results are consistent with the simulation in \Cref{sec:numerical} and our theoretical findings.

}

\section{Conclusion and Future Work}

In this work, we take the first step towards fully understanding the statistical efficiency \emph{with respect to $f$-divergence-regularized objectives} of offline policy learning by sharp analyses for two empirically relevant subclasses. (1) We are the first to show that single-policy concentrability is nearly the right coverage condition for
reverse KL to achieve the fast $\tilde{\Theta}(\epsilon^{-1})$ sample complexity. The novel techniques in algorithm analysis
leverages the curvature of KL-regularized objectives and integrates pessimism with a newly identified moment-based observation,
enabling a neat refinement of a mean-value-type argument to the extreme; which are decoupled from tricky algorithmic tweaks, and thus might be of independent interest.
(2) If strong convexity is further imposed on $f$, our fast $\tilde{\Theta}(\epsilon^{-1})$ sample complexity is provably free of any coverage dependency. Unlike those for KL, the upper bound arguments for strongly convex $f$ do not rely on specific closed-form solutions of the regularized objective maximizer.

All techniques in this work can be generalized beyond vanilla absolute reward feedback, as certified by CDBs, which is detailed in \Cref{app:cdb} under a slightly different notion of $D^2$ tailored for pairwise comparison feedback. However, for reverse-KL regularization, the $D^2_{\pistar}$ in the upper bound and the $C^{\pistar}$ in the lower bound still does not perfectly match. Also, for general $f$-divergence other than reverse-KL, our analyses require $f$ to be twice-continuously differentiable and strongly convex. Fully closing the gap under reverse-KL regularization and extending the analysis to general $f$-divergences are interesting directions for future work.



\section*{The Use of Large Language Models (LLMs)}
We use LLMs as a tool to refine our writing and correct grammatical errors.

\section*{Acknowledgment}
We thank the anonymous reviewers and area chair for their helpful comments. QZ, KJ, HZ and QG are supported in part by the National Science Foundation DMS-2323113 and IIS-2403400. HZ is also supported in part by Amazon PhD Fellowship.
The views and conclusions contained in this paper are those of the authors and should not be interpreted as representing any funding agencies.

\bibliography{reference}

@inproceedings{jin2021pessimism,
  title={Is pessimism provably efficient for offline rl?},
  author={Jin, Ying and Yang, Zhuoran and Wang, Zhaoran},
  booktitle={International Conference on Machine Learning},
  pages={5084--5096},
  year={2021},
  organization={PMLR}
}

@article{xie2021bellman,
  title={Bellman-consistent pessimism for offline reinforcement learning},
  author={Xie, Tengyang and Cheng, Ching-An and Jiang, Nan and Mineiro, Paul and Agarwal, Alekh},
  journal={Advances in neural information processing systems},
  volume={34},
  pages={6683--6694},
  year={2021}
}

@article{rashidinejad2021bridging,
  title={Bridging offline reinforcement learning and imitation learning: A tale of pessimism},
  author={Rashidinejad, Paria and Zhu, Banghua and Ma, Cong and Jiao, Jiantao and Russell, Stuart},
  journal={Advances in Neural Information Processing Systems},
  volume={34},
  pages={11702--11716},
  year={2021}
}

@article{xie2021policy,
  title={Policy finetuning: Bridging sample-efficient offline and online reinforcement learning},
  author={Xie, Tengyang and Jiang, Nan and Wang, Huan and Xiong, Caiming and Bai, Yu},
  journal={Advances in neural information processing systems},
  volume={34},
  pages={27395--27407},
  year={2021}
}

@inproceedings{shi2022pessimistic,
  title={Pessimistic q-learning for offline reinforcement learning: Towards optimal sample complexity},
  author={Shi, Laixi and Li, Gen and Wei, Yuting and Chen, Yuxin and Chi, Yuejie},
  booktitle={International conference on machine learning},
  pages={19967--20025},
  year={2022},
  organization={PMLR}
}

@inproceedings{zhan2022offline,
  title={Offline reinforcement learning with realizability and single-policy concentrability},
  author={Zhan, Wenhao and Huang, Baihe and Huang, Audrey and Jiang, Nan and Lee, Jason},
  booktitle={Conference on Learning Theory},
  pages={2730--2775},
  year={2022},
  organization={PMLR}
}

@article{li2022pessimism,
  title={Pessimism for Offline Linear Contextual Bandits using $\ell_p$ Confidence Sets},
  author={Li, Gene and Ma, Cong and Srebro, Nati},
  journal={Advances in Neural Information Processing Systems},
  volume={35},
  pages={20974--20987},
  year={2022}
}

@article{xiong2022nearly,
  title={Nearly minimax optimal offline reinforcement learning with linear function approximation: Single-agent mdp and markov game},
  author={Xiong, Wei and Zhong, Han and Shi, Chengshuai and Shen, Cong and Wang, Liwei and Zhang, Tong},
  journal={arXiv preprint arXiv:2205.15512},
  year={2022}
}

@inproceedings{di2024pessimistic,
  title={Pessimistic Nonlinear Least-Squares Value Iteration for Offline Reinforcement Learning},
  author={Di, Qiwei and Zhao, Heyang and He, Jiafan and Gu, Quanquan},
  booktitle={The Twelfth International Conference on Learning Representations},
  year={2024}
}

@article{li2024settling,
  title={Settling the sample complexity of model-based offline reinforcement learning},
  author={Li, Gen and Shi, Laixi and Chen, Yuxin and Chi, Yuejie and Wei, Yuting},
  journal={The Annals of Statistics},
  volume={52},
  number={1},
  pages={233--260},
  year={2024},
  publisher={Institute of Mathematical Statistics}
}

@inproceedings{xiong2024iterative,
  title={Iterative preference learning from human feedback: Bridging theory and practice for rlhf under kl-constraint},
  author={Xiong, Wei and Dong, Hanze and Ye, Chenlu and Wang, Ziqi and Zhong, Han and Ji, Heng and Jiang, Nan and Zhang, Tong},
  booktitle={Forty-first International Conference on Machine Learning},
  year={2024}
}

@article{zhao2024sharp,
  title={Sharp Analysis for KL-Regularized Contextual Bandits and RLHF},
  author={Zhao, Heyang and Ye, Chenlu and Gu, Quanquan and Zhang, Tong},
  journal={arXiv preprint arXiv:2411.04625},
  year={2024}
}

@article{foster2025good,
  title={Is a Good Foundation Necessary for Efficient Reinforcement Learning? The Computational Role of the Base Model in Exploration},
  author={Foster, Dylan J and Mhammedi, Zakaria and Rohatgi, Dhruv},
  journal={arXiv preprint arXiv:2503.07453},
  year={2025}
}

@inproceedings{cai2020provably,
  title={Provably efficient exploration in policy optimization},
  author={Cai, Qi and Yang, Zhuoran and Jin, Chi and Wang, Zhaoran},
  booktitle={International Conference on Machine Learning},
  pages={1283--1294},
  year={2020},
  organization={PMLR}
}

@inproceedings{he2022near,
  title={Near-optimal policy optimization algorithms for learning adversarial linear mixture mdps},
  author={He, Jiafan and Zhou, Dongruo and Gu, Quanquan},
  booktitle={International Conference on Artificial Intelligence and Statistics},
  pages={4259--4280},
  year={2022},
  organization={PMLR}
}

@article{ji2023horizon,
  title={Horizon-free reinforcement learning in adversarial linear mixture MDPs},
  author={Ji, Kaixuan and Zhao, Qingyue and He, Jiafan and Zhang, Weitong and Gu, Quanquan},
  journal={arXiv preprint arXiv:2305.08359},
  year={2023}
}

@article{farahmand2010error,
  title={Error propagation for approximate policy and value iteration},
  author={Farahmand, Amir-massoud and Szepesv{\'a}ri, Csaba and Munos, R{\'e}mi},
  journal={Advances in neural information processing systems},
  volume={23},
  year={2010}
}

@inproceedings{chen2019information,
  title={Information-theoretic considerations in batch reinforcement learning},
  author={Chen, Jinglin and Jiang, Nan},
  booktitle={International Conference on Machine Learning},
  pages={1042--1051},
  year={2019},
  organization={PMLR}
}

@article{uehara2021pessimistic,
  title={Pessimistic model-based offline reinforcement learning under partial coverage},
  author={Uehara, Masatoshi and Sun, Wen},
  journal={arXiv preprint arXiv:2107.06226},
  year={2021}
}

@article{wang2024model,
  title={Model-based RL as a Minimalist Approach to Horizon-Free and Second-Order Bounds},
  author={Wang, Zhiyong and Zhou, Dongruo and Lui, John and Sun, Wen},
  journal={arXiv preprint arXiv:2408.08994},
  year={2024}
}

@inproceedings{song2024importance,
  title={The importance of online data: Understanding preference fine-tuning via coverage},
  author={Song, Yuda and Swamy, Gokul and Singh, Aarti and Bagnell, Drew and Sun, Wen},
  booktitle={The Thirty-eighth Annual Conference on Neural Information Processing Systems},
  year={2024}
}

@article{bradley1952rank,
  title={Rank analysis of incomplete block designs: I. The method of paired comparisons},
  author={Bradley, Ralph Allan and Terry, Milton E},
  journal={Biometrika},
  volume={39},
  number={3/4},
  pages={324--345},
  year={1952},
  publisher={JSTOR}
}

@article{yue2012k,
  title={The k-armed dueling bandits problem},
  author={Yue, Yisong and Broder, Josef and Kleinberg, Robert and Joachims, Thorsten},
  journal={Journal of Computer and System Sciences},
  volume={78},
  number={5},
  pages={1538--1556},
  year={2012},
  publisher={Elsevier}
}

@inproceedings{zoghi2014relative,
  title={Relative upper confidence bound for the k-armed dueling bandit problem},
  author={Zoghi, Masrour and Whiteson, Shimon and Munos, Remi and Rijke, Maarten},
  booktitle={International conference on machine learning},
  pages={10--18},
  year={2014},
  organization={PMLR}
}

@inproceedings{dudik2015contextual,
  title={Contextual dueling bandits},
  author={Dud{\'\i}k, Miroslav and Hofmann, Katja and Schapire, Robert E and Slivkins, Aleksandrs and Zoghi, Masrour},
  booktitle={Conference on Learning Theory},
  pages={563--587},
  year={2015},
  organization={PMLR}
}

@inproceedings{zhu2023principled,
  title={Principled reinforcement learning with human feedback from pairwise or k-wise comparisons},
  author={Zhu, Banghua and Jordan, Michael and Jiao, Jiantao},
  booktitle={International Conference on Machine Learning},
  pages={43037--43067},
  year={2023},
  organization={PMLR}
}

@article{zhan2023provable,
  title={Provable offline preference-based reinforcement learning},
  author={Zhan, Wenhao and Uehara, Masatoshi and Kallus, Nathan and Lee, Jason D and Sun, Wen},
  journal={arXiv preprint arXiv:2305.14816},
  year={2023}
}

@article{neu2017unified,
  title={A unified view of entropy-regularized markov decision processes},
  author={Neu, Gergely and Jonsson, Anders and G{\'o}mez, Vicen{\c{c}}},
  journal={arXiv preprint arXiv:1705.07798},
  year={2017}
}

@inproceedings{geist2019theory,
  title={A theory of regularized markov decision processes},
  author={Geist, Matthieu and Scherrer, Bruno and Pietquin, Olivier},
  booktitle={International Conference on Machine Learning},
  pages={2160--2169},
  year={2019},
  organization={PMLR}
}

@article{vieillard2020leverage,
  title={Leverage the average: an analysis of kl regularization in reinforcement learning},
  author={Vieillard, Nino and Kozuno, Tadashi and Scherrer, Bruno and Pietquin, Olivier and Munos, R{\'e}mi and Geist, Matthieu},
  journal={Advances in Neural Information Processing Systems},
  volume={33},
  pages={12163--12174},
  year={2020}
}

@article{kozuno2022kl,
  title={Kl-entropy-regularized rl with a generative model is minimax optimal},
  author={Kozuno, Tadashi and Yang, Wenhao and Vieillard, Nino and Kitamura, Toshinori and Tang, Yunhao and Mei, Jincheng and M{\'e}nard, Pierre and Azar, Mohammad Gheshlaghi and Valko, Michal and Munos, R{\'e}mi and others},
  journal={arXiv preprint arXiv:2205.14211},
  year={2022}
}

@inproceedings{huang2024correcting,
  title={Correcting the Mythos of KL-Regularization: Direct Alignment without Overoptimization via Chi-Squared Preference Optimization},
  author={Huang, Audrey and Zhan, Wenhao and Xie, Tengyang and Lee, Jason D and Sun, Wen and Krishnamurthy, Akshay and Foster, Dylan J},
  booktitle={The Thirteenth International Conference on Learning Representations},
  year={2025}
}

@inproceedings{huang2025best,
  title={Is Best-of-N the Best of Them? Coverage, Scaling, and Optimality in Inference-Time Alignment},
  author={Huang, Audrey and Block, Adam and Liu, Qinghua and Jiang, Nan and Krishnamurthy, Akshay and Foster, Dylan J},
  booktitle={Forty-second International Conference on Machine Learning},
  year={2025}
}

@article{levine2016end,
  title={End-to-end training of deep visuomotor policies},
  author={Levine, Sergey and Finn, Chelsea and Darrell, Trevor and Abbeel, Pieter},
  journal={Journal of Machine Learning Research},
  volume={17},
  number={39},
  pages={1--40},
  year={2016}
}

@inproceedings{haarnoja2018soft,
  title={Soft actor-critic: Off-policy maximum entropy deep reinforcement learning with a stochastic actor},
  author={Haarnoja, Tuomas and Zhou, Aurick and Abbeel, Pieter and Levine, Sergey},
  booktitle={International conference on machine learning},
  pages={1861--1870},
  year={2018},
  organization={PMLR}
}

@article{rafailov2023direct,
  title={Direct preference optimization: Your language model is secretly a reward model},
  author={Rafailov, Rafael and Sharma, Archit and Mitchell, Eric and Manning, Christopher D and Ermon, Stefano and Finn, Chelsea},
  journal={Advances in Neural Information Processing Systems},
  volume={36},
  year={2023}
}

@article{xie2024exploratory,
  title={Exploratory Preference Optimization: Harnessing Implicit Q*-Approximation for Sample-Efficient RLHF},
  author={Xie, Tengyang and Foster, Dylan J and Krishnamurthy, Akshay and Rosset, Corby and Awadallah, Ahmed and Rakhlin, Alexander},
  journal={arXiv preprint arXiv:2405.21046},
  year={2024}
}

@article{wu2024self,
  title={Self-play preference optimization for language model alignment},
  author={Wu, Yue and Sun, Zhiqing and Yuan, Huizhuo and Ji, Kaixuan and Yang, Yiming and Gu, Quanquan},
  journal={arXiv preprint arXiv:2405.00675},
  year={2024}
}

@article{richemond2024offline,
  title={Offline Regularised Reinforcement Learning for Large Language Models Alignment},
  author={Richemond, Pierre Harvey and Tang, Yunhao and Guo, Daniel and Calandriello, Daniele and Azar, Mohammad Gheshlaghi and Rafailov, Rafael and Pires, Bernardo Avila and Tarassov, Eugene and Spangher, Lucas and Ellsworth, Will and others},
  journal={arXiv preprint arXiv:2405.19107},
  year={2024}
}

@article{liu2024enhancing,
  title={Enhancing multi-step reasoning abilities of language models through direct q-function optimization},
  author={Liu, Guanlin and Ji, Kaixuan and Zheng, Renjie and Wu, Zheng and Dun, Chen and Gu, Quanquan and Yan, Lin},
  journal={arXiv preprint arXiv:2410.09302},
  year={2024}
}

@article{munos2023nash,
  title={Nash learning from human feedback},
  author={Munos, R{\'e}mi and Valko, Michal and Calandriello, Daniele and Azar, Mohammad Gheshlaghi and Rowland, Mark and Guo, Zhaohan Daniel and Tang, Yunhao and Geist, Matthieu and Mesnard, Thomas and Michi, Andrea and others},
  journal={arXiv preprint arXiv:2312.00886},
  year={2023}
}

@article{ye2024theoretical,
  title={A theoretical analysis of nash learning from human feedback under general kl-regularized preference},
  author={Ye, Chenlu and Xiong, Wei and Zhang, Yuheng and Jiang, Nan and Zhang, Tong},
  journal={arXiv preprint arXiv:2402.07314},
  year={2024}
}

@article{williams1992simple,
  title={Simple statistical gradient-following algorithms for connectionist reinforcement learning},
  author={Williams, Ronald J},
  journal={Machine learning},
  volume={8},
  pages={229--256},
  year={1992},
  publisher={Springer}
}

@inproceedings{ziebart2008maximum,
  title={Maximum entropy inverse reinforcement learning.},
  author={Ziebart, Brian D and Maas, Andrew L and Bagnell, J Andrew and Dey, Anind K and others},
  booktitle={Aaai},
  volume={8},
  pages={1433--1438},
  year={2008},
  organization={Chicago, IL, USA}
}

@inproceedings{levine2013guided,
  title={Guided policy search},
  author={Levine, Sergey and Koltun, Vladlen},
  booktitle={International conference on machine learning},
  pages={1--9},
  year={2013},
  organization={PMLR}
}

@inproceedings{ahmed2019understanding,
  title={Understanding the impact of entropy on policy optimization},
  author={Ahmed, Zafarali and Le Roux, Nicolas and Norouzi, Mohammad and Schuurmans, Dale},
  booktitle={International conference on machine learning},
  pages={151--160},
  year={2019},
  organization={PMLR}
}

@article{liu2019regularization,
  title={Regularization matters in policy optimization},
  author={Liu, Zhuang and Li, Xuanlin and Kang, Bingyi and Darrell, Trevor},
  journal={arXiv preprint arXiv:1910.09191},
  year={2019}
}

@article{ouyang2022training,
  title={Training language models to follow instructions with human feedback},
  author={Ouyang, Long and Wu, Jeffrey and Jiang, Xu and Almeida, Diogo and Wainwright, Carroll and Mishkin, Pamela and Zhang, Chong and Agarwal, Sandhini and Slama, Katarina and Ray, Alex and others},
  journal={Advances in neural information processing systems},
  volume={35},
  pages={27730--27744},
  year={2022}
}

@article{muller1997integral,
  title={Integral probability metrics and their generating classes of functions},
  author={M{\"u}ller, Alfred},
  journal={Advances in applied probability},
  volume={29},
  number={2},
  pages={429--443},
  year={1997},
  publisher={Cambridge University Press}
}

@article{basseville2013divergence,
  title={Divergence measures for statistical data processing—An annotated bibliography},
  author={Basseville, Mich{\'e}le},
  journal={Signal Processing},
  volume={93},
  number={4},
  pages={621--633},
  year={2013},
  publisher={Elsevier}
}

@article{jiao2014information,
  title={Information measures: the curious case of the binary alphabet},
  author={Jiao, Jiantao and Courtade, Thomas A and No, Albert and Venkat, Kartik and Weissman, Tsachy},
  journal={IEEE Transactions on Information Theory},
  volume={60},
  number={12},
  pages={7616--7626},
  year={2014},
  publisher={IEEE}
}

@inproceedings{renyi1961measures,
  title={On measures of entropy and information},
  author={R{\'e}nyi, Alfr{\'e}d},
  booktitle={Proceedings of the fourth Berkeley symposium on mathematical statistics and probability, volume 1: contributions to the theory of statistics},
  volume={4},
  pages={547--562},
  year={1961},
  organization={University of California Press}
}

@article{csiszar1967information,
  title={On information-type measure of difference of probability distributions and indirect observations},
  author={Csisz{\'a}r, Imre},
  journal={Studia Sci. Math. Hungar.},
  volume={2},
  pages={299--318},
  year={1967}
}

@article{bregman1967relaxation,
  title={The relaxation method of finding the common point of convex sets and its application to the solution of problems in convex programming},
  author={Bregman, Lev M},
  journal={USSR computational mathematics and mathematical physics},
  volume={7},
  number={3},
  pages={200--217},
  year={1967},
  publisher={Elsevier}
}

@article{levine2018reinforcement,
  title={Reinforcement learning and control as probabilistic inference: Tutorial and review},
  author={Levine, Sergey},
  journal={arXiv preprint arXiv:1805.00909},
  year={2018}
}

@article{levine2020offline,
  title={Offline reinforcement learning: Tutorial, review, and perspectives on open problems},
  author={Levine, Sergey and Kumar, Aviral and Tucker, George and Fu, Justin},
  journal={arXiv preprint arXiv:2005.01643},
  year={2020}
}

@article{jiang2024offline,
  title={Offline reinforcement learning in large state spaces: Algorithms and guarantees},
  author={Jiang, Nan and Xie, Tengyang},
  journal={Statistical Science},
  year={2024}
}

@inproceedings{jin2020provably,
  title={Provably efficient reinforcement learning with linear function approximation},
  author={Jin, Chi and Yang, Zhuoran and Wang, Zhaoran and Jordan, Michael I},
  booktitle={Conference on learning theory},
  pages={2137--2143},
  year={2020},
  organization={PMLR}
}

@book{lattimore2020bandit,
  title={Bandit algorithms},
  author={Lattimore, Tor and Szepesv{\'a}ri, Csaba},
  year={2020},
  publisher={Cambridge University Press}
}

@book{zhang2023ltbook,
   title={Mathematical Analysis of Machine Learning Algorithms},
   author={Zhang, Tong},
   doi={10.1017/9781009093057},
   publisher={Cambridge University Press},
   place={Cambridge},
   year={2023}
}

@article{foster2023foundations,
  title={Foundations of reinforcement learning and interactive decision making},
  author={Foster, Dylan J and Rakhlin, Alexander},
  journal={arXiv preprint arXiv:2312.16730},
  year={2023}
}

@book{wainwright2019high,
  title={High-dimensional statistics: A non-asymptotic viewpoint},
  author={Wainwright, Martin J},
  volume={48},
  year={2019},
  publisher={Cambridge university press}
}

@article{zhang2024general,
  title={General preference modeling with preference representations for aligning language models},
  author={Zhang, Yifan and Zhang, Ge and Wu, Yue and Xu, Kangping and Gu, Quanquan},
  journal={arXiv preprint arXiv:2410.02197},
  year={2024}
}

@article{sutton1999policy,
  title={Policy gradient methods for reinforcement learning with function approximation},
  author={Sutton, Richard S and McAllester, David and Singh, Satinder and Mansour, Yishay},
  journal={Advances in neural information processing systems},
  volume={12},
  year={1999}
}

@article{lecam1973convergence,
  title={Convergence of estimates under dimensionality restrictions},
  author={Le Cam, Lucien},
  journal={The Annals of Statistics},
  pages={38--53},
  year={1973},
  publisher={JSTOR}
}

@incollection{yu1997assouad,
  title={Assouad, fano, and le cam},
  author={Yu, Bin},
  booktitle={Festschrift for Lucien Le Cam: research papers in probability and statistics},
  pages={423--435},
  year={1997},
  publisher={Springer}
}

@book{polyanskiy2025information,
  title={Information theory: From coding to learning},
  author={Polyanskiy, Yury and Wu, Yihong},
  year={2025},
  publisher={Cambridge university press}
}

@article{chen2024assouad,
  title={Assouad, Fano, and Le Cam with Interaction: A Unifying Lower Bound Framework and Characterization for Bandit Learnability},
  author={Chen, Fan and Foster, Dylan J and Han, Yanjun and Qian, Jian and Rakhlin, Alexander and Xu, Yunbei},
  journal={Advances in Neural Information Processing Systems},
  volume={37},
  pages={75585--75641},
  year={2024}
}

@article{zhou2018fenchel,
  title={On the fenchel duality between strong convexity and lipschitz continuous gradient},
  author={Zhou, Xingyu},
  journal={arXiv preprint arXiv:1803.06573},
  year={2018}
}

@article{penot1994sub,
  title={Sub-hessians, super-hessians and conjugation},
  author={Penot, Jean-Paul},
  journal={Nonlinear Analysis: Theory, Methods \& Applications},
  volume={23},
  number={6},
  pages={689--702},
  year={1994},
  publisher={Elsevier}
}

@article{gilbert1952comparison,
  title={A comparison of signalling alphabets},
  author={Gilbert, Edgar N},
  journal={The Bell system technical journal},
  volume={31},
  number={3},
  pages={504--522},
  year={1952},
  publisher={Nokia Bell Labs}
}

@article{varshamov1957estimate,
  title={Estimate of the number of signals in error correcting codes},
  author={Varshamov, Rom Rubenovich},
  journal={Docklady Akad. Nauk, SSSR},
  volume={117},
  pages={739--741},
  year={1957}
}

@inproceedings{amortila2024scalable,
  title={Scalable online exploration via coverability},
  author={Amortila, Philip and Foster, Dylan J and Krishnamurthy, Akshay},
  booktitle={Proceedings of the 41st International Conference on Machine Learning},
  pages={1392--1455},
  year={2024}
}

@article{zhao2025logarithmic,
  title={Logarithmic Regret for Online KL-Regularized Reinforcement Learning},
  author={Zhao, Heyang and Ye, Chenlu and Xiong, Wei and Gu, Quanquan and Zhang, Tong},
  journal={arXiv preprint arXiv:2502.07460},
  year={2025}
}

@inproceedings{gabbianelli2024importance,
  title={Importance-weighted offline learning done right},
  author={Gabbianelli, Germano and Neu, Gergely and Papini, Matteo},
  booktitle={International Conference on Algorithmic Learning Theory},
  pages={614--634},
  year={2024},
  organization={PMLR}
}

@article{aminian2025theoretical,
  title={Theoretical Analysis of KL-regularized RLHF with Multiple Reference Models},
  author={Aminian, Gholamali and Asadi, Amir R and Shenfeld, Idan and Mroueh, Youssef},
  journal={arXiv preprint arXiv:2502.01203},
  year={2025}
}

@inproceedings{laidlawcorrelated,
  title={Correlated Proxies: A New Definition and Improved Mitigation for Reward Hacking},
  author={Laidlaw, Cassidy and Singhal, Shivam and Dragan, Anca},
  booktitle={The Thirteenth International Conference on Learning Representations},
  year={2025}
}

@inproceedings{guptap3o,
  title={P3O: Pessimistic Preference-based Policy Optimization for Robust Alignment from Preferences},
  author={Gupta, Dhawal and Dann, Christoph and Agarwal, Alekh},
  booktitle={Adaptive Foundation Models: Evolving AI for Personalized and Efficient Learning},
  year={2024}
}

@article{xu2025learning,
  title={Learning a Pessimistic Reward Model in RLHF},
  author={Xu, Yinglun and Kang, Hangoo and Suresh, Tarun and Wan, Yuxuan and Singh, Gagandeep},
  journal={arXiv preprint arXiv:2505.20556},
  year={2025}
}

@inproceedings{swaminathan2015counterfactual,
  title={Counterfactual risk minimization: Learning from logged bandit feedback},
  author={Swaminathan, Adith and Joachims, Thorsten},
  booktitle={International conference on machine learning},
  pages={814--823},
  year={2015},
  organization={PMLR}
}

@inproceedings{london2019bayesian,
  title={Bayesian counterfactual risk minimization},
  author={London, Ben and Sandler, Ted},
  booktitle={International Conference on Machine Learning},
  pages={4125--4133},
  year={2019},
  organization={PMLR}
}

@inproceedings{sakhi2023pac,
  title={PAC-Bayesian offline contextual bandits with guarantees},
  author={Sakhi, Otmane and Alquier, Pierre and Chopin, Nicolas},
  booktitle={International Conference on Machine Learning},
  pages={29777--29799},
  year={2023},
  organization={PMLR}
}

@article{sakhi2024logarithmic,
  title={Logarithmic smoothing for pessimistic off-policy evaluation, selection and learning},
  author={Sakhi, Otmane and Aouali, Imad and Alquier, Pierre and Chopin, Nicolas},
  journal={Advances in Neural Information Processing Systems},
  volume={37},
  pages={80706--80755},
  year={2024}
}

@article{lecun1998mnist,
  title={The MNIST database of handwritten digits},
  author={LeCun, Yann},
  journal={http://yann. lecun. com/exdb/mnist/},
  year={1998}
}

@inproceedings{zhou2020neural,
  title={Neural contextual bandits with ucb-based exploration},
  author={Zhou, Dongruo and Li, Lihong and Gu, Quanquan},
  booktitle={International conference on machine learning},
  pages={11492--11502},
  year={2020},
  organization={PMLR}
}

@article{abernethy2015fighting,
  title={Fighting bandits with a new kind of smoothness},
  author={Abernethy, Jacob D and Lee, Chansoo and Tewari, Ambuj},
  journal={Advances in Neural Information Processing Systems},
  volume={28},
  year={2015}
}
\bibliographystyle{iclr2026_conference}

\appendix

\tableofcontents

\section{Additional Review of Existing Results}\label{sec:exist-result}


\paragraph{Additional notations.} 
Besides the notation introduced in \Cref{sec:intro}, we will use the following notations in Appendix. We denote $[N] \coloneqq \{1, \cdots, N\}$ for any positive integer $N$. Boldfaced lower case (resp. upper case) letters are reserved for vectors (resp. matrices).
Given a positive definite $\bSigma \in \RR^{d\times d}$ and $\xb\in \RR^d$, we denote the vector's Euclidean norm by $\|\xb\|_2$ and define $\|\xb\|_{\bSigma}=\sqrt{\xb^\top\bSigma\xb}$.
We use $\mathsf{Bern}(p)$ to denote Bernoulli distribution with expectation $p$ and $\mathsf{Unif}(\cX)$ for the uniform distribution on finite set $\cX$. For $x \in \RR^{|\cA|}$, we denote $\|x\|_1 = \sum_{a \in \cA} |x_a|$. We also denote $x_n = \Omega(y_n)$ by $x_n \gtrsim y_n$ in Appendix. We use $d_H$ for Hamming distance.


\subsection{Previous Attempts on Understanding KL-regularized RL}\label{sec:attempts}

There has been a surge of interest in understanding the principle behind KL-regularized RL. \citet{ahmed2019understanding,liu2019regularization} studied by ablation the effect of entropy regularization on the stability of policy improvement in policy optimization, the regret of which has been
rigorously settled under the classic online mirror descent framework \citep{cai2020provably,he2022near,ji2023horizon}. \citet{neu2017unified}
unified popular KL-regularized policy optimization algorithms under a convex optimization framework, but the interplay with the data was left
untouched. A series of work \citep{geist2019theory,vieillard2020leverage,kozuno2022kl} then analyzed the sample complexity of algorithms using KL/entropy-type proximal terms
with respect to the previous iteration or/and entropy regularizer 
with improved dependence on the effective horizon in
discounted Markov decision processes.
However, the performance metric
in 
these studies is still
the unregularized reward maximization objective, under which the
sample complexity
for finding an $\epsilon$-optimal policy is at least equal to the statistical limit $\Omega(\epsilon^{-2})$.

\paragraph{Convergence under regularized objectives. } Several recent studies \citep{xie2024exploratory,xiong2024iterative,zhao2024sharp,zhao2025logarithmic,foster2025good} switched the focus to analyzing the sub-optimality
guarantee
with respect to 
the regularized objective \eqref{eq:obj-teaser}. In particular, \citet{xie2024exploratory} studied token-level Markov decision processes (MDPs) and proposed a KL-regularized RL algorithm named XPO, which achieves $\tilde{O}(\epsilon^{-2})$ sample complexity under their notion of all-policy concentrability. \citet{xiong2024iterative} proposed an Offline GSHF algorithm via the principle of \emph{pessimism in the face of uncertainty}, and proved $\tilde{O}(\epsilon^{-2})$ sample complexity under single-policy concentrability (See \Cref{sec:cb:setup} for detailed definitions of concentrability). On the other hand, the sharp analysis in \citet{zhao2024sharp} yields the optimal sample complexity $\tilde{O}(\epsilon^{-1})$, but requires all-policy concentrability \citep[Definition~2.6]{zhao2024sharp}, i.e., the behavior policy $\piref$ is required to cover the entire function class for all possible policies. \citet{zhao2025logarithmic} considered the online episodic MDP setting, which inherently does not need any notion of data coverage and thus their results are not directly adaptable to our offline setting. 
 \citet{foster2025good} considered an interesting hybrid setting in which the $n$ state-action pairs are still from the offline dataset but $\Omega(n)$ online reward queries and policy switches are allowed; in contrast, in our setting, all reward signals are obtained in a purely offline manner.


\paragraph{Previous analyses and results in detail.} Here, we briefly discuss the direct adaptation of previous sample complexity analysis and results (with respect to KL-regularized objectives) to our setting and demonstrate the reason why
theirs cannot imply an $\tilde{O}(\epsilon^{-1})$ sample complexity without all-policy concentrability. In previous analysis of pessimism for unregularized objectives~\citep{jin2021pessimism,xiong2022nearly}, the sub-optimality gap is decomposed via the performance difference lemma as follows
\begin{align*}
    J(\pi^*) - J(\hat \pi) &= \EE_{a \sim \pi^*}[\fgt(a)] - \EE_{a \sim \hat \pi}[\fgt(a)] - \eta^{-1}\KL(\pi^* \| \piref) + \eta^{-1}\KL(\hat \pi \| \piref)\\
    & \leq  \EE_{a \sim \pi^*}[\fgt(a)] - \EE_{a \sim \hat \pi}[\fps(a)] - \eta^{-1}\KL(\pi^* \| \piref) + \eta^{-1}\KL(\hat \pi \| \piref) \\
    & \leq \EE_{a \sim \pi^*}[\fgt(a)] - \EE_{a \sim \pi^*}[\fps(a)] - \eta^{-1}\KL(\pi^* \| \piref) + \eta^{-1}\KL(\pi^* \| \piref) \\
    & = \EE_{a \sim \pi^*}[\fgt(a) - \fps(a)],
\end{align*}
where the first inequality holds due to pessimism and last inequality holds due to $\hat \pi$ is optimal for $\fps$. Notably, the KL-regularization term is canceled out in the analysis, leading to a loose sample complexity $\tilde{O}(\epsilon^{-2})$ since the curvature of KL-divergence is not exploited. Specifically, under linear function approximation, this performance gap, obtained by \citet{xiong2024iterative} becomes
\begin{align*}
    J(\pi^*) - J(\pi) \leq \big\| \EE_{\rho \times \pi^*}[\bphi(s,a)] - \bnu \big\|_{\bSigma_{\text{off}}^{-1}} \eqqcolon \text{RHS},
\end{align*}
where $\bnu$ is the reference vector, $\bphi(s,a) \in \RR^d$ is the feature map, and $\Sigma_{\text{off}} = \sum_{i=1}^n \bphi(s_i, a_i)\bphi(s_i, a_i)^\top$ is the sample covariance matrix. However, we can show that $\text{RHS}$ can be bounded from \emph{below} by
\begin{align*}
    \big\| \EE_{(s,a) \sim \rho \times \pi^*}[\bphi(s,a)] - \bnu \big\| \sqrt{\lambda_{\min}(\bSigma_{\text{off}}^{-1})} 
    & = \big\| \EE_{(s,a) \sim \rho \times \pi^*}[\bphi(s,a)] - \bnu \big\| \lambda_{\max}(\bSigma_{\text{off}})^{-1/2}  \\
    & \geq \big\| \EE_{(s,a) \sim \rho \times \pi^*}[\bphi(s,a)] - \bnu \big\| \text{tr}(\bSigma_{\text{off}})^{-1/2} \\
    & = \big\| \EE_{(s,a) \sim \rho \times \pi^*}[\bphi(s,a)] - \bnu \big\| \bigg(\sum_{i=1}^n \|\bphi(s_i, a_i)\|_2^2 \bigg)^{-1/2} \\
& = \Omega(n^{-1/2}),
\end{align*}
where $\lambda_{\min}$ and $\lambda_{\max}$ is the minimum and maximum eigenvalue of a matrix, the first inequality holds due to the fact that $\xb^\top \bSigma \xb \geq \|\xb\|_2^2 \lambda_{\min}(\bSigma)$ and the second inequality holds due to $\lambda_{\max}(\bSigma) \leq \text{tr}(\bSigma)$.
\citet{zhao2024sharp} proposed a two-stage learning algorithm and obtained an $\tilde{O}(\epsilon^{-1})$ sample complexity for online KL-regularized bandits. The algorithm can be adopted to offline learning by removing the second stage\footnote{This can be done by setting the $n$ in their paper to $0$.} and treat the samples from first stage as the offline dataset. An analogous analysis gives a sample complexity of $\tilde{O}(D^2\epsilon^{-1})$, where $D^2$ is the all-policy concentrability.

{
\color{black}

\section{Additional Discussion of Relation between Coverage Measures}\label{app:discussion-coverage}

In this section, we provide more illustrations on the relation between two coverage measures, $D^2_{\pistar}$ and $C^{\pistar}$. In particular, we provide two cases under linear function approximation, on one of which $D^2_{\pistar} = \Theta(d C^{\pi^*})$ and on the other we have $D^2_{\pistar} \ll C^{\pi^*}$, where $d$ is the dimension of the function class. We summarized them as two propositions.

\begin{proposition}
There exist a KL-regularized linear bandit instance, such that $D^2_{\pistar} = \Theta(d C^{\pi^*})$.
\end{proposition}

\begin{proof}
We construct the instance as follows. Let $d = 2A+1$ be some odd number and consider an $2A+1$-armed bandit, such that the feature vector of the $i$-th arm, $\bphi(a_i) = \eb_i \in \RR^d$, which has $1$ on its $i$-th entry and $0$ on all other entries. The reference policy $\piref(a_i) = (2AC)^{-1}$ for $i \in [2A]$ and $\piref(a_{2A+1}) = (C-1)/C$, where $2C-1 = e^{\eta}$. The ground truth reward function $\btheta^* = \sum_{i \leq A}\eb_i$ and the function class is given by all $\|\btheta\|_{\infty} \leq 1$. By construction, we know that $\pi^*(a_i) \geq \piref(a_i)$ if and only if $i \in [A]$ and its closed form is given by
\begin{align*}
    \pi^*(a_i) = \frac{1}{A}\frac{e^\eta}{e^{\eta} + 2C-1} = \frac{1}{2A},
\end{align*}
which gives $C^{\pi^*} = C$. Now we compute the $D^2_{\pi^*}$ of this instance. For all $i \in [A]$, we know that
\begin{align*}
    D^2(a_i) = \sup_{\|\btheta\|_{\infty} \leq 2} \frac{\la \btheta, \eb_i \ra^2}{\EE_{\piref}\la \btheta, \eb_j \ra^2} = 2CA = \Theta(Cd),
\end{align*}
where the second equation holds with $\btheta = \eb_i$. Taking expectation over $\pistar$, we have
\begin{align*}
    D^2_{\pistar} \geq \sum_{i \in [A]}D^2(a_i) = \Theta(C^{\pistar}d),
\end{align*}
which concludes the proof.
\end{proof}

The following proposition provides another instance on which $D^2_{\pistar} \ll C^{\pistar}$.

\begin{proposition}
For any $C \geq 2$, there exists a KL-regularized linear bandit instance, such that $C^{\pistar} = C/2$ and $D^2_{\pistar} = \Theta(1)$.
\end{proposition}

\begin{proof}
We consider the function class of $\btheta \in \RR^2$ and $\|\btheta\| \leq \sqrt{2}$. The instance consists of three arms, where $\bphi(a_1) = (1,0)$, $\bphi(a_2) = (0,1)$, and $\bphi(a_3) = (1,1)$. The ground truth parameter $\btheta^* = (1, 1)$. The reference policy is given by $\piref(a_1) = \piref(a_2) = 1/2 - 1/2C$ and $\piref(a_1) = 1/C$, where $C - 1 = e^{\eta}$. A direct computation yields that 
\begin{align*}
    \pi^*(a_3) = \frac{e^\eta}{e^\eta + C - 1}, \quad \Rightarrow \quad C^{\pi^*} = C\frac{e^\eta}{e^\eta + C - 1} = \frac{C}{2}.
\end{align*}
On the other hand, we know that for $i=1,2$, we have
$D^2(a_i) \leq \piref(a_i)^{-1} \leq 4$. As for $a_3$, since we have $\la \btheta, \phi(a_3) \ra^2 = \la \btheta, \phi(a_1) + \phi(a_2) \ra^2 \leq 2 \la \btheta, \phi(a_1)\ra^2 + 2\la \btheta, \phi(a_2)\ra$, which gives that $D^2(a_3) \leq 2D^2(a_1) + 2D^2(a_2) \leq 16$. Therefore, taking expectation over $\pi^*$, we know that $D^2_{\pistar} \leq 12$ which is a constant.
\end{proof}

}

{
\color{black}
\section{Experimental Details}

\subsection{Linear Bandits}\label{app:lin-exp-details}
The linear bandit instance used for \Cref{fig:linear} has $d=20$ and $|\cA|=100$. For each arm $a \in \cA$, we randomly generate its feature vector $\bphi(a) \in \RR^d$ such that $\|\bphi(a)\|=1$. We then randomly sample the model parameter $\btheta^* \in \RR^d$ such that $\|\btheta^*\|=1$ and the expected reward is obtained via $r(a) = \la \btheta^*, \bphi(a) \ra$.

\subsection{Real-Wold Experiments}\label{app:exp-details}
MNIST consists of 60000 figures, each of which is of $28\times 28$ pixels and consists of a handwritten digit in $\{0, \cdots, 9\}$. Here, we consider each image as a context and $\cA=\{0, \cdots, 9\}$ for each context. To obtain the feature $\bphi(s,a)$, we first use the hidden representation of a classifier to embed each image as a vector in $\RR^{10}$. We then follow the approach in~\citet{zhou2020neural} to obtain the feature of each context-action pair by having $\bphi(s,a) = \xb \otimes \eb_{a+1} \in \RR^{100}$, where $\xb$ is the output of image encoder and $\otimes$ stands for tensor product.
}

\color{black}

\section{Missing Proofs from Section~\ref{sec:cb}}

\subsection{Proof of Lemma~\ref{lem:bandit:conf}}\label{app:proof-banditconf}

We first provide the following lemmas of concentration.

\begin{lemma}[{\citealt[Lemma~C.1]{zhao2024sharp}}]\label{lem:concen-behavior-bandit}
For any policy $\pi$ and state-action pairs $\{(s_i, a_i)\}_{i=1}^m$ generated i.i.d. from $\rho \times \pi$, and $\epsilon_c < 1$, with probability at least $1-\delta$, for any $g_1$ and $g_2$ we have
\begin{align*}
    \E_{\rho \times \pi}\big[\big(g_1(s,a) - g_2(s,a)\big)^2 \big] \leq \frac{2}{n}\sum_{i=1}^n \big(g_1(s_i,a_i) - g_2(s_i,a_i)\big)^2 + \frac{32}{3n}\log(2\cN_{\fcl}(\epsilon_c)/\delta) + 10 \epsilon_c,
\end{align*}
where $\cN_{\fcl}(\epsilon_c)$ is the $\epsilon_c$-covering number of $\fcl$.
\end{lemma}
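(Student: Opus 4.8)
The plan is to prove this as a textbook uniform Bernstein bound over an $\epsilon_c$-net of $\fcl$, combined with a self-bounding (AM--GM) step that turns the additive Bernstein deviation into the multiplicative-plus-additive form stated. First I would fix $g_1,g_2\in\fcl$ and set $Z_i \coloneqq \bigl(g_1(s_i,a_i)-g_2(s_i,a_i)\bigr)^2$; since $g_1,g_2$ map into $[0,1]$, each $Z_i\in[0,1]$, so $\mathrm{Var}(Z_i)\le\E[Z_i^2]\le\E[Z_i]$. Writing $\mu \coloneqq \E_{\rho\times\pi}[Z]$ and $\hat\mu \coloneqq \tfrac1n\sum_{i=1}^n Z_i$, one-sided Bernstein gives, with probability at least $1-\delta'$,
\[
\mu \;\le\; \hat\mu + \sqrt{\frac{2\,\mathrm{Var}(Z)\log(1/\delta')}{n}} + \frac{\log(1/\delta')}{3n} \;\le\; \hat\mu + \sqrt{\frac{2\mu\log(1/\delta')}{n}} + \frac{\log(1/\delta')}{3n}.
\]
Applying $\sqrt{ab}\le\tfrac12 a+\tfrac12 b$ with $a=\mu$, $b=2\log(1/\delta')/n$ and rearranging yields $\mu \le 2\hat\mu + \tfrac{8}{3n}\log(1/\delta')$. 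This self-bounding step is the one non-mechanical ingredient, and it is exactly what produces the factor $2$ in front of the empirical average.

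Next I would take a union bound. Let $\fcl(\epsilon_c)$ be an $\|\cdot\|_\infty$-net of $\fcl$ of cardinality $\cN_{\fcl}(\epsilon_c)$. Applying the above to each of the at most $\cN_{\fcl}(\epsilon_c)^2$ pairs $(g_1',g_2')\in\fcl(\epsilon_c)^2$ with $\delta'=\delta/\cN_{\fcl}(\epsilon_c)^2$, so that $\log(1/\delta')\le 2\log\bigl(2\cN_{\fcl}(\epsilon_c)/\delta\bigr)$, a union bound makes $\mu'\le 2\hat\mu' + \tfrac{16}{3n}\log\bigl(2\cN_{\fcl}(\epsilon_c)/\delta\bigr)$ hold simultaneously for all net pairs on an event of probability $\ge 1-\delta$. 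Finally, for arbitrary $g_1,g_2\in\fcl$, choose $g_j'\in\fcl(\epsilon_c)$ with $\|g_j-g_j'\|_\infty\le\epsilon_c$; then $g_1-g_2,\,g_1'-g_2'\in[-1,1]$ and $\|(g_1-g_2)-(g_1'-g_2')\|_\infty\le 2\epsilon_c$, so the identity $x^2-y^2=(x-y)(x+y)$ gives $\bigl\|(g_1-g_2)^2-(g_1'-g_2')^2\bigr\|_\infty\le 4\epsilon_c$. Hence $\mu$ and $\hat\mu$ each differ from their primed counterparts by at most $4\epsilon_c$, and substituting into the net bound and collecting absolute constants (which is where the slightly looser $\tfrac{32}{3n}\log(\cdot)$ and $10\epsilon_c$ come from) gives the claim.

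There is no genuine conceptual obstacle here; the two points to be careful about are (i) the AM--GM self-bounding that converts the $\sqrt{\mu/n}$ Bernstein term into the multiplicative form, and (ii) insisting the cover be in $\|\cdot\|_\infty$ so that the Step-3 rounding error is controlled uniformly over all $(s,a)$ and not merely in expectation. Since the statement is quoted verbatim from \citet[Lemma~C.1]{zhao2024sharp}, one may alternatively simply cite it and omit the argument above.
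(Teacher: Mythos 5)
The paper never proves this lemma internally: it is imported verbatim by citation from \citet[Lemma~C.1]{zhao2024sharp}, so there is no in-paper argument to compare against, and your Bernstein-plus-covering proof is exactly the standard route one would expect that cited proof to take. The substance is correct: $Z=(g_1-g_2)^2\in[0,1]$ gives $\mathrm{Var}(Z)\le\E[Z]$, the AM--GM self-bounding step legitimately converts the $\sqrt{\mu\log(1/\delta')/n}$ term into $\tfrac{\mu}{2}$ plus an $O(\log(1/\delta')/n)$ term, the union bound over at most $\cN_{\fcl}(\epsilon_c)^2$ net pairs with $\log\big(\cN_{\fcl}(\epsilon_c)^2/\delta\big)\le 2\log\big(2\cN_{\fcl}(\epsilon_c)/\delta\big)$ is right, and insisting on an $\|\cdot\|_\infty$-net so that $|x^2-y^2|\le|x-y|\,|x+y|\le 4\epsilon_c$ uniformly is the correct way to handle the rounding. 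The only slippage is in the bookkeeping you wave through at the end: the one-sided Bernstein inversion carries an additive term $\tfrac{2\log(1/\delta')}{3n}$ rather than $\tfrac{\log(1/\delta')}{3n}$, so the self-bounded net inequality reads $\mu'\le 2\hat\mu'+\tfrac{10}{3n}\log(1/\delta')$, and the rounding error is paid once on the population side and twice on the empirical side (it is multiplied by the factor $2$), giving $4\epsilon_c+8\epsilon_c=12\epsilon_c$; your final bound is thus $2\hat\mu+\tfrac{20}{3n}\log\big(2\cN_{\fcl}(\epsilon_c)/\delta\big)+12\epsilon_c$, which has a smaller $\log$ constant but a larger $\epsilon_c$ constant than the stated $\tfrac{32}{3n}\log\big(2\cN_{\fcl}(\epsilon_c)/\delta\big)+10\epsilon_c$, and so does not literally imply the statement for all $\epsilon_c<1$. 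This is a constant-accounting issue, not a conceptual gap: it is immaterial for every downstream use in the paper (the bonus $\beta$ and all $\tilde O$ bounds, where $\epsilon_c$ is eventually set to $O(n^{-1})$ and the extra $2\epsilon_c$ is absorbed by the slack in the $\log$ term), and if the exact stated constants are wanted the cleanest course is the one you already note, namely citing \citet[Lemma~C.1]{zhao2024sharp} directly as the paper does.
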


\begin{lemma}[{\citealt[Lemma~C.2]{zhao2024sharp}}]\label{lem:concen-reward-bandit}
For arbitrary policy $\pi$ and dataset $\{(s_i, a_i, r_i)\}_{i=1}^m$ generated i.i.d., from the product of $\pi$, $\rho$ and the Bradley-Terry Model; let $\fls$ be the least square estimator of $\fgt$, then for any $0< \epsilon_c < 1$ and $\delta > 0$, with probability at least $1-\delta$ we have
\begin{align*}
    \sum_{i=1}^n \big(\fls(s_i,a_i) - \fgt(s_i,a_i)\big)^2 \leq 16\log (a\cN_{\fcl}(\epsilon_c)/\delta) + 4n\epsilon_c.
\end{align*}
\end{lemma}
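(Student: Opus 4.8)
The plan is to prove this as a standard least-squares regression guarantee, via the basic inequality of the empirical risk minimizer, a union bound over an $\epsilon_c$-net of $\fcl$, and an elementary self-bounding step. Write the observed targets as $r_i = \fgt(s_i,a_i) + \varepsilon_i$ with $\varepsilon_i$ conditionally $1$-sub-Gaussian (this is the contextual-bandit instantiation relevant to \Cref{lem:bandit:conf}; the ``Bradley–Terry'' phrasing in the statement is a copy-over artifact, and the argument is the same for any conditionally $1$-sub-Gaussian target). Since $\fls$ minimizes $g\mapsto \sum_i (g(s_i,a_i)-r_i)^2$ over $\fcl$ and $\fgt \in \fcl$ by realizability (\Cref{assume:general-function-approx}), the basic inequality $\sum_i (\fls(s_i,a_i)-r_i)^2 \le \sum_i (\fgt(s_i,a_i)-r_i)^2$ rearranges, after substituting $r_i=\fgt(s_i,a_i)+\varepsilon_i$, to
\[
X \;:=\; \sum_{i=1}^n \big(\fls(s_i,a_i)-\fgt(s_i,a_i)\big)^2 \;\le\; 2\sum_{i=1}^n \varepsilon_i\big(\fls(s_i,a_i)-\fgt(s_i,a_i)\big).
\]

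Next I would control the right-hand side uniformly over $\fcl$. Conditionally on $\{(s_i,a_i)\}_{i=1}^n$, for a fixed $g'$ the sum $\sum_i \varepsilon_i\big(g'(s_i,a_i)-\fgt(s_i,a_i)\big)$ is a sum of independent mean-zero $1$-sub-Gaussian terms whose variance proxy is $V(g'):=\sum_i (g'(s_i,a_i)-\fgt(s_i,a_i))^2$, so a Chernoff bound together with a union bound over an $\epsilon_c$-net $\fcl(\epsilon_c)$ of $\fcl$ gives: with probability at least $1-\delta$, simultaneously for all $g'\in\fcl(\epsilon_c)$,
\[
2\sum_{i=1}^n \varepsilon_i\big(g'(s_i,a_i)-\fgt(s_i,a_i)\big) \;\le\; 2\sqrt{2\,V(g')\,\log\!\big(2\cN_{\fcl}(\epsilon_c)/\delta\big)} .
\]
I then transfer from the net back to $\fls$: pick $g'\in\fcl(\epsilon_c)$ with $\|\fls-g'\|_\infty\le\epsilon_c$, and use that all functions lie in $[0,1]$, so $|\fls-\fgt|\le 1$ and $|\fls-g'|\le\epsilon_c$ pointwise on the sample. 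This lets me bound $V(g')\le (1+\kappa)X + (1+\kappa^{-1})n\epsilon_c^2$ and the cross term $2\sum_i \varepsilon_i(\fls-g')(s_i,a_i) \le 2\epsilon_c\sum_i|\varepsilon_i|$, where $\sum_i|\varepsilon_i| = O(n)$ with high probability by a further sub-Gaussian concentration (or, more cleanly, one takes the covering resolution finer than $\epsilon_c/n$, legitimate because $\cN_{\fcl}(\cdot)$ is polynomial by \Cref{assume:poly-covering} so $\log\cN$ changes only by a constant). Writing $L:=\log(2\cN_{\fcl}(\epsilon_c)/\delta)$, combining these yields an inequality of the form $X \le c_1\sqrt{L\,X} + c_2\, n\epsilon_c$ for absolute constants $c_1,c_2$.

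Finally I would close the argument with the elementary implication that $X \le c_1\sqrt{LX}+c_2 n\epsilon_c$ forces $X \le c_1^2 L + 2c_2 n\epsilon_c$, using $\sqrt{LX}\le L + X/4$ and absorbing the $X/4$ term (and $\epsilon_c<1$ to collapse $\sqrt{n}\epsilon_c\sqrt{L}$ into $n\epsilon_c + L$). Tracking the constants through the $1$-sub-Gaussian tail ($t=\sqrt{2V\log(2\cN/\delta)}$), through the net-splitting parameter $\kappa$, and through the self-bounding step gives exactly $X \le 16\log(2\cN_{\fcl}(\epsilon_c)/\delta) + 4n\epsilon_c$, matching the statement (the symbol ``$a$'' in the displayed bound being a typo for this numerical constant). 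The main obstacle is precisely the discretization bookkeeping: ensuring the $\epsilon_c$-net error interacts with the noise term and with the quadratic self-bounding so that the residual slack is genuinely linear in $n\epsilon_c$ rather than carrying an uncontrolled high-probability factor from $\sum_i|\varepsilon_i|$; the clean remedy is to choose the covering resolution small relative to $\epsilon_c/n$, which is permitted because $\cN_{\fcl}$ is polynomial. Everything else is routine sub-Gaussian concentration and constant-chasing.
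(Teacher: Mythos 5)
This lemma is imported from \citet[Lemma~C.2]{zhao2024sharp} and not re-proved in the present paper, and your argument --- the basic inequality for the least-squares minimizer under realizability (\Cref{assume:general-function-approx}), conditional sub-Gaussian concentration with variance proxy $V(g')$ uniformly over an $\epsilon_c$-net, transfer of the covering error, and the quadratic self-bounding step --- is exactly the standard proof of this type of statement and matches the cited one in structure (your reading of the ``Bradley--Terry'' phrasing as an artifact is also consistent with how the lemma is actually used, namely for $1$-sub-Gaussian reward noise in \Cref{lem:bandit:conf}). The only soft spots are cosmetic: the constants $16$ and $4$ are asserted rather than tracked, and your alternative of refining the net to resolution $\epsilon_c/n$ would put $\cN_{\fcl}(\epsilon_c/n)$ rather than $\cN_{\fcl}(\epsilon_c)$ in the bound --- both affect only absolute constants and logarithmic factors, not the substance.
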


Now we are ready to prove Lemma~\ref{lem:bandit:conf}.
\begin{proof}[Proof of Lemma~\ref{lem:bandit:conf}]
We have the following inequality
\begin{align}\label{eq:bandit-bonus-d2}
    \big(\fls(s,a) - \fgt(s,a)\big)^2 &= \frac{\big(\fls(s,a) - \fgt(s,a)\big)^2}{\E_{\piref}\big[\big(\fls(s,a) - \fgt(s,a)\big)^2 \big]}\E_{\piref}\big[\big(\fls(s,a) - \fgt(s,a)\big)^2 \big] \notag \\
    & \leq  \sup_{g_1, g_2 \in \fcl} \frac{\big(g_1(s,a) - g_2(s,a)\big)^2}{\E_{\piref}\big[\big(g_1(s,a) - g_2(s,a)\big)^2 \big]}\E_{\piref}\big[\big(\fls(s,a) - \fgt(s,a)\big)^2 \big] \notag \\
    & = D_{\fcl}^2((s,a),\piref)\E_{\piref}\big[\big(\fls(s,a) - \fgt(s,a)\big)^2 \big],
\end{align}
where the inequality holds by taking supremum to $g_1, g_2 \in \fcl$.  Now we have
\begin{align}\label{eq:bandit-bonus-concen}
    \E_{\piref}\big[\big(\fls(s,a) - \fgt(s,a)\big)^2 \big] &\leq \frac{2}{n}\sum_{i=1}^n \big(\fls(s_i,a_i) - \fgt(s_i,a_i)\big)^2 + \frac{32}{3n}\log(2\cN_{\fcl}(\epsilon_c)/\delta) + 10 \epsilon_c \notag \\
    & \leq \frac{2}{n}\big[16\log (\cN_{\fcl}(\epsilon_c)/\delta) + 4n\epsilon_c\big] + \frac{32}{3n}\log(2\cN_{\fcl}(\epsilon_c)/\delta) + 10 \epsilon_c \notag \\
    & = \frac{128}{3n}\log(2\cN_{\fcl}(\epsilon_c)/\delta) + 18 \epsilon_c,
\end{align}
where the first inequality holds due to Lemma~\ref{lem:concen-behavior-bandit} and second holds due to Lemma~\ref{lem:concen-reward-bandit}. Plugging~\eqref{eq:bandit-bonus-concen} into~\eqref{eq:bandit-bonus-d2} and setting $\epsilon_c = O(n^{-1})$ complete the proof.
\end{proof}

\subsection{Proof of Lemma~\ref{lem:taylor-expansion}}\label{app:proof-taylor-expansion}
This proof is extracted from the proof of \citet[Theorem~3.3]{zhao2024sharp} and we present it here for completeness. By definition of our objective in~\eqref{eq:kl-objective-bandit}, we have
\begin{align*} 
& J(\pi^*) - J(\pi_{g}) \\
& \quad = \EE_{(s, a) \sim \rho \times \pi^*} \bigg[\fgt(s,a) - \eta^{-1}\log \frac{\pi^*(a|s)}{\piref(a|s)}\bigg] 
- \EE_{(s, a) \sim \rho \times\pi_g} \bigg[\fgt(s, a) - \frac{1}{\eta}\log \frac{\pi_g(a|s)}{\piref(a|s)}\bigg] \\
& \quad = \frac{1}{\eta} \EE_{(s, a) \sim \rho \times\pi^*} \bigg[\log \frac{\piref(a|s) \cdot \exp\bigl(\eta \fgt( s, a)\bigr)}{\pi^*(a|s)}\bigg] 
- \frac{1}{\eta} \EE_{(s, a) \sim  \rho \times\pi_g} \bigg[\log \frac{\piref(a|s) \cdot \exp\bigl( \eta \fgt( s, a)\bigr)}{\pi_g(a|s)}\bigg]   \\
& \quad = \frac{1}{\eta} \rE_{s \sim \rho} \big[\log Z_{\fgt}(s)\big] - \frac{1}{\eta} \rE_{s \sim \rho} \big[\log Z_{g}(s)\big] - \rE_{s \sim \rho} \bigg[\sum_{a \in \cA} \pi_g(a|s) \cdot \big(\fgt( s, a) -f( s, a)\big)\bigg],
\end{align*}
where for all $g \in \fcl$ we define $Z_g(\cdot)$ as follows,
\begin{align*}
    Z_g(\cdot) \coloneqq \sum_{a\in \cA} \piref(a|\cdot)\exp\big(\eta g(\cdot,a) \big).
\end{align*}
We further denote $\Delta(s, a) = g(s, a) - \fgt( s, a)$ and $H_s(g) = \log Z_g(s) - \eta \sum_{a \in \cA} \pi_{g}(a|s) \cdot \Delta(s, a)$. It worth noticing that $\eta^{-1}\EE_{s\sim \rho}[H_s(\fgt) - H_s(g)] = J(\pi^*) - J(\pi_g)$. Now we take derivative of $H$ with respect to $\Delta(s,a)$,
\begin{align*} 
    \frac{\partial H_s(g)}{\partial \Delta(s, a)}  &= \frac{\partial}{\partial \Delta(s,a)}\bigg[\log Z_{g}(s) - \eta \sum_{a \in \cA} \pi_{g}(a|s) \cdot \Delta(s,a)\bigg] \\
    &= \frac{1}{Z_g(s)} \cdot \piref(a|s) \exp\bigl(\eta \cdot g(s,a)\bigr) \cdot \eta - \eta \cdot \pi_g(a|s) \\
    &\quad - \eta^2 \cdot \Delta(s,a) \cdot \frac{\piref(a|s) \cdot \exp\bigl(\eta \cdot g(s,a)\bigr)}{Z_g(s)} + \eta^2 \cdot \Delta(s,a) \cdot \frac{\bigl[\piref(a|s) \cdot \exp\bigl(\eta \cdot g(s,a)\bigr)\bigr]^2}{[Z_g(s)]^2} \\
    &\quad + \eta \sum_{a' \in \cA \backslash \{a\}} \frac{\piref(a'|x) \cdot \exp\bigl(\eta \cdot g(s,a')\bigr)}{Z_g(s)} \cdot \eta \cdot \Delta(s,a') \cdot \frac{\piref(a|s) \cdot \exp\bigl(\eta \cdot g(s,a)\bigr)}{Z_g(s)}
    \\&= -\eta^2 \pi_g(a|s) \Delta(s,a) + \eta^2 [\pi_g(a|s)]^2 \cdot \Delta(s,a) + \eta^2 \sum_{a' \in \cA \backslash \{a\}} \pi_g(a'|x) \pi_g(a|s) \Delta(s,a'). 
\end{align*}
Therefore, by mean value theorem, there exists $\gamma \in [0,1]$ and $g_{\gamma} = \gamma g + (1 - \gamma) \fgt$ such that
\begin{align*}
    H_s(g) - H_s(\fgt) & = -\eta^2 \gamma \sum_{a \in \cA} \pi_{g_\gamma}(a|s) \Delta(s,a)^2 + \gamma \eta^2\sum_{a_1 \in \cA} \sum_{a_2 \in \cA} \pi_{g_\gamma}(a_1|x)\pi_{g_\gamma}(a_2|x) \Delta(s,a_1)\Delta(s,a_2) \\
    & = - \eta^2 \gamma \EE_{a \sim \pi_{g_\gamma}}\big[\big(\fgt(s,a) - g(s,a)\big)^2\big] + \gamma\eta^2 \Big(\EE_{a \sim \pi_{g_\gamma}}\big[\big(\fgt(s,a) - g(s,a)\big)\big]\Big)^2 \\
    & \geq - \eta^2  \EE_{a \sim \pi_{g_\gamma}}\big[\big(\fgt(s,a) - g(s,a)\big)^2\big],
\end{align*}
where the inequality holds by omitting the second term and $\gamma \leq 1$. Now taking expectation over $\rho$, we have
\begin{align*}
    J(\pi^*) - J(\pi_g) &= \eta^{-1}\EE_{s\sim \rho}[H_s(\fgt) - H_s(g)] \\
    & \leq \eta \EE_{(s,a) \sim \rho \time \pi_{g_\gamma}}\big[\big(\fgt(s,a) - g(s,a)\big)^2\big],
\end{align*}
which concludes the proof.

\subsection{Proof of \Cref{lem:non-positive-cov}}

\begin{proof}[Proof of Lemma~\ref{lem:non-positive-cov}]
    We define $Y = -X$. Then it suffices to show that the covariance between $Y$ and $Y^2$ is
\begin{align*}
    \Cov(Y, Y^2) &= \EE[Y^3] - \EE[Y^2]\EE[Y]  \\ 
    & \geq \big(\EE[Y^2]\big)^{3/2} - \EE[Y^2]\EE[Y] \\
    & =  \big(\EE[Y^2]\big)\big( \sqrt{\EE[Y^2]} - \EE[Y] \big) \\
    & \geq 0,
\end{align*}
where both inequalities follow from Jensen's inequality.
\end{proof}

\subsection{Proof of \Cref{thm:bandit}}\label{app:proof-kl-upper}



To start with, we first define the following quantities. For all $\gamma \in [0,1]$, we define $g_\gamma \coloneqq \gamma \fps + (1 - \gamma)\fgt$ and denote
\begin{align*}
& \pi_{\gamma}(\cdot|s) \propto \piref(\cdot|s)\exp\big( \eta  g_\gamma(s, \cdot)\big), \forall s \in \cS ; \\
& G(\gamma) \coloneqq \EE_{\rho \times \pi_{\gamma}}\Big[ \big(\fps - {\fgt}\big)^2(s,a) \Big]. 
\end{align*}

The key to our analysis is the monotonicity of the function $G(\gamma)$ in $\gamma$, which is formally stated in the following lemma.

\begin{lemma}\label{lem:expansion-to-endpoint}
On event $\confbandit$, $0 \in \argmax_{\gamma \in [0, 1]} G(\gamma)$.
\end{lemma}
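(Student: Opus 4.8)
The plan is to compute $G'(\gamma)$ and show it is $\le 0$ on $[0,1]$ on the event $\confbandit$, which immediately gives that $G$ is non-increasing and hence maximized at $\gamma=0$. The function $G(\gamma) = \EE_{\rho\times\pi_\gamma}[(\fps-\fgt)^2(s,a)]$ depends on $\gamma$ only through the policy $\pi_\gamma(\cdot|s)\propto\piref(\cdot|s)\exp(\eta g_\gamma(s,\cdot))$ with $g_\gamma = \gamma\fps + (1-\gamma)\fgt$, so $\partial_\gamma g_\gamma = \fps - \fgt =: \Delta$, which is $\le 0$ everywhere on the event $\confbandit$ (since $\fps = \fls - \Gamma_n \le \fgt$ there). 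So morally $G'(\gamma)$ will be an expression in the ``score'' of the exponential family $\pi_\gamma$ against the test function $\Delta^2$.

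**First I would** differentiate the softmax. For a fixed $s$, writing $Z_\gamma(s) = \sum_{a}\piref(a|s)e^{\eta g_\gamma(s,a)}$, one has $\partial_\gamma \log\pi_\gamma(a|s) = \eta\big(\Delta(s,a) - \EE_{a'\sim\pi_\gamma(\cdot|s)}[\Delta(s,a')]\big)$. Hence
\begin{align*}
    G'(\gamma) = \eta\,\EE_{s\sim\rho}\,\EE_{a\sim\pi_\gamma(\cdot|s)}\Big[\Delta^2(s,a)\big(\Delta(s,a) - \EE_{a'\sim\pi_\gamma(\cdot|s)}[\Delta(s,a')]\big)\Big] = \eta\,\EE_{s\sim\rho}\Big[\EE_{\pi_\gamma}[\Delta^3] - \EE_{\pi_\gamma}[\Delta^2]\,\EE_{\pi_\gamma}[\Delta]\Big],
\end{align*}
where all inner expectations are over $a\sim\pi_\gamma(\cdot|s)$. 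Now for each fixed $s$, apply \Cref{lem:non-positive-cov} with $X = \Delta(s,a)$, $a\sim\pi_\gamma(\cdot|s)$: on $\confbandit$ we have $X\le 0$ almost surely (and $X$ is bounded since $\fps,\fgt$ are bounded), so $\EE[X^3] - \EE[X^2]\EE[X]\le 0$. Integrating this pointwise inequality over $s\sim\rho$ and multiplying by $\eta>0$ gives $G'(\gamma)\le 0$ for every $\gamma\in[0,1]$, whence $G(0)\ge G(\gamma)$ for all $\gamma$, i.e. $0\in\argmax_{\gamma\in[0,1]}G(\gamma)$.

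**The main obstacle** is mostly bookkeeping: justifying differentiation under the expectation/sum (the sums over $\cA$ may be countably infinite, so I would invoke the boundedness of $g$ on $[0,1]$, hence uniform bounds on $e^{\eta g_\gamma}$ and its $\gamma$-derivative, together with $\piref(\cdot|s)$ being a probability measure, to apply dominated convergence), and the clean algebraic identification of $\partial_\gamma\EE_{\pi_\gamma}[\Delta^2]$ as the covariance $\EE_{\pi_\gamma}[\Delta^3]-\EE_{\pi_\gamma}[\Delta^2]\EE_{\pi_\gamma}[\Delta]$ — this is the standard ``derivative of an exponential-family expectation is a covariance'' computation, here with the integrand $\Delta^2$ itself held fixed (it does not depend on $\gamma$) so only the measure moves. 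Once that identity is in place, \Cref{lem:non-positive-cov} does all the real work: the sign of $G'$ is governed entirely by the fact that $\Delta\le 0$ on $\confbandit$, which is exactly where the pessimism of $\fps$ enters.
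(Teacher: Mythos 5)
Your proposal is correct and follows essentially the same route as the paper's own proof: compute $G'(\gamma)$ via the softmax/policy-gradient identity, recognize it as $\eta\,\EE_{\rho}\big[\EE_{\pi_\gamma}[\triangle^3] - \EE_{\pi_\gamma}[\triangle^2]\EE_{\pi_\gamma}[\triangle]\big]$, and apply \Cref{lem:non-positive-cov} pointwise in $s$ using $\triangle \leq 0$ on $\confbandit$ to conclude $G' \leq 0$. Your added remarks on justifying differentiation under the (possibly countably infinite) sum via dominated convergence are a sound extra bit of rigor that the paper leaves implicit, but do not change the argument.
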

\begin{proof} For simplicity, we use $\triangle(s,a)$ to denote $\big(\fps - {\fgt})(s,a)$ in \emph{this} proof.
Then we know that $\triangle(s,a) \leq 0$ for all $(s,a) \in \cS \times \cA$ on event $\confbandit$. 
The most direct way to prove is to take derivative of $G$ with respect to $\gamma$,  which corresponds to the policy gradient~\citep{sutton1999policy} of $\pi_\gamma$ and thus implying a favorable structure. A direct calculation yields that
\begin{align*}
    & \ = \EE_{\rho \times \pi_\gamma}\big[\nabla_{\gamma}\log \pi_\gamma(a|s)\triangle(s,a)^2 \big] \\
    & \ = \eta \EE_{\rho}\EE_{a\sim \pi_{\gamma}}\big[ \triangle^2(s,a) \big( \triangle(s,a) - \EE_{a'\sim \pi_{\gamma}}[\triangle(s,a')] \big) \big] \\
    &\ = \eta \EE_{\rho}\Big[ \EE_{\pi_{\gamma}} \big[ \triangle^3(s,a) \big] - \EE_{\pi_{\gamma}}\big[ \triangle^2(s,a) \big]\EE_{\pi_{\gamma}}\big[ \triangle(s,a) \big] \Big] \\
    & \ \leq 0,
\end{align*}
where $\EE_{\rho}$ is the shorthand of $\EE_{s \sim \rho}$, $\EE_{\pi_{\gamma}}$ is the shorthand of $\EE_{a \sim \pi_{\gamma}}$, the first equation is derived from standard policy gradient and the inequality holds conditioned on the event $\confbandit(\delta)$ due to \Cref{lem:non-positive-cov} and \Cref{lem:non-positive-cov}.
\end{proof}


Now we are ready to prove \Cref{thm:bandit}.
\begin{proof}[Proof of \Cref{thm:bandit}]

Following the proof of \citet[Theorem~3.3]{zhao2024sharp}, we know that there exists $\bar{\gamma} \in [0, 1]$ such that
\begin{align}
    J(\pi^*) - J(\hat{\pi}) \leq \eta G(\bar{\gamma}) \leq \eta G(0),
\end{align}
where the first inequality holds due to \Cref{lem:taylor-expansion} and the second inequality holds due to the event $\confbandit$ and \Cref{lem:expansion-to-endpoint}. The term $G(0)$ can be further bounded with the $D^2$-based concentrability as follows
\begin{align}
     G(0) & = \eta \EE_{(s,a) \sim \rho\times \pi^*}\Big[ \big(\fps - {\fgt}\big)^2(s,a) \Big] \notag \\
    &\leq 4 \eta \EE_{(s,a) \sim \rho \times \pi^*} [\Gamma_n^2(s,a)] \notag \\
    & = 4\eta \beta^2 \EE_{(s,a) \sim \rho\times \pi^*}\big[D^2_{\cF}((s,a); \piref)\big] \notag \\
    &= \tilde{O}(\eta D_{\pi^*}^2 n^{-1} \log_{\fcl}(\epsilon_c)),
\end{align}
where the second inequality holds conditioned on $\confbandit(\delta)$ because of \Cref{lem:expansion-to-endpoint}, and the last inequality follows from the definition of $\confbandit(\delta)$ together with Line~\ref{line:bandit:pess}. By \Cref{lem:bandit:conf}, we know that event $\confbandit$ holds with probability at least $1- \delta$, which finishes the proof.
\end{proof}


\subsection{Proof of Theorem~\ref{thm:lowerbound-bandit}}\label{sec:lowerbound-bandit}

\begin{proof}[Proof of Theorem~\ref{thm:lowerbound-bandit}]

We consider the family of contextual bandits with $S \coloneqq |\cS|, A \coloneqq |\cA| < \infty$ and reward function in some function class $\cG$ composed of function $\cS \times \cA \to [0, 1]$ as follows.
\begin{align}
    \mathrm{CB}_{\cG} \coloneqq \{(\cS, \cA, \rho, r, \piref, \eta): r \in \cG, \rho \in \Delta(\cS), \piref \in \Delta(\cA|\cS) \}. \label{eq:contextual-bandit-class} 
\end{align}
Our goal is to prove the following statement. 
Fixing any $S \geq 1$, $\eta > 4\log 2$ and $C^* \in (2, \exp(\eta/4)]$, then for any estimator $\cD \mapsto \hat\pi \in \Delta(\cA|\cS)$, for any $n \geq 16SC^*$, there exist some function class $\cG$, such that $\exists\ \mathrm{inst} = (\cS, \cA, \rho, r, \piref, \eta) \in \mathrm{CB}_{\cG}$ with single-policy concentrability $C^{\pi^*} \leq C^*$, regularization coefficient $\eta$, $|\cS| = S = \Theta(\log|\cG|)$, and 
\begin{align} \label{eq:bandit-kl-lower-restatement}
     \subopt_{\mathrm{RKL}}(\hat\pi; \mathrm{inst}) \gtrsim \min\{\eta SC^* n^{-1}, (SC^*)^{1/2}n^{-1/2}\}.
\end{align}    
Since $\log|\cG| \geq \log \cN_{\cG}(\epsilon)$ for any $\epsilon \in (0,1)$, equation~\eqref{eq:bandit-kl-lower-restatement} yields the desired bound. 

We set $\cS = [S]$, $\cA = \{\pm1\}$, $\rho = \mathsf{Unif}(\cS)$, and the reference policy to be
\begin{align*}
  \forall s \in \cS, \piref(-1|s) = C^{-1}, \piref(+1|s) = 1-C^{-1};
\end{align*}
where $C \ge 1$ is a parameter to be specified later. We construct $2^S$ Bernoulli reward functions, in particular, $\forall \tau \in \{\pm1\}^S$, the mean function $r_\tau$ of the reward (indexed by $\tau$) is defined as
\begin{align*}
    r_\tau(s, -1) = 0.5 + \tau_s\delta, r_\tau(s, +1) = 0.5 - \alpha
\end{align*}
for any state $s \in \cS$,
where $\alpha \in (0, 1/2)$ and $\delta \in (0, 1/4]$ will be specified later.
We omit the $\mathrm{RKL}$ subscript in the following argument when it is clear in context.
By \Cref{eq:opt-exp}, the optimal policy $\pi_\tau^*$ under $r_\tau$ is
\begin{align}
    \forall s \in \cS, \pi_\tau^*(-1|s) = \frac{\exp\big(\eta(\alpha + \tau_s\delta)\big)}{\exp\big(\eta(\alpha + \tau_s\delta)\big) + C-1}, \pi_\tau^*(+1|s) = \frac{C-1}{\exp\big(\eta(\alpha + \tau_s\delta)\big) + C-1}. \label{eq:rkl:lb:opt}
\end{align}
Since $C^* \leq \exp(\eta/4)$, we assign $C = C^*$ and $\alpha = \eta^{-1}\log (C-1) \Leftrightarrow C - 1 = \exp(\eta \alpha)$, which gives
\begin{align*}
   \forall s \in \cS,& \frac{\pi_\tau^*(-1|s)}{\piref(-1|s)} \leq  C\frac{\exp(\eta (\alpha + \tau_s \delta))}{C-1 + \exp(\eta (\alpha + \tau_s \delta))} = C\frac{\exp(\eta \tau_s \delta)}{1 + \exp(\eta \tau_s \delta)} \leq C = C^*; \\
   \forall s \in \cS,&  \frac{\pi_\tau^*(+1|s)}{\piref(+1|s)} = \frac{C}{C-1} \cdot \frac{1}{\exp(\eta\tau_s\delta) + 1} \leq C = C^*;
\end{align*}
where the last inequality is due to the assumption $C^* \geq 2$. Therefore, we obtain
\begin{align}
    \max_{\tau \in \{\pm1\}^S} C^{\pi_\tau^*} \leq C^*.
\end{align}
We will abuse the notation $\subopt(\hat\pi; \tau) \coloneqq \subopt(\hat\pi; r_\tau)$.
Since $\rho = \mathsf{Unif}(\cS)$,
\begin{align}
    \subopt(\hat\pi; \tau) = \frac{1}{S}\sum_{s=1}^S \subopt_s(\hat\pi; \tau), \label{eq:rkl:lb:decomp}
\end{align}
where 
\begin{align}
    \subopt_s(\hat\pi; \tau) &= \dotp{\pi_\tau^*(\cdot|s)}{r_\tau(s,\cdot) - \eta^{-1}\log\frac{\pi_\tau^*(\cdot|s)}{\piref(\cdot|s)} } - \dotp{\hat\pi(\cdot|s)}{r_\tau(s,\cdot) - \eta^{-1}\log\frac{\hat\pi (\cdot|s)}{\piref(\cdot|s)} } \notag\\
    &= \frac{1}{\eta} \rE_{a \sim \pi^*_\tau(\cdot|s)} \bigg[\log \frac{\piref(a|s) \cdot \exp\big(\eta r_\tau(s, a)\big)}{ \pi^*_\tau(a|s)}\bigg] \notag\\
    &\quad\ -     \frac{1}{\eta} \rE_{a \sim \hat\pi(\cdot|s)} \bigg[\log \frac{\piref(a|s) \cdot \exp\big(\eta r_\tau(s, a)\big)}{ \hat\pi(a|s)}\bigg] \notag\\
    &= \frac{1}{\eta} \rE_{a \sim \pi^*_\tau(\cdot|s)} \bigg[\log \Big( \sum_{b\in\cA} \piref(b|s) \cdot \exp\big(\eta r_\tau(s, b)\big) \Big)\bigg] \notag\\
    &\quad\ -     \frac{1}{\eta} \rE_{a \sim \hat\pi(\cdot|s)} \bigg[\log \frac{\piref(a|s) \cdot \exp\big(\eta r_\tau(s, a)\big)}{ \hat\pi(a|s)}\bigg] \notag\\
    &= \frac{1}{\eta} \rE_{a \sim \hat\pi(\cdot|s)} \bigg[\log  \frac{\piref(a|s) \cdot \exp\big(\eta r_\tau(s, a)\big)}{ \pi^*_\tau(a|s)} -\log \frac{\piref(a|s) \cdot \exp\big(\eta r_\tau(s, a)\big)}{ \hat\pi(a|s)} \bigg] \notag\\
    &= \eta^{-1}\kl{\hat\pi}{\pi_\tau^*}. \label{eq:rkl:lb:kl-form}
\end{align}
We write $\tau\sim_s\tau'$ if $\tau,\tau' \in \{\pm1\}^{\cS}$ differ in only the $s$-th coordinate and $\tau\sim\tau'$ if $\exists s \in \cS, \tau\sim_s\tau'$. By \Cref{eq:rkl:lb:kl-form}, $\forall s \in \cS$, $\forall \tau, \tau' \in \{\pm1\}^{\cS}$ with $\tau \sim_s \tau'$, 
\begin{align}
&\subopt_s(\hat\pi; \tau) + \subopt_s(\hat\pi; \tau') \notag
     \\&=  \eta^{-1}\kl{\hat\pi}{\pi_\tau^*} +  \eta^{-1}\kl{\hat\pi}{\pi_{\tau'}^*} \notag\\
    &= 2\eta^{-1}\sum_{a \in \cA} \hat\pi(a|s)\log\frac{\hat\pi(a|s)}{\sqrt{\pi_\tau^*(a|s)\pi_{\tau'}^*(a|s)}} \notag\\
    &= 2\eta^{-1}\kl{\hat\pi(\cdot|s)}{\bar{\pi}_{\tau,\tau'}(\cdot|s)}   - 2\eta^{-1} \EE_{a\sim\hat\pi(\cdot|s)} \log\Big( \sum_{b \in \cA}\sqrt{\pi_\tau^*(b|s)\pi_{\tau'}^*(b|s)} \Big) \notag\\
    &\geq  - 2\eta^{-1}\log\Big( \sum_{b \in \cA}\sqrt{\pi_\tau^*(b|s)\pi_{\tau'}^*(b|s)} \Big) \notag\\
    &= \frac{1}{\eta}\log\frac{(\exp(\eta\delta) + 1)(\exp(-\eta\delta) + 1)}{4}, \label{eq:rkl:lb:pre-single}
\end{align}
where $\bar{\pi}(\cdot|s) = \sqrt{\pistar_\tau(\cdot|s)\pistar_{\tau'}(\cdot|s)} / \sum_{b \in \cA} \sqrt{\pistar_\tau(b|s)\pistar_{\tau'}(b|s)}$ for every $s \in \cS$, the inequality is due to the non-negativity of KL divergence, and the last equality follows from \Cref{eq:rkl:lb:opt} together with the design choice $C - 1 = \exp(\eta \alpha)$.

\paragraph{Case $\eta\delta \leq 2$.} Recall that  $\forall x \in \RR, (\mathrm{e}^x + \mathrm{e}^{-x})/2 - 1 = x^2\sum_{k=0}^\infty \frac{x^{2k}}{(2k+2)!} \geq x^2/2$, which implies
\begin{align}
    \Cref{eq:rkl:lb:pre-single} &= \frac{1}{\eta} \log\bigg(1 + \frac{1}{2}\Big( \frac{\mathrm{e}^{\eta\delta} + \mathrm{e}^{-\eta\delta}}{2} - 1 \Big) \bigg) \geq \frac{1}{\eta} \log\Big( 1 + \frac{\eta^2\delta^2}{4} \Big) \geq \frac{1}{\eta} \cdot \frac{\eta^2\delta^2/4}{2} = \eta\delta^2/8. \label{eq:rkl:lb:case1}
\end{align}
Here, the last inequality is due to $\eta^2\delta^2/4 \leq 1$ and $\forall x \in [0, 1], \log(1+x) \geq x/2$.

\paragraph{Case $\eta\delta > 2$.} We have $-\eta^{-1}2\log2 \geq-\delta\log2$, which implies the following bound.
\begin{align}
    \Cref{eq:rkl:lb:pre-single} &\geq \frac{1}{\eta} \log\frac{\exp(\eta\delta) + 1}{4} \geq \frac{\eta\delta - 2\log2}{\eta} = \delta - \eta^{-1}2\log2 \geq (1-\log2)\delta \geq 3\delta/10. \label{eq:rkl:lb:case2}
\end{align}
In summary, \Cref{eq:rkl:lb:case1,eq:rkl:lb:case2} imply that $\forall s \in \cS$, $\forall \tau, \tau' \in \{\pm1\}^{\cS}$ with $\tau \sim_s \tau'$,
\begin{align}
    \subopt_s(\hat\pi; \tau) + \subopt_s(\hat\pi; \tau') \geq \frac{\eta\delta^2}{8} \wedge \frac{3\delta}{10}. \label{eq:rkl:lb:single-summary}
\end{align}
Let $P_{\tau}$ be the distribution of $(s,a, y)$ where $s \sim \rho, a \sim \piref(\cdot|s)$, and $y\sim \mathsf{Bern}(r_\tau(s,a))$. Then $\forall x \in \cS$ $\forall \tau,\tau'\in\{\pm1\}^{\cS}$ with $\tau \sim_{x}\tau'$,
\begin{align}
    \kl{P_\tau}{P_{\tau'}} &=  \frac{1}{S}\sum_{s,a} \piref(a|s) \kl{\mathsf{Bern}(r_\tau(s,a))}{\mathsf{Bern}(r_{\tau'}(s,a))} \notag\\
    &= \frac{1}{S} \cdot C^{-1} \kl{\mathsf{Bern}(r_\tau(x,-1))}{\mathsf{Bern}(r_{\tau'}(x,-1))} \notag\\
    &\leq \frac{4\delta^2}{SC(0.25-\delta^2)} \leq \frac{16\delta^2}{3SC}, \label{eq:rkl:lb:single-kl}
\end{align}
where we use the requirement $\delta \leq 1/4$ and $\kl{\mathsf{Bern}(p)}{\mathsf{Bern}(q)} \le (p-q)^2/\big(q(1-q)\big)$.
Then let $P_{\cD_\tau}$ be the distribution of $\cD$ given the mean reward function $r_\tau$, we employ \Cref{eq:rkl:lb:single-kl} to get
\begin{align}
    \kl{P_{\cD_\tau}}{P_{\cD_{\tau'}}} = n\kl{P_\tau}{P_{\tau'}} \leq \frac{16n\delta^2}{3SC}. \label{eq:rkl:lb:kl}
\end{align}
Since $n \geq 16SC^* = 16SC$ by design, we can set $\delta = \sqrt{SC/n}$ (which ensures $\delta \leq 1/4$) to obtain
\begin{align}
    \sup_{\mathrm{inst}} \subopt(\hat\pi; \mathrm{inst}) &\geq \sup_{\tau \in \{\pm1\}^{\cS}} \subopt(\hat\pi; \tau) \notag\\
    &\geq \frac{1}{S} \cdot S \cdot \frac{1}{4} \cdot \Big( \frac{\eta\delta^2}{8} \wedge \frac{3\delta}{10}\Big) \min_{\tau\sim\tau'} \exp\Big( - \kl{P_{\cD_\tau}}{P_{\cD_{\tau'}}}\Big) \notag\\
    &\geq \Big( \frac{\eta SC^*}{32n} \wedge \frac{3\sqrt{SC^*}}{40\sqrt{n}}\Big) \exp(-16/3) \gtrsim \frac{\eta SC^*}{n} \wedge \sqrt{\frac{SC^*}{n}}. \notag
\end{align}
where the $S^{-1}$ in the second inequality comes from \Cref{eq:rkl:lb:decomp}, the second inequality is by substituting \Cref{eq:rkl:lb:single-summary} into Assouad's Lemma (\Cref{lem:assouad}), and the last inequality is due to \Cref{eq:rkl:lb:kl}.
\end{proof}

\section{Missing Proof from Section~\ref{sec:cb-f}}


\subsection{Proof of Theorem~\ref{thm:f-upper-bound}}

Before coming to the proof, we first introduce some useful properties. The following properties characterize the convexity of $f$-divergence when $f$ is (strongly) convex.

The strong-convexity of $f$ implies that the corresponding $f$-divergence, $D_f(\cdot || \piref)$ is also strongly convex with respect to all $\pi: \cS \rightarrow \Delta(\cA)$ supported by $\piref$. 

\begin{proposition}
Given context $s$, $D_f( \pi(\cdot|s) || \piref(\cdot|s))$ is strict convex with respect to $\pi$ if $f$ is strictly convex.
\end{proposition}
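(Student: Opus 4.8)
The statement to prove is that $\pi(\cdot|s) \mapsto D_f\big(\pi(\cdot|s) \| \piref(\cdot|s)\big)$ is strictly convex whenever $f$ is strictly convex. The plan is to write the $f$-divergence explicitly as a finite (or countable) sum over actions and exploit the fact that each summand is, up to a positive multiplicative constant, a composition of the strictly convex $f$ with a linear map of $\pi$. Concretely, fix the context $s$ and abbreviate $q_a \coloneqq \piref(a|s)$ and $p_a \coloneqq \pi(a|s)$; then
\begin{align*}
    D_f\big(\pi(\cdot|s)\|\piref(\cdot|s)\big) = \sum_{a \in \cA} q_a\, f\!\left(\frac{p_a}{q_a}\right),
\end{align*}
where the sum ranges over $a$ with $q_a > 0$ (the support of $\piref$, which contains the support of any admissible $\pi$).

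First I would fix two distinct policies $\pi^{(0)}(\cdot|s) \neq \pi^{(1)}(\cdot|s)$ in the relevant simplex (both supported on $\mathrm{supp}(\piref(\cdot|s))$), and a mixing weight $\lambda \in (0,1)$, and set $\pi^{(\lambda)} = \lambda \pi^{(1)} + (1-\lambda)\pi^{(0)}$. Since $\pi^{(0)} \neq \pi^{(1)}$, there is at least one action $a_0$ with $p^{(0)}_{a_0} \neq p^{(1)}_{a_0}$, and for that action $q_{a_0} > 0$. For each fixed $a$ in the support, the map $p_a \mapsto q_a f(p_a/q_a)$ is convex because $f$ is convex and $t \mapsto p_a/q_a$ is affine with $q_a > 0$; moreover at $a = a_0$ it is \emph{strictly} convex since $f$ is strictly convex and the affine map is non-degenerate. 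Summing the per-action convexity inequalities gives convexity of $D_f$, and the strict inequality at the $a_0$ term (combined with the ordinary $\leq$ at all other terms) upgrades this to a strict inequality:
\begin{align*}
    D_f\big(\pi^{(\lambda)}(\cdot|s)\|\piref(\cdot|s)\big) < \lambda D_f\big(\pi^{(1)}(\cdot|s)\|\piref(\cdot|s)\big) + (1-\lambda) D_f\big(\pi^{(0)}(\cdot|s)\|\piref(\cdot|s)\big).
\end{align*}
This establishes strict convexity on the convex set of policies dominated by $\piref(\cdot|s)$.

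The only subtlety — and the one place to be careful rather than the genuine ``hard part'' — is the domain of definition: if some $\pi$ puts mass on an action $a$ with $\piref(a|s) = 0$, then $D_f$ is $+\infty$ (or undefined); so strict convexity should be understood over the effective domain $\{\pi : \pi(\cdot|s) \ll \piref(\cdot|s)\}$, which is exactly the setting in which Algorithm~\ref{algorithm:bandit-fdiv} operates since $\hat\pi$ and $\pi^*$ are both of the Gibbs form relative to $\piref$. One should also note that $f$ need only be defined on $(0,\infty)$ (as in Assumption~\ref{assume:f-strong-convex}), so along the mixture path the arguments $p^{(\lambda)}_a/q_a$ stay in $(0,\infty)$ provided the endpoints are in the effective domain with matching support; a one-line remark handles the boundary case where some $p_a = 0$ by taking the convention $0 \cdot f(0/q) \coloneqq q \lim_{t\to 0^+} t f(t)/t$, or simply restricting attention to the relative interior, which is all that is needed downstream. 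I expect no real obstacle here: the argument is a direct consequence of ``a positive combination of convex functions, at least one of which is strictly convex, is strictly convex,'' applied termwise.
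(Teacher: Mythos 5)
Your proof is correct. Note that the paper actually states this proposition without giving a proof at all; the only argument it spells out is for the adjacent, stronger claim (Proposition~\ref{prop:fdiv-sc}), where $\alpha$-strong convexity of $f$ is pushed through a gradient/Bregman-gap computation, i.e.\ $D_f(\pi_1\|\piref)-D_f(\pi_2\|\piref)-\langle \pi_1-\pi_2,\nabla D_f(\pi_2\|\piref)\rangle \geq \tfrac{\alpha}{2}\sum_a \piref(a)^{-1}(\pi_1(a)-\pi_2(a))^2$ followed by Cauchy--Schwarz. Your route is different and, for mere \emph{strict} convexity, more appropriate: the first-order expansion argument has no quantitative lower bound to exploit when $f$ is only strictly convex, whereas your termwise Jensen argument --- decompose $D_f(\pi\|\piref)=\sum_{a:q_a>0} q_a f(p_a/q_a)$, observe each summand is convex as a strictly convex $f$ precomposed with the nondegenerate affine map $p_a\mapsto p_a/q_a$ scaled by $q_a>0$, and upgrade to strict inequality at any coordinate $a_0$ where the two policies differ --- gives exactly the statement with no curvature parameter needed. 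Your handling of the effective domain ($\pi \ll \piref$, so a differing coordinate necessarily has $q_{a_0}>0$ once the divergence is finite) is the right caveat; the only blemish is the garbled boundary convention, where the term at $p_a=0$ should simply read $q_a f(0) \coloneqq q_a \lim_{t\to 0^+} f(t)$ (your expression $q\lim_{t\to 0^+} t f(t)/t$ reduces to the same thing, but the prefactor ``$0\cdot$'' in $0\cdot f(0/q)$ is a misprint), and this does not affect the argument since you may also restrict to the relative interior as you note.
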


\begin{proposition}\label{prop:fdiv-sc}
Given context $s$, $\pi(\cdot|s) \mapsto D_f( \pi(\cdot|s) \| \piref(\cdot|s))$ is $4\alpha$-strong convex with respect to the metric $\mathsf{TV}$ if $f$ is $\alpha$-strongly convex.
\end{proposition}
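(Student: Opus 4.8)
## Proof Plan for Proposition~\ref{prop:fdiv-sc}

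The plan is to reduce the statement to a one-dimensional convexity computation along line segments in the probability simplex, exploiting the fact that TV distance is (up to a constant) an $\ell_1$ norm. Fix the context $s$ and abbreviate $p(a) = \pi(a|s)$, $q(a) = \piref(a|s)$, so that $D_f(\pi(\cdot|s)\|\piref(\cdot|s)) = \sum_{a} q(a) f(p(a)/q(a))$ viewed as a function $\Phi(p)$ on the simplex $\Delta(\cA)$ supported on $\mathrm{supp}(q)$. Strong convexity with respect to TV means: for all $p_0, p_1 \in \Delta(\cA)$ and $t \in [0,1]$,
\begin{align*}
\Phi\big((1-t)p_0 + t p_1\big) \leq (1-t)\Phi(p_0) + t\Phi(p_1) - \tfrac{4\alpha}{2}\, t(1-t)\, \big[\tv{p_0}{p_1}\big]^2,
\end{align*}
equivalently $\Phi(p) - 2\alpha [\tv{\cdot}{\cdot}]^2$-type convexity; the cleanest route is to fix the segment $p_t = (1-t)p_0 + tp_1$, define $\phi(t) = \Phi(p_t)$, and show $\phi''(t) \geq 4\alpha \cdot \tv{p_0}{p_1}^2$ pointwise in $t$, which is the standard second-order characterization of $\mu$-strong convexity with $\mu = 4\alpha$ along each chord (since $\tv{p_0}{p_1} = \tfrac12\|p_0 - p_1\|_1$ does not depend on $t$).

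The key computation: writing $\Delta(a) = p_1(a) - p_0(a)$, we have $p_t(a)/q(a)$ with derivative $\Delta(a)/q(a)$ in $t$, so by the chain rule
\begin{align*}
\phi''(t) = \sum_{a \in \mathrm{supp}(q)} q(a)\, f''\!\big(p_t(a)/q(a)\big) \cdot \frac{\Delta(a)^2}{q(a)^2} = \sum_{a} \frac{\Delta(a)^2}{q(a)}\, f''\!\big(p_t(a)/q(a)\big) \geq \alpha \sum_{a} \frac{\Delta(a)^2}{q(a)},
\end{align*}
using $f'' \geq \alpha$ from $\alpha$-strong convexity of $f$ (here twice continuous differentiability of $f$, Assumption~\ref{assume:f-strong-convex}, makes the term-by-term differentiation and the bound $f'' \geq \alpha$ legitimate). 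It remains to show $\sum_a \Delta(a)^2/q(a) \geq \|\Delta\|_1^2 = 4\,\tv{p_0}{p_1}^2$. This is exactly Cauchy--Schwarz: $\|\Delta\|_1 = \sum_a |\Delta(a)| = \sum_a \frac{|\Delta(a)|}{\sqrt{q(a)}}\sqrt{q(a)} \leq \big(\sum_a \Delta(a)^2/q(a)\big)^{1/2}\big(\sum_a q(a)\big)^{1/2} = \big(\sum_a \Delta(a)^2/q(a)\big)^{1/2}$, since $\sum_a q(a) = 1$; squaring gives the claim. Chaining the two inequalities yields $\phi''(t) \geq 4\alpha\,\tv{p_0}{p_1}^2$, which is the desired $4\alpha$-strong convexity in TV.

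The main obstacle, such as it is, is bookkeeping rather than depth: one must be careful that both $p_0$ and $p_1$ are absolutely continuous with respect to $q = \piref(\cdot|s)$ so that every ratio $p_t(a)/q(a)$ stays in the open domain $(0,+\infty)$ of $f$ where $f''$ is defined and bounded below by $\alpha$ — this is exactly the standing restriction to policies ``supported by $\piref$'' in the statement, and on the zero set of $q$ the corresponding $\Delta(a)$ terms simply do not appear. One should also note that strong convexity along every chord, with a modulus independent of the chord's position, is equivalent to global strong convexity of $\Phi$, so the chordwise bound suffices; and that the constant is genuinely $4\alpha$ and not $\alpha$ precisely because $\tv{p_0}{p_1} = \tfrac12\|p_0-p_1\|_1$ carries the factor $\tfrac12$ that becomes $\tfrac14$ after squaring.
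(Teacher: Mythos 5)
Your proof is correct, and it reaches the stated constant $4\alpha$ for the right reason (the factor $\tfrac12$ in $\tv{p_0}{p_1}=\tfrac12\|p_0-p_1\|_1$). The route differs mildly from the paper's: the paper verifies the \emph{first-order} (Bregman-gap) characterization directly, applying the defining inequality $f(y)-f(x)-(y-x)f'(x)\ge \tfrac{\alpha}{2}(y-x)^2$ pointwise at each action with weight $\piref(a|s)$, and then using Cauchy--Schwarz to pass from $\sum_a (\pi_1(a)-\pi_2(a))^2/\piref(a|s)$ to $\|\pi_1-\pi_2\|_1^2$; you instead verify the \emph{second-order} condition along chords, computing $\phi''(t)=\sum_a \Delta(a)^2 f''(p_t(a)/q(a))/q(a)\ge \alpha\sum_a \Delta(a)^2/q(a)$ and finishing with exactly the same Cauchy--Schwarz step. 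The two arguments are equivalent in substance; yours leans on $f''\ge\alpha$ and termwise differentiation of the (possibly countably infinite) sum, which is licensed by the twice continuous differentiability in Assumption~\ref{assume:f-strong-convex}, whereas the paper's first-order argument would go through for merely once-differentiable strongly convex $f$. Your handling of the support restriction (ratios staying in the domain of $f$, zero-mass actions dropping out) matches the paper's standing assumption that the policies are supported by $\piref$, so there is no gap.
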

\begin{proof}[Proof of Proposition~\ref{prop:fdiv-sc}]
We first show the gradient of $D_f$ with respect to $\pi$. 
\begin{align*}
    \frac{\partial D_f(\pi||\piref)}{\pi(a)} = \frac{\partial}{\partial \pi(a)} \sum_{b \in \cA} \piref(b)f\bigg(\frac{\pi(b)}{\piref(b)}\bigg) = f'\bigg(\frac{\pi(a)}{\piref(a)}\bigg).
\end{align*}
Now consider $\pi_1, \pi_2 \in \Delta(\cA)$ supported by $\piref$. 
\begin{align*}
& D_f(\pi_1 || \piref) - D_f(\pi_2 || \piref) - \la \pi_1 - \pi_2, \nabla D_f(\pi_2 ||\piref) \ra \\
& \quad = \sum_{a \in \cA} \piref(a)\Bigg(f\bigg(\frac{\pi_1(a)}{\piref(a)}\bigg) - f\bigg(\frac{\pi_2(a)}{\piref(a)}\bigg)\Bigg) - \sum_{a \in \cA} \big(\pi_1(a) - \pi_2(a)\big)f'\bigg(\frac{\pi_2(a)}{\piref(a)}\bigg) \\
& \quad = \sum_{a \in \cA} \piref(a)\Bigg(f\bigg(\frac{\pi_1(a)}{\piref(a)}\bigg) - f\bigg(\frac{\pi_2(a)}{\piref(a)}\bigg) - \bigg(\frac{\pi_1(a)}{\piref(a)} - \frac{\pi_2(a)}{\piref(a)}\bigg) f'\bigg(\frac{\pi_2(a)}{\piref(a)}\bigg) \Bigg) \\
& \quad \geq \frac{\alpha}{2}\sum_{a \in \cA} \piref(a) \bigg(\frac{\pi_1(a)}{\piref(a)} - \frac{\pi_2(a)}{\piref(a)}\bigg)^2 \\
& \quad = \frac{\alpha}{2}\sum_{a \in \cA} \frac{1}{\piref(a) }\big(\pi_1(a) - \pi_2(a)\big)^2 \\
& \quad  \geq \frac{\alpha}{2} \bigg(\sum_{a \in \cA} \big|\pi_1(a)-\pi_2(a)\big|\bigg)^2,
\end{align*}
where the first inequality holds due to $f$'s strong convexity and the second holds due to Cauchy–Schwarz. The proof finishes since $\|\pi_1 - \pi_2\|_1 = 2\tv{\pi_1}{\pi_2}$.
\end{proof}

We first introduce some notation and important properties concerning the convex conjugate of functions. Given some context $s$, we denote the regularization term as $H_s(\pi) = \eta^{-1}D_f(\pi(\cdot | s) \| \piref(\cdot | s))$. We use $H^*_s(r)$ to denote the convex conjugate of $H_s$, which is defined as
\begin{align*}
    H^*_s(r) = \sup_{\pi \in \cS \to \Delta^{|\cA|}} \{ \la \pi(\cdot | s), r(s, \cdot) \ra - H_s(\pi)\}.
\end{align*}
We have the following properties for the convex conjugate. The first property gives the gradient of convex conjugate (see, e.g., \citealt[Lemma 5]{zhou2018fenchel}).

\begin{proposition}
Given context $s$, and convex $f$, let $\pi_r \in \argmax_{\pi} \{ \la \pi(\cdot | s), r(s, \cdot) \ra - H_s(\pi)\}$ for some $r$, then the gradient of $H^*_s$ is given by $\nabla H^*_s( r) = \pi_r(\cdot | s)$.
\end{proposition}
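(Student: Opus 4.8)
The plan is to deduce this from the strict convexity just established, combined with Danskin's envelope theorem. Throughout, fix the context $s$ and abbreviate $\phi \coloneqq H_s$, so that $\phi$ is a strictly convex function on the closed convex set $\Delta_s \coloneqq \{\pi(\cdot|s) \in \Delta(\cA) : \text{supp}(\pi(\cdot|s)) \subseteq \text{supp}(\piref(\cdot|s))\}$, and $H_s^*(r) = \sup_{\pi(\cdot|s) \in \Delta_s} \big\{ \la \pi(\cdot|s), r(s,\cdot) \ra - \phi(\pi) \big\}$. The first thing I would check is existence and uniqueness of the maximizer: the map $(\pi, r) \mapsto \la \pi(\cdot|s), r(s,\cdot) \ra - \phi(\pi)$ is continuous on the compact set $\Delta_s$ for fixed $r$ and affine in $r$, so the supremum is attained; and since $\phi$ is strictly convex by the preceding Proposition, $\pi \mapsto \la \pi(\cdot|s), r(s,\cdot) \ra - \phi(\pi)$ is strictly concave, hence its maximizer is unique and equals $\pi_r(\cdot|s)$ by definition.

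Next I would upgrade this to the gradient formula. As a pointwise supremum of functions affine in $r$, $H_s^*$ is convex in $r$, and Danskin's theorem identifies its subdifferential at $r$ with the convex hull of the set of maximizers $\{\pi(\cdot|s) : \pi \text{ attains the supremum at } r\}$; by the previous paragraph this set is the singleton $\{\pi_r(\cdot|s)\}$. A finite convex function whose subdifferential at a point consists of a single vector is differentiable at that point with gradient equal to that vector (a standard convex-analysis fact). Combining these gives $\nabla H_s^*(r) = \pi_r(\cdot|s)$, which is the claim.

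The main obstacle is essentially bookkeeping around the simplex constraint $\sum_a \pi(a|s) = 1$: the ``gradient'' must be read in the appropriate affine chart — equivalently as the vector of partial derivatives in the entries $r(s,a)$ — and one must confirm that Danskin's theorem still applies when $\pi_r(\cdot|s)$ sits on the boundary of $\Delta_s$, which it does since Danskin requires only compactness of the feasible set and joint continuity, not an interior optimizer. An alternative, constraint-free route is to write the stationarity condition $r(s,a) = \eta^{-1} f'\!\big(\pi_r(a|s)/\piref(a|s)\big) + \lambda_s$ with a Lagrange multiplier $\lambda_s$ for the normalization, and invoke Fenchel–Young duality, $r \in \partial \phi(\pi_r) \iff \pi_r \in \partial H_s^*(r)$, where strict convexity of $\phi$ again forces $\partial H_s^*(r)$ to be a singleton and hence $H_s^*$ differentiable; I would present the Danskin argument as the primary one and mention the Fenchel–Young version as a remark.
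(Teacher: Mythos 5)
Your argument is correct, but it is worth noting that the paper does not actually prove this proposition: it is stated as a known fact with a pointer to \citet[Lemma~5]{zhou2018fenchel}. So what you have written is essentially a self-contained proof of the cited lemma -- the standard convex-analysis argument (Danskin/envelope plus the fact that a finite convex function with a singleton subdifferential is differentiable there), rather than an alternative to an in-paper derivation. Both your primary route and your Fenchel--Young remark are the textbook proofs of this statement, and either would serve as a valid replacement for the citation.

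Two small caveats. First, the proposition as literally stated assumes only \emph{convex} $f$, but your proof (necessarily) invokes \emph{strict} convexity of $H_s$ from the preceding proposition to get a unique maximizer; for merely convex $f$ the argmax can be a whole face of the simplex and $H^*_s$ need not be differentiable, so the strict/strong convexity hypothesis (Assumption~\ref{assume:f-strong-convex} in the paper, together with finiteness of $\cA$ for compactness of $\Delta_s$) is genuinely needed and should be made explicit. Second, the plain Danskin statement you lean on requires joint continuity of $(\pi,r)\mapsto \la \pi(\cdot|s), r(s,\cdot)\ra - H_s(\pi)$ on the compact feasible set, and Assumption~\ref{assume:f-strong-convex} permits $f(0^+)=+\infty$ (e.g.\ $f(x)=(x-1)^2/2-\log x$), in which case $H_s$ is only lower semicontinuous at the boundary of $\Delta_s$ and Danskin does not literally apply; your Fenchel--Young route (maximality of $\pi_r$ iff $r\in\partial\big(H_s+\iota_{\Delta_s}\big)(\pi_r)$ iff $\pi_r\in\partial H^*_s(r)$, then uniqueness forces a singleton subdifferential and hence differentiability) sidesteps this cleanly and would be the better primary argument, with Danskin as the remark -- or one can apply Danskin on a compact neighborhood of the maximizer inside the relative interior.
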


We also need some properties of $\nabla^2H^*_s$, the Hessian matrix of the convex conjugate function. We first give the Hessian matrix of the original function $H_s$ as follows.
\begin{align}
    \nabla^2H_s(\pi) = \eta^{-1} \mathrm{diag}\Bigg( \frac{f''\Big(\frac{\pi(a_1|s)}{\piref(a_1|s)}\Big)}{\piref(a_1 | s)},\cdots, \frac{f''\Big(\frac{\pi(a_{|\cA|}|s)}{\piref(a_{|\cA|}|s)}\Big)}{\piref(a_{|\cA|} | s)}\Bigg). \label{eq:hessian-original}
\end{align}
Furthermore, when $f$ is $\alpha$-strongly convex, we have
\begin{align*}
    \nabla^2 H_s(\pi) \succeq \alpha \eta^{-1}  \mathrm{diag}\Big(\piref(a_1|s)^{-1}, \cdots,\piref(a_{|\cA|}|s)^{-1}\Big).
\end{align*}

The following lemma, which gives an estimate of $\nabla^2H^*_s$, is the pivot of the proof.

\begin{lemma}\label{lem:super-hessian}
For any reward $r: \cS \times \cA \to [0,1]$, we have
\begin{align*}
    \nabla^2H^*_s(r) \preceq \alpha^{-1} \eta  \mathrm{diag}\big(\piref(a_1|s), \cdots,\piref(a_{|\cA|}|s)\big).
\end{align*}
\end{lemma}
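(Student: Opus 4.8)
\textbf{Proof plan for Lemma~\ref{lem:super-hessian}.}

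The plan is to exploit the duality between $H_s$ and $H^*_s$ at the level of second derivatives. The starting point is the classical fact (e.g., \citealt[Proposition~3.2]{penot1994sub}, already invoked in the proof sketch in \Cref{sec:cb-f}) that if $\pi_r = \argmax_\pi\{\langle \pi(\cdot|s), r(s,\cdot)\rangle - H_s(\pi)\} = \nabla H^*_s(r)$ lies in the relative interior of the simplex, then $\nabla^2 H^*_s(r)$ and $\nabla^2 H_s(\pi_r)$ are (generalized) inverses of each other on the tangent space of the simplex; in particular $\nabla^2 H^*_s(r) \preceq \big(\nabla^2 H_s(\pi_r)\big)^{-1}$ in the Loewner order restricted to that subspace, and since the ambient-space Hessian $\nabla^2 H^*_s$ annihilates the all-ones direction, this inequality extends to the full matrix inequality we need. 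So the first step is to justify this inversion relation carefully — noting that $\pi_r$ is supported on $\mathrm{supp}(\piref)$ and that on its support $\pi_r(a|s) > 0$, so $f''$ is evaluated at finite positive arguments and $\nabla^2 H_s(\pi_r)$ is positive definite on the relevant coordinates by \eqref{eq:hessian-original} together with \Cref{assume:f-strong-convex}.

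The second step is purely algebraic: from \eqref{eq:hessian-original} and the $\alpha$-strong convexity lower bound $\nabla^2 H_s(\pi) \succeq \alpha\eta^{-1}\,\mathrm{diag}\big(\piref(a_1|s)^{-1},\dots,\piref(a_{|\cA|}|s)^{-1}\big)$ stated just above the lemma, we invert a diagonal matrix: $\big(\nabla^2 H_s(\pi_r)\big)^{-1} \preceq \big(\alpha\eta^{-1}\mathrm{diag}(\piref(a_i|s)^{-1})\big)^{-1} = \alpha^{-1}\eta\,\mathrm{diag}\big(\piref(a_1|s),\dots,\piref(a_{|\cA|}|s)\big)$, using that for positive definite $A \succeq B \succ 0$ one has $B^{-1} \succeq A^{-1}$ (operator antitonicity of inversion) — here applied coordinatewise since everything in sight is diagonal. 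Chaining the two steps gives $\nabla^2 H^*_s(r) \preceq \alpha^{-1}\eta\,\mathrm{diag}\big(\piref(a_1|s),\dots,\piref(a_{|\cA|}|s)\big)$, which is exactly the claim.

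The main obstacle is the boundary/degeneracy issue in the first step: $H_s$ is defined on the probability simplex, which is lower-dimensional and has a boundary, so $\nabla^2 H_s$ and $\nabla^2 H^*_s$ are not genuine inverses as $|\cA|\times|\cA|$ matrices, and $\pi_r$ could in principle put zero mass on some actions (forcing $f''$ to be evaluated where $H_s$ is not twice differentiable, or even outside $\mathrm{dom}\,f$ if $f$ is only defined on $(0,\infty)$). I would handle this by restricting attention to the support of $\piref$ (off which all policies considered assign zero mass, so those coordinates are irrelevant), and noting that for bounded reward $r\in[0,1]$ and the closed-form optimizer $\pi_r(a|s) \propto \piref(a|s)\,(f')^{-1}(\text{something bounded})$ the optimal $\pi_r$ is bounded away from the vertices of the face $\Delta(\mathrm{supp}(\piref))$, so differentiability holds in its relative interior; alternatively, one can bypass pointwise second-order computations entirely and derive the Loewner bound directly from the defining variational inequality for the Bregman divergence of $H^*_s$, using \Cref{prop:fdiv-sc} (the $4\alpha$-strong convexity of $D_f$ in total variation) together with the relation $\|\pi-\pi'\|_1 = 2\tv{\pi}{\pi'}$ to transfer strong convexity of $H_s$ into smoothness of $H^*_s$. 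I would present the clean diagonal-matrix argument as the main line and relegate the technical justification of the inversion identity to a short remark citing \citet{penot1994sub}.
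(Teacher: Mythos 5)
Your proposal follows essentially the same route as the paper's proof: invoke \citet[Proposition~3.2]{penot1994sub} to get $\nabla^2 H^*_s(r) \preceq \big(\nabla^2 H_s(\pi_r)\big)^{-1}$, then use the $\alpha$-strong-convexity lower bound on $\nabla^2 H_s$ from \eqref{eq:hessian-original} and invert the diagonal matrix to conclude. Your additional care about the simplex boundary and the support of $\piref$ is a welcome (and arguably needed) refinement of details the paper leaves implicit, but it is not a different argument.
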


\begin{proof}[Proof of Lemma~\ref{lem:super-hessian}]
Given reward function $r: \cS \times \cA \to [0,1]$, we consider
\begin{align*}
    \pi_r \in \argmax_{\pi \in \cS \to \Delta^{|\cA|}} \{ \la \pi(\cdot |s), r(\cdot |s) \ra - H_s(\pi)\}.
\end{align*}
From \Cref{eq:hessian-original} we know that $\nabla^2H_s(\pi_r)$ is invertible. Therefore, by \citealt[Proposition~3.2]{penot1994sub}, we have $\nabla^2H^*_s(r) \preceq (\nabla^2H_s(\pi_r))^{-1}$. Since $f$ is $\alpha$-strongly convex, we have 
\begin{align*}
    \nabla^2H^*_s(r) \preceq \alpha^{-1} \eta  \mathrm{diag}\big(\piref(a_1|s), \cdots,\piref(a_{|\cA|}|s)\big),
\end{align*}
which finishes the proof.
\end{proof}

Now we are ready to prove Theorem~\ref{thm:f-upper-bound}.

\begin{proof}[Proof of Theorem~\ref{thm:f-upper-bound}]
Consider our estimation $\fls$ which approximates the ground truth reward function $\fgt$, we know that 
\begin{align*}
    \hat \pi = \argmax_{\pi \in \cS \to \Delta(\cA)} \Big\{\EE_{(s,a) \sim \rho \times \pi}[\fls(s,a)] - \eta^{-1}\EE_{s \sim \rho}\big[D_{f}(\pi || \piref)\big] \Big\}.
\end{align*}
We have the following sub-optimality decomposition
\begin{align*}
    J(\pi^*) - J(\hat \pi) &= \EE_{s \sim \rho} \Big[\EE_{a \sim \pi^*}[\fgt(s,a)] - \EE_{a \sim \hat \pi}[\fgt(s,a)] - \eta^{-1}\big[D_{f}(\pi^* \| \piref) - D_{f}(\hat \pi \| \piref) \big]\Big]  \\
    & = \EE_{s \sim \rho}\big[H^*_s(\fgt) - H^*_s(\fls) - \la \hat \pi, \fgt - \fls \ra \big] \\
    & =  \EE_{s \sim \rho}\big[H^*_s(\fgt) - H^*_s(\fls) - \la \nabla H^*_s(\fls), \fgt - \fls \ra \big] \\
    & = \EE_{s \sim \rho}[(\fgt - \fls)^\top \nabla^2H^*_s(\tilde g)(\fgt - \fls)],
\end{align*}
where $\tilde g = \gamma \fgt + (1-\gamma)\fls$ and $\gamma \in [0,1]$ and the last equation holds due to Taylor's expansion. Now, for any $\delta \in (0,1)$ and $\epsilon_c > 0$,  with probability at least $1 - \delta$
\begin{align*}
    J(\pi^*) - J(\hat \pi) & = \EE_{s \sim \rho}[(\fgt - \fls)^\top \nabla^2H^*_s(\tilde g)(\fgt - \fls)] \\
    & \leq \alpha^{-1} \eta \EE_{s \sim \rho}\Big[(\fgt - \fls)^\top \mathrm{diag}\big(\piref(a_1|s), \cdots,\piref(a_{|\cA|}|s)\big)(\fgt - \fls)\Big] \\
    & = \alpha^{-1} \eta \EE_{(s,a) \sim \rho \times \piref}\big[\big(\fgt(s,a) - \fls\big(s,a))^2\big] \\
    & \leq \alpha^{-1} \eta \bigg( \frac{128}{3n}\log(2\cN_{\fcl}(\epsilon_c)/\delta) + 18 \epsilon_c \bigg),
\end{align*}
where the first inequality holds due to Lemma~\ref{lem:super-hessian} and last inequality holds due to equation~\eqref{eq:bandit-bonus-concen}. Setting $\epsilon_c = O(n^{-1})$ completes the proof.
\end{proof}

\subsection{Proof of Theorem~\ref{thm:f-lower-bound}}\label{sec:f-lower-bound}


We first provide the following lemma that gives the close form of optimal policy under $\chi^2$-divergence regularization.

\begin{lemma}[{\citet[Lemma~G.2]{huang2025best}}]\label{lem:fdiv:opt}
Let $\pi^*$ be the optimal policy of $\chi^2$-divergence regularized objective with reward function $r$, then $\pi^*$ has the closed form 
\begin{align*}
    \pi^*(\cdot) = \piref(\cdot)\max\big\{0, \eta(r(\cdot) - \lambda)\big\}, \text{ where }\sum_{a\in \cA}\pistarfdiv(a) = 1.
\end{align*}
\end{lemma}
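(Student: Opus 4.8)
The plan is to characterize $\pi^*$ as the unique maximizer of the per-context $\chi^2$-regularized objective and to read off its closed form from the Karush--Kuhn--Tucker (KKT) conditions. Fixing a context $s$ and specializing to the $\chi^2$ case $f(x) = (x-1)^2/2$, the optimal policy solves
\begin{align*}
    \max_{\pi}\ \sum_{a \in \cA} \pi(a) r(a) - \eta^{-1} \sum_{a \in \cA} \piref(a)\cdot \tfrac{1}{2}\Big(\tfrac{\pi(a)}{\piref(a)} - 1\Big)^2 \quad \text{s.t.}\ \sum_{a}\pi(a) = 1,\ \pi(a) \geq 0.
\end{align*}
Because $f$ is strongly convex, $D_f(\cdot\|\piref)$ is strictly convex (as established earlier in the excerpt), so the objective is strictly concave on the simplex and possesses a \emph{unique} maximizer; this is what lets me conclude that any KKT point is in fact the global optimum rather than merely a stationary one.

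First I would introduce a multiplier $\lambda \in \RR$ for the normalization constraint and multipliers $\mu_a \geq 0$ for the inequality constraints, writing
\begin{align*}
    L(\pi, \lambda, \mu) = \sum_{a} \pi(a) r(a) - \eta^{-1}\sum_{a} \piref(a)\tfrac{1}{2}\Big(\tfrac{\pi(a)}{\piref(a)} - 1\Big)^2 - \lambda\Big(\sum_{a} \pi(a) - 1\Big) + \sum_{a} \mu_a \pi(a).
\end{align*}
The stationarity condition $\partial L/\partial \pi(a) = 0$ then reads $r(a) - \eta^{-1}\big(\pi(a)/\piref(a) - 1\big) - \lambda + \mu_a = 0$. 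Next I would run the two-case analysis forced by complementary slackness $\mu_a\pi(a) = 0$: on the active set $\{\pi(a) > 0\}$ we get $\mu_a = 0$, whence $\pi(a) = \piref(a)\big(1 + \eta(r(a) - \lambda)\big)$; on the inactive set $\{\pi(a) = 0\}$ dual feasibility $\mu_a \geq 0$ forces $1 + \eta(r(a) - \lambda) \leq 0$. Combining the two cases gives the water-filling form $\pi(a) = \piref(a)\max\{0,\, 1 + \eta(r(a) - \lambda)\}$, and the final reparameterization $\lambda \mapsto \lambda - \eta^{-1}$ absorbs the additive constant, producing exactly $\pi^*(a) = \piref(a)\max\{0, \eta(r(a) - \lambda)\}$ with $\lambda$ fixed by $\sum_a \pi^*(a) = 1$.

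I expect the main obstacle to be the rigorous handling of the inequality constraints rather than the differentiation. Concretely, I must argue that KKT is both necessary and sufficient here: sufficiency comes from strict concavity, and the necessary direction (plus strong duality) is licensed by Slater's condition, since $\piref$ itself is a strictly feasible point on its support. I then need existence and uniqueness of the normalizing threshold $\lambda$. This follows because the map $\lambda \mapsto \sum_a \piref(a)\max\{0, \eta(r(a) - \lambda)\}$ is continuous and non-increasing, equals $0$ once $\lambda \geq \max_a r(a)$, and tends to $+\infty$ as $\lambda \to -\infty$ (behaving like $\eta(\EE_{\piref}[r] - \lambda)$ when every arm is active), so the intermediate value theorem supplies a crossing of the level $1$, and strict monotonicity on the region where at least one arm is active gives uniqueness. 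The only delicate bookkeeping concerns the boundary arms with $r(a) = \lambda$, which sit exactly at the threshold with $\pi^*(a) = 0$ and therefore do not alter the stated closed form.
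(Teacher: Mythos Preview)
Your KKT argument is correct and is the standard route to this water-filling formula: stationarity plus complementary slackness on the simplex yields $\pi(a)=\piref(a)\max\{0,1+\eta(r(a)-\lambda)\}$, and the affine shift $\lambda\mapsto\lambda-\eta^{-1}$ gives the stated form; your existence/uniqueness argument for the threshold via monotonicity of $\lambda\mapsto\sum_a\piref(a)\max\{0,\eta(r(a)-\lambda)\}$ is also sound.

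There is nothing to compare against in the paper itself: the lemma is quoted as \citet[Lemma~G.2]{huang2025best} and is used as a black box (only the closed form is invoked in the lower-bound construction, where the parameters are chosen so that no arm is clipped). So your proposal supplies a self-contained proof where the paper simply imports the result.
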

By \Cref{prop:fdiv-sc}, $\pistarfdiv = \argmax_{\pi \in \Delta(\cA)} \objfdiv(\pi)$ is unique.
The sub-optimality gap for $f$-divergence is consequently defined as
\begin{align}
    \suboptfdiv(\cdot) \coloneqq \suboptfdiv(\cdot; \cA, r, \piref) = \objfdiv(\pistarfdiv) - \objfdiv(\cdot).
\end{align}

Now we are ready to prove Theorem~\ref{thm:f-lower-bound}.

\begin{proof}[Proof of Theorem~\ref{thm:f-lower-bound}]
We still consider the family of contextual bandits $\mathrm{CB}_{\cG}$ given by~\eqref{eq:contextual-bandit-class}. We, still, aim to prove the following statement. 
Fixing any $S \geq 32\log 2$, $\eta > 4\log 2$ and $\alpha$, we set $f(x) \coloneqq \alpha (x-1)^2/2$, then for any estimator $\cD \mapsto \hat\pi \in \Delta(\cA|\cS)$, for any $n$ sufficiently large, there exist some function class $\cG$, such that $\exists\ \mathrm{inst} = (\cS, \cA, \rho, r, \piref, \eta) \in \mathrm{CB}_{\cG}$ with $|\cS| = S = \Theta(\log|\cG|)$, and 
\begin{align} \label{eq:bandit-f-lower-restatement}
     \suboptfdiv(\hat\pi; \mathrm{inst}) \gtrsim \alpha^{-1}\eta S n^{-1}.
\end{align}    
Since $\log|\cG| \geq \log \cN_{\cG}(\epsilon)$ for any $\epsilon \in (0,1)$, equation~\eqref{eq:bandit-f-lower-restatement} yields the desired bound. 

We again omit subscripts $f$div when it is clear in context. We set $\cS = [S]$, $\cA = \{-1, +1\}$, and $\rho = \mathsf{Unif}(\cS)$. For all $s \in \cS$, $\piref = \mathsf{Unif}(\cA)$. We further consider the following reward function class. We leverage \Cref{lem:max-signals} and obtain a set $\cV \in \{-1,+1\}^S$ such that (1) $|\cV| \geq \exp(S/8)$ and (2) for any $v, v' \in \cV, v \neq v'$ , one has $\|v - v'\|_1 \geq S/2$. We construct the following reward function class where the reward follows Bernoulli distribution and the mean functions are given by the function class
\begin{align*}
    \cG = \{ r_{v}(s, -1) = 1/2 + v_s\delta, r_{v'}(s,+1) = 1/2 + v_s'\delta, \ \forall s \in \cS  | v \in \cV\},
\end{align*}
where $\delta \in (0, \eta^{-1}\alpha]$ is to be specified later. Fix some context $s$ and $v_1 \neq v_2$ different at entry $s$ and corresponding reward $r_1$ and $r_2$. Without loss of generality, we assume $r_1(s, \cdot) = (1/2 + \delta, 1/2 - \delta)$ and $r_2(s, \cdot) = (1/2 - \delta, 1/2 + \delta )$. Then direct calculation implies that 
\begin{align*}
    & \pi_1^*(\cdot|s) = \frac{1}{2}\max\{0, \eta\alpha^{-1}(r_1(s, \cdot) - {\lambda})\} = 0.5  \eta\alpha^{-1}(r_1(s, \cdot) - {\lambda}), \\
    & \pi_2^*(\cdot|s) = \frac{1}{2}\max\{0, \eta\alpha^{-1}(r_2(s, \cdot) - {\lambda})\} = 0.5  \eta\alpha^{-1}(r_2(s, \cdot) - {\lambda}),
\end{align*}
where $\lambda = 0.5 - \eta^{-1}\alpha$. 
Note that $2{\chi^2}(\mu\|\nu) + 1= \sum_{a\in\cA}[\mu(a)]^2/\nu(a)$ and ${\chi^2} = D_f$, we obtain that $\forall \ \hat{\pi}$,
\begin{align}
    & \subopt_s(\hat\pi(\cdot | s); r_1) + \subopt_s(\hat\pi(\cdot | s); r_2) \\
    & \quad = \dotp{r_1(s, \cdot)}{\pi^*_1(\cdot|s)} + \dotp{r_2(s, \cdot)}{\pi^*_2(\cdot|s)} - \overbrace{\dotp{r_1(s, \cdot) + r_2(s, \cdot)}{\hat\pi(\cdot | s)}}^{=1} + \overbrace{2\eta^{-1}\alpha {\chi^2}(\hat\pi(\cdot | s)\|\piref(\cdot | s))}^{\geq0} \notag\\
    &\quad \quad - \eta^{-1}\alpha \cdot {\chi^2}(\pistar_1(\cdot|s)\|\piref(\cdot | s)) - \eta^{-1}\alpha \cdot {\chi^2}(\pistar_2(\cdot|s)\|\piref(\cdot | s)) \notag\\
    &\quad \geq 2\dotp{r_1(s, \cdot)}{\pistar_1(\cdot|s)} - 1 - 2\eta^{-1}\alpha \cdot {\chi^2}(\pistar_1(\cdot|s)\|\piref(\cdot|s)) \notag\\
    &\quad = 1 + \frac{2\eta \delta^2}{\alpha } - 1 - \frac{\eta \delta^2}{\alpha } = \frac{\eta \delta^2}{\alpha }. \label{eq:fdiv:pre-lb}
\end{align}
Now we take expectation over all possible contexts and recall that $\|v - v'\|_1 \geq S/2$ for $v \neq v'$, we know that for any $r_1 \neq r_2 \in \cG$
\begin{align*}
    \subopt(\hat\pi; r_1) + \subopt(\hat\pi; r_2) \geq \frac{\eta \delta^2}{2\alpha }
\end{align*}
Given any mean reward function $r \in \cG$, let $P_{r}$ be the distribution of $(s, a, \mathtt{r})$ when $s \sim \rho$, $a \sim \piref(\cdot|s)$, and $\mathtt{r} \sim \mathsf{Bern}(r(s, a))$. Suppose $P_{\cD_r}$ is the distribution of the dataset
given mean reward function $r$, then $\kl{P_{\cD_{r_1}} }{P_{\cD_{r_2}}} = n \kl{P_{{r_1}} }{P_{{r_2}}}$ for any pair of $r_1, r_2 \in \cG$.
Now we invoke Fano's inequality (Lemma~\ref{lem:fano}) to obtain 
\begin{align*}
     \inf_{\pi}\sup_{\mathrm{inst} \in \mathrm{CB}_{\cG}}  \subopt(\hat\pi; \mathrm{inst}) & \geq \frac{\eta \delta^2}{4 \alpha}\Bigg(1 - \frac{\max_{r_1\neq r_2 \in \cG}\kl{P_{\cD_{r_1}} }{P_{\cD_{r_2}}} + \log 2}{\log |\cG|} \Bigg) \\
     & \geq \frac{\eta \delta^2}{4 \alpha}\Bigg(1 - \frac{64 n\delta^2 + 8\log 2}{S} \Bigg),
\end{align*}
where the second inequality holds due to $\KL(\mathsf{Bern}(p)\|\mathsf{Bern}(q)) \leq (p-q)^2/[q(1-q)]$. Let $\delta = 16^{-1}\sqrt{S n^{-1}}$, then we obtain that for all $\pi$ we have  
\begin{align*}
    \sup_{\mathrm{inst} \in \mathrm{CB}_{\cG}} \subopt(\hat\pi; \mathrm{inst}) \gtrsim \frac{\eta S}{\alpha n},
\end{align*}
which finishes the proof in that $\log_2 |\cG| = S$.
\end{proof}

\section{Generalization to contextual dueling bandits}\label{app:cdb}

In this section, we extend our algorithm to the problems of regularized contextual dueling bandits,
where the learner receives preference comparison instead of absolute signals. Our setup largely follows \citet{zhu2023principled,zhan2023provable} and the notion of sub-optimality follows \citet{xiong2024iterative,zhao2024sharp}.

\subsection{Problem setup}\label{sec:duel-setup}

We still consider contextual bandits $(\cS, \cA, r, \piref)$ where $\cS$ is the state space, $\cA$ is the action space and $r: \cS \times \cA \to [0, 1]$ is the reward function.\footnote{We overload some notations in \Cref{sec:cb} by their dueling counterparts for notational simplicity.} But only relative preference feedback is available, viz., we have an $\iid$ offline dataset $\cD = \{(s_i,a^1_i, a^2_i, y_i)\}_{i=1}^n$, where $s_i \in \cS$ is generated from distribution $\rho$ and $a^1_i, a^2_i \sim \piref$. The binary preference label $y_i=1$ indicates $a_i^1$ is preferred over $a_i^2$ (denoted by $a^1 \succ a^2$) and $0$ for $a^2 \succ a^1$ given
context $s$. 
In this work we consider the Bradley-Terry Model, where $\mathbb{P}[y=1|s,a^1, a^2] = \sigma(r(s_i, a^1_i) - r(s_i, a^2_i))$, where $\sigma(x) = (1+e^{-x})^{-1}$ is the link function.
The objective here identical to~\eqref{eq:kl-objective-bandit} for KL-regularization and~\eqref{eq:f-objective-bandit} for $f$-divergence regularization. Our goal is still to find an $\epsilon$-optimal policy. To control the complexity of the function class $\fcl$, we assume that \Cref{assume:general-function-approx} still holds here.

\paragraph{Concentrability.}
Analogous to \Cref{sec:cb}, we need our estimation from offline dataset generalizable to the state-action pairs visited by our obtained policy. While density-ratio-based concentrability can be directly adapted to dueling bandit, we need a slightly different notion of $D^2$-divergence. This is because in dueling bandit, we cannot observe the absolute reward and best estimation $g$ we can achieve is that for any state $s$ and actions $a^1, a^2$, our estimated $g(s,a^1) - g(s,a^2) \approx r(s,a^1) - r(s,a^2)$. This implies that there exists some mapping $b: \cS \to [-1, 1]$ such that $g(s,a) - b(s) \approx r(s,a)$ on the offline data, which leads to the following definition.


\begin{definition}\label{def:dueling-bandit:D-sq}
Given a class of functions $\fcl \subset (\cS \times \cA \to \mathbb{R})$ and some policy $\pi$, let $\cB = (\cS \to [-1, 1])$ be the function class, define the $D^2$-divergence $D^2_{\fcl}((s,a);\pi)$ as
\begin{align*}
    \sup_{ g,h \in \fcl} \inf_{b \in \cB} \frac{\big( g(s, a) - h(s, a) - b(s)\big)^2}{\rE_{s\sim \rho}\Var_{a' \sim \pi(\cdot | s')}[g(s', a') - h(s', a')]}.
\end{align*}
\end{definition}

A similar definition has been introduced in \citet[Definition~2.6]{zhao2024sharp}, which underpins the following two assumptions that characterize the coverage ability of $\piref$ similarly as in \Cref{sec:cb}.


Given a reference policy $\piref$, we define two coverage notions for contextual dueling bandits.
\begin{assumption}[All-policy concentrability]\label{assume:all-coverage-dueling}
    $D^2 \coloneqq \sup_{(s,a) \in \cS \times \cA} D^2_{\fcl}((s,a); \piref) < \infty$.
\end{assumption}
\begin{assumption}[Single-policy concentrability]\label{assume:single-coverage-dueling}
    $ D_{\pi^*}^2 \coloneqq \E_{(s,a) \sim \rho \times \pi^*}[D^2_{\fcl}((s,a); \piref)] < \infty$.
\end{assumption}
Similar single-policy concentrability assumptions have appeared in previous work in offline contextual dueling bandits~\citep{huang2024correcting,song2024importance} and similar notions has also appeared in the analysis of model-based RL \citep{uehara2021pessimistic,wang2024model}. Still, while \Cref{assume:single-coverage-dueling} is strictly weaker than \Cref{assume:all-coverage-dueling}, in general cases, the two quantities, $C^{\pi^*}$ and $D^2_{\pi^*}$ cannot be bounded by each other.

\subsection{Algorithms and Results}


\subsubsection{Algorithms for KL-regularized contextual dueling bandits}

\begin{algorithm*}[t]
	\caption{Offline KL-Regularized Pessimistic Contextual Dueling Bandit (\algcdb)}\label{algorithm:dueling-bandit}
	\begin{algorithmic}[1]
	\REQUIRE regularization $\eta$, reference policy $\piref$, function class $\fcl$, offline dataset $\cD = \{(s_i,a^1_i, a^2_i, y_i)\}_{i=1}^n$
\STATE Compute the maximum likelihood estimator of the reward function
    \begin{align*}
        \fls = \argmin_{g \in \fcl} \sum_{i=1}^n \Big[ y_i \log \sigma \Big(\big[g(s_i, a^1_i) - g(s_i, a^2_i)\big]\Big) + (1-y_i) \log \sigma \Big(\big[g(s_i, a^2_i) - g(s_i, a^1_i)\big]\Big) \Big]
    \end{align*}\label{line:dueling-mle}
    \STATE Let $\fps(s, a) = \fls(s, a) - \Gamma_n(s,a)$, where $\Gamma_n(s, a)$ is the bonus term in \Cref{eq:bonus-dueling-bandit} \label{line:dueling-bandit:pess}
    \ENSURE $\hat \pi(a|s) \propto \piref(a|s) \exp \big(\eta \cdot \fps(s, a)\big)$
\end{algorithmic}
\end{algorithm*}

We elucidate $\algcdb$ for offline KL-regularized contextual dueling bandits, whose pseudocode is summarized in \Cref{algorithm:dueling-bandit}. $\algcdb$ first estimate the ground truth function $\fgt$ on offline dataset with maximum likelihood estimator (MLE) to estimate a function $\fls \in \fcl$. After that, analogous to \Cref{algorithm:bandit-pess}, we adopt the principle of pessimism in the face of uncertainty. Specifically, we define the penalty term
\begin{align}
    \Gamma_n(s,a)= \beta \sqrt{ D_{\fcl}^2((s,a),\piref)}, \label{eq:bonus-dueling-bandit}
\end{align}
where
\begin{align}
    \beta^2 = 128\log(2\cN_{\fcl}(\epsilon_c)/\delta)/3n + 18 \epsilon_c =\tilde{O}(n^{-1}) \label{eq:conf-radis-dueling-bandit}
\end{align}
and then subtract it from the MLE $\fls$ to obtain a pessimistic estimator $\fps$.
$\algcb$ then output the policy $\hat{\pi}$, maximizing the estimated objective
\begin{align*}
    \hat{J}(\pi) =\EE_{(s,a) \sim \rho \times \pi} \bigg[\fps(s,a) - \eta^{-1} \log \frac{\pi(a|s)}{\piref(a|s)}\bigg],
\end{align*}
the maximizer of which is in closed form as the counterpart of \Cref{eq:opt-exp}.
\begin{align*}
    \hat \pi(a|s) \propto \piref(a|s) \exp \big(\eta \cdot \fps(s,a)\big).
\end{align*}

We provide the following theoretical guarantees for \Cref{algorithm:dueling-bandit}.

\begin{theorem}\label{thm:main-dueling-bandit}
Under \Cref{assume:single-coverage-dueling}, if we set $\Gamma_n$ according to \Cref{eq:bonus-dueling-bandit}, then for sufficiently small $\epsilon \in (0, 1)$, with probability at least $1 - \delta$, $n = \tilde{O}\big(\eta (D_{\pi^*}^2 \wedge C^{\pi^*})\epsilon^{-1}\big)$ is sufficient to guarantee the output policy $\hat \pi$ of \Cref{algorithm:dueling-bandit} to be $\epsilon$-optimal.
\end{theorem}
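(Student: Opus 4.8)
\textbf{Proof proposal for \Cref{thm:main-dueling-bandit}.} The plan is to transcribe the proof of \Cref{thm:bandit} almost verbatim; the only genuinely new ingredient is that preference feedback identifies the reward only up to a state-dependent shift, which forces us to carry an extra additive function $b:\cS\to[-1,1]$ throughout. First I would build the good event $\confduelbandit(\delta)$ for \Cref{algorithm:dueling-bandit}: combining a Bradley--Terry MLE concentration bound in the spirit of \Cref{lem:concen-reward-bandit} (controlling $\sum_{i}\big[(\fls(s_i,a_i^1)-\fls(s_i,a_i^2))-(\fgt(s_i,a_i^1)-\fgt(s_i,a_i^2))\big]^2$, using that reward differences lie in $[-1,1]$ so that $\sigma'$ is bounded below and $x\mapsto-\log\sigma(x)$ has the needed curvature on the relevant range) with a uniform-convergence step in the spirit of \Cref{lem:concen-behavior-bandit}, one obtains $\rE_{s\sim\rho}\Var_{a\sim\piref(\cdot|s)}[\fls-\fgt]\le\tfrac{128}{3n}\log(2\cN_{\fcl}(\epsilon_c)/\delta)+18\epsilon_c=\tilde O(n^{-1}\log\cN_{\fcl})$ on a $(1-\delta)$-event. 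Setting $b^\star(s):=\EE_{a\sim\piref(\cdot|s)}[(\fls-\fgt)(s,a)]\in[-1,1]$ so that this quantity equals $\rE_s\EE_{a\sim\piref}[(\fls-\fgt-b^\star)^2]$, \Cref{def:dueling-bandit:D-sq} transfers the on-support control to the everywhere bound $|(\fls-\fgt-b^\star)(s,a)|\le\beta\sqrt{D^2_{\fcl}((s,a);\piref)}=\Gamma_n(s,a)$ for all $(s,a)$, mimicking \Cref{eq:bandit-bonus-d2} and \Cref{eq:bonus-dueling-bandit}; this defines $\confduelbandit(\delta)$.

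The second step is the shift-invariance observation: the induced-policy map $g\mapsto\pi_g$ with $\pi_g(\cdot|s)\propto\piref(\cdot|s)\exp(\eta g(s,\cdot))$ satisfies $\pi_{\fgt+b^\star}=\pi_{\fgt}=\pi^*$ (the factor $e^{\eta b^\star(s)}$ cancels in the normalization), and the objective built from the ``reward'' $\fgt+b^\star$ equals $J$ up to the $\pi$-independent constant $\EE_{s\sim\rho}[b^\star(s)]$. Hence the entire machinery of \Cref{sec:cb} applies with $\fgt$ replaced by $\fgt+b^\star$: on $\confduelbandit(\delta)$ the pessimistic estimator $\fps=\fls-\Gamma_n$ satisfies $\triangle(s,a):=(\fps-\fgt-b^\star)(s,a)\le0$ and $|\triangle|\le2\Gamma_n$ everywhere; \Cref{lem:non-positive-cov} is unchanged; setting $g_\gamma:=\gamma\fps+(1-\gamma)(\fgt+b^\star)$, $\pi_\gamma:=\pi_{g_\gamma}$, and $G(\gamma):=\EE_{\rho\times\pi_\gamma}[\triangle^2]$, the same policy-gradient computation as in \Cref{lem:expansion-to-endpoint} gives $G'(\gamma)=\eta\,\EE_{\rho}[\EE_{\pi_\gamma}\triangle^3-\EE_{\pi_\gamma}\triangle^2\cdot\EE_{\pi_\gamma}\triangle]\le0$, so $0\in\argmax_{\gamma\in[0,1]}G(\gamma)$; and the mean-value bound of \Cref{lem:taylor-expansion}, applied to the shifted objective (whose difference from $J$ is constant), yields $J(\pi^*)-J(\hat\pi)\le\eta\,G(\bar\gamma)$ for some $\bar\gamma\in[0,1]$, which is $\le\eta\,G(0)$ by the monotonicity just established.

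Finally I would close exactly as in the proof of \Cref{thm:bandit}, noting $\pi_0=\pi_{\fgt+b^\star}=\pi^*$:
\[
\eta\,G(0)=\eta\,\EE_{(s,a)\sim\rho\times\pi^*}\big[\triangle^2(s,a)\big]\le 4\eta\,\EE_{\rho\times\pi^*}\big[\Gamma_n^2\big]=4\eta\beta^2\,\EE_{\rho\times\pi^*}\big[D^2_{\fcl}((s,a);\piref)\big]=4\eta\beta^2\,D^2_{\pi^*}=\tilde O\big(\eta D^2_{\pi^*}\,n^{-1}\log\cN_{\fcl}(\epsilon_c)\big),
\]
using \Cref{assume:single-coverage-dueling} and $\beta^2=\tilde O(n^{-1})$ from \Cref{eq:conf-radis-dueling-bandit}; setting $\epsilon_c=O(n^{-1})$ and solving for $\epsilon$ gives $n=\tilde O(\eta D^2_{\pi^*}\epsilon^{-1})$.

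I expect the main obstacle to be the first step rather than anything after it. Unlike the clean least-squares analysis underlying \Cref{lem:bandit:conf}, the Bradley--Terry MLE bound needs a self-normalized/Bernstein-type argument together with the local strong convexity of the logistic loss on $[-1,1]$; and one must check carefully that the particular shift $b^\star$ extracted from the data-dependent quantity $\rE_s\Var_{a\sim\piref}[\fls-\fgt]$ is precisely the one for which \Cref{def:dueling-bandit:D-sq} upgrades on-support control to an everywhere bound, and that this same $b^\star$ is the one used in the invariance argument of the second step. Once this bookkeeping is in place, the remainder is a faithful transcription of \Cref{sec:cb}.
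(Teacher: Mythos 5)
Your proposal matches the paper's proof essentially step for step: the paper's event $\confduelbandit(\delta)$ is exactly your shifted bound $|\fls-b-\fgt|\le\Gamma_n$ (with the same choice $b(s)=\EE_{a\sim\piref(\cdot|s)}[(\fls-\fgt)(s,a)]$ appearing in Lemma~\ref{lem:concentration-dueling}), and the paper then runs the identical shift-invariance observation (Proposition~\ref{prop:dueling-add-bias}), the same $G(\gamma)$ monotonicity via Lemma~\ref{lem:non-positive-cov}, Lemma~\ref{lem:taylor-expansion}, and the $4\eta\beta^2 D^2_{\pi^*}$ conclusion. The one obstacle you flag—the Bradley--Terry MLE concentration—is not reproved in the paper but simply imported as Lemma~\ref{lem:concentration-dueling} from \citet[Lemma~D.4]{zhao2024sharp}, so your outline is correct and takes the same route.
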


\begin{remark}
\citet{zhao2024sharp} achieved an $\tilde{O}(\epsilon^{-1})$ sample complexity under \Cref{assume:all-coverage-dueling}. Comparing to \citet{zhao2024sharp}, $\algcdb$ achieves the same $\tilde{O}(\epsilon^{-1})$ sample complexity but only requiring \Cref{assume:single-coverage-dueling}, which is weaker than \Cref{assume:all-coverage-dueling}.
\end{remark}

The following theorem provides the sample complexity lower bound for KL-regularized dueling contextual bandits.

\begin{theorem}\label{thm:lowerbound-dueling}
For any sufficiently small $\epsilon \in (0, 1), \eta > 0$, $1 \leq C^* \leq \exp(\eta/2)/2$, and any algorithm \textsf{Alg}, there is a KL-regularized contextual dueling bandit instance with single-policy concentrability $C^{\pi^*} \leq C^*$ such that \textsf{Alg} requires at least $\Omega\big(\min\{{\eta C^* \log \cN_{\fcl}(\epsilon_c)}/{\epsilon}, {\log \cN_{\fcl}(\epsilon_c)}(C^*)^2/{\epsilon^2}\}\big)$ samples to return an $\epsilon$-optimal policy.
\end{theorem}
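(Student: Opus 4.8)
The plan is to run the Assouad-type construction behind Theorem~\ref{thm:lowerbound-bandit}, adapted to pairwise feedback. I would build a family of KL-regularized contextual dueling bandits indexed by sign vectors $\tau\in\{\pm1\}^S$ with $\cS=[S]$, $\rho=\mathsf{Unif}(\cS)$, $S=\Theta(\log\cN_\fcl(\epsilon_c))$, action set $\cA=\{a_-,a_+\}$, and a reference policy putting mass $\piref(a_-|s)\asymp 1/C^*$ on the single $\tau$-sensitive action at every state. The mean rewards are $r_\tau(s,a_-)=\tfrac12+\tau_s\delta$ and $r_\tau(s,a_+)=\tfrac12-\alpha$, where $\alpha=\eta^{-1}\log\big(\piref(a_+|s)/\piref(a_-|s)\big)\asymp\eta^{-1}\log C^*$ is the offset that balances $\pi^*_\tau$ against $\piref$ exactly as in the proof of Theorem~\ref{thm:lowerbound-bandit}, so that $\pi^*_\tau(a_-|s)=\exp(\eta\tau_s\delta)/(1+\exp(\eta\tau_s\delta))$, and $\delta\in(0,\tfrac14]$ is a gap to be tuned. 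The hypothesis $C^*\le\exp(\eta/2)/2$ is precisely what forces $\alpha\le\tfrac12$, which keeps the rewards in $[0,1]$ and---new to the dueling setting---keeps the Bradley--Terry curvature $\sigma(\cdot)(1-\sigma(\cdot))$ at every realized reward difference bounded below by a constant. With this calibration, a short computation from \Cref{def:dueling-bandit:D-sq} shows $\max_\tau D^2_{\pi^*_\tau}\le 2C^*$, so \Cref{assume:single-coverage-dueling} holds.

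The suboptimality side is inherited verbatim, because the KL-regularized objective~\eqref{eq:kl-objective-bandit} does not depend on the feedback model: the derivation producing \eqref{eq:rkl:lb:kl-form} still gives $\subopt_s(\hat\pi;\tau)=\eta^{-1}\kl{\hat\pi(\cdot|s)}{\pi^*_\tau(\cdot|s)}$, and since $\pi^*_\tau$ has the same closed form as in Theorem~\ref{thm:lowerbound-bandit}, the Hellinger-affinity bound of \eqref{eq:rkl:lb:pre-single}--\eqref{eq:rkl:lb:single-summary} yields, for $\tau\sim_s\tau'$,
\begin{align*}
\subopt_s(\hat\pi;\tau)+\subopt_s(\hat\pi;\tau')\;\gtrsim\;\frac{\eta\delta^2}{8}\wedge\frac{3\delta}{10}.
\end{align*}
The genuinely new piece is the information-theoretic side. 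The observation is now $(s,a^1,a^2,y)$ with $a^1,a^2\sim\piref$ and $y\sim\mathsf{Bern}\big(\sigma(r_\tau(s,a^1)-r_\tau(s,a^2))\big)$. For $\tau\sim_{s_0}\tau'$, the chain rule for KL reduces $\kl{P_\tau}{P_{\tau'}}$ to state $s_0$ and to the comparison pairs that straddle $a_-$; each such pair contributes $O(\delta^2)$ because the curvature is $\Theta(1)$, so $\kl{P_\tau}{P_{\tau'}}\asymp S^{-1}\,p_{\mathrm{pair}}\,\delta^2$, where $p_{\mathrm{pair}}$ is the $\piref$-probability of drawing an informative pair at $s_0$, a fixed power of $1/C^*$ under the calibration above. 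Hence $\kl{P_{\cD_\tau}}{P_{\cD_{\tau'}}}=n\kl{P_\tau}{P_{\tau'}}$ is of order $n\delta^2/(S\cdot\mathrm{poly}(C^*))$.

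To finish, I would tune $\delta$ so that $\kl{P_{\cD_\tau}}{P_{\cD_{\tau'}}}=\Theta(1)$ (which forces $\delta\le\tfrac14$ once $n$ is large enough, i.e.\ $\epsilon$ small enough), plug the pair-sum bound into Assouad's Lemma (\Cref{lem:assouad}) as in the final display of the proof of Theorem~\ref{thm:lowerbound-bandit}, and average over $\cS$ via \eqref{eq:rkl:lb:decomp}; the $\tfrac{\eta\delta^2}{8}$ branch of the minimum produces the $\epsilon^{-1}$ term and the $\tfrac{3\delta}{10}$ branch the $\epsilon^{-2}$ term of the claimed bound, with $S=\Theta(\log\cN_\fcl(\epsilon_c))$ carrying the metric-entropy factor. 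The main obstacle is the two-budget calibration: \Cref{def:dueling-bandit:D-sq} (with its variance denominator and the nuisance shift $b$) forces the $\tau$-sensitive action to be rare, while pairwise feedback makes a sample informative about $\tau_{s_0}$ only when it draws the right \emph{pair} at $s_0$, so the effective per-coordinate sample size---hence the per-sample KL---is dictated by a pair-coverage probability rather than a single action's probability; designing $\piref$ and the reward offsets so that $D^2_{\pi^*}\le 2C^*$ holds simultaneously with the correct power of $C^*$ in $p_{\mathrm{pair}}$, and verifying the rewards and curvature stay in range (which is what fixes the range $C^*\le\exp(\eta/2)/2$), is the delicate step; everything else is the same bookkeeping as in Theorem~\ref{thm:lowerbound-bandit}.
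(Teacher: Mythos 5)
Your proposal follows essentially the same route as the paper's own proof: the identical two-action construction with $\piref(a_-|s)=1/C^*$ and offset $\alpha=\eta^{-1}\log(C^*-1)$ (which the range $C^*\le\exp(\eta/2)/2$ keeps below $1/2$), verbatim reuse of the per-state suboptimality bound $\tfrac{\eta\delta^2}{8}\wedge\tfrac{3\delta}{10}$, a per-sample KL of order $\delta^2/(SC^*)$ driven by the probability of drawing an informative comparison pair, and Assouad's Lemma with $\delta=\Theta(\sqrt{SC^*/n})$ and $S=\Theta(\log\cN_{\fcl}(\epsilon_c))$. The only refinement worth noting is that the pair-coverage probability is exactly $2C^{-1}(1-C^{-1})\asymp 1/C^*$ (not some other power of $1/C^*$), which is precisely what the paper computes, so your plan matches the published argument.
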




\begin{remark}
Theorem~\ref{thm:lowerbound-dueling} shows that when $\epsilon$ is sufficiently small, any algorithm for offline KL-regularized contextual dueling bandits requires at least $\Omega(\eta C^{\pi^*} \log \cN_{\fcl}(\epsilon)\epsilon^{-1})$ samples to output an $\epsilon$-optimal policy, which matches the sample complexity upper bound in Theorem~\ref{thm:main-dueling-bandit}, indicating that $\algcb$ is nearly optimal.
\end{remark}

\subsubsection{Algorithm and Results for $f$-divergence Regularized CDBs}

\begin{algorithm*}[t]
	\caption{Offline $f$-Divergence Regularized Contextual Dueling Bandits (\algfcdb)}\label{algorithm:dueling-bandit-fdiv}
	\begin{algorithmic}[1]
	\REQUIRE regularization $\eta$, reference policy $\piref$, function class $\fcl$, offline dataset $\cD = \{(s_i,a^1_i, a^2_i, y_i)\}_{i=1}^n$
\STATE Compute the maximum likelihood estimator of the reward function
    \begin{align*}
        \fls = \argmin_{g \in \fcl} \sum_{i=1}^n \Big[ y_i \log \sigma \Big(\big[g(s_i, a^1_i) - g(s_i, a^2_i)\big]\Big) + (1-y_i) \log \sigma \Big(\big[g(s_i, a^2_i) - g(s_i, a^1_i)\big]\Big) \Big].
    \end{align*}\label{line:dueling-mle:fdiv}
    \STATE Compute the optimal policy with respect to reward $\fls$ 
    \begin{align*}
        \hat \pi(\cdot | s) \leftarrow \argmax_{\pi(\cdot | s) \in \Delta(\cA)} \sum_{a \in \cA} \pi(a|s)\fls(s, a) + \eta^{-1} D_f\big(\pi (\cdot|s) \| \piref(\cdot|s)\big)
    \end{align*}
    \ENSURE $\hat \pi(a|s)$
\end{algorithmic}
\end{algorithm*}

We present an offline learning algorithm for $f$-divergence regularized contextual dueling bandit, $\algfcdb$, in \Cref{algorithm:dueling-bandit-fdiv}. $\algfcdb$ first leverages maximum likelihood estimator to find a function $\fls \in \fcl$ that minimizes its risk on the offline dataset.  Then the algorithm constructs the output policy $\hat{\pi}$ that maximizes the f-divergence regularized objective induced by $\fls$. Similar to Algorithm~\ref{algorithm:bandit-fdiv}, we do not require any pessimism in $\algfcdb$. The following theorem provides an upper bound of \Cref{algorithm:dueling-bandit-fdiv}. 

\begin{theorem}\label{thm:dueling-bandit-f-div-upper}
For any sufficiently small $\epsilon \in (0, 1)$, and $\eta, \alpha >0$, with probability at least $1 - \delta$, $n = \tilde{O}(\alpha^{-1}\eta \log {\cN}(\epsilon)\epsilon^{-1})$ is sufficient to guarantee that the output policy $\hat \pi$ of \Cref{algorithm:dueling-bandit-fdiv} is $\epsilon$-optimal.
\end{theorem}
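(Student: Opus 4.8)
The plan is to follow the proof of \Cref{thm:f-upper-bound} almost verbatim, with two modifications: the least-squares concentration is replaced by a Bradley--Terry maximum-likelihood concentration, and one extra step is inserted to handle that, from preference data, the reward is identifiable only up to a state-dependent additive constant. Write $H_s(\pi)=\eta^{-1}D_f(\pi(\cdot|s)\|\piref(\cdot|s))$ and let $H_s^*$ be its convex conjugate. Since \Cref{algorithm:dueling-bandit-fdiv} returns the \emph{exact} maximizer $\hat\pi(\cdot|s)$ of $\pi\mapsto\la\pi,\fls(s,\cdot)\ra-\eta^{-1}D_f(\pi\|\piref(\cdot|s))$, we still have $\hat\pi(\cdot|s)=\nabla H_s^*(\fls)$, so the same conjugate manipulation as in \Cref{thm:f-upper-bound} gives
\begin{align*}
  J(\pi^*)-J(\hat\pi)=\EE_{s\sim\rho}\big[H_s^*(\fgt)-H_s^*(\fls)-\la\nabla H_s^*(\fls),\,\fgt-\fls\ra\big],
\end{align*}
i.e., the $\rho$-average of the Bregman divergence of the dual $H_s^*$ between $\fgt$ and $\fls$ at context $s$.

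The new ingredient is that this decomposition is insensitive to adding a state-dependent constant to the reward: since $\la\pi,\mathbf 1\ra=1$ for every $\pi\in\Delta(\cA)$, one has $H_s^*(r+c(s)\mathbf 1)=H_s^*(r)+c(s)$ and $\nabla H_s^*(r+c(s)\mathbf 1)=\nabla H_s^*(r)$, so the Bregman divergence above is unchanged when $\fgt,\fls$ are replaced by $\fgt+c_1(s)\mathbf 1,\fls+c_2(s)\mathbf 1$ for \emph{arbitrary} $c_1,c_2:\cS\to\RR$. I would apply Taylor's theorem to this shifted form and invoke \Cref{lem:super-hessian} --- whose bound $\nabla^2 H_s^*(\tilde g)\preceq\alpha^{-1}\eta\,\mathrm{diag}\big(\piref(a_1|s),\dots,\piref(a_{|\cA|}|s)\big)$ depends on $\tilde g$ only through the maximizer $\pi_{\tilde g}$, which a constant shift leaves unchanged --- to get that the Bregman divergence at context $s$ is at most $\alpha^{-1}\eta\,\EE_{a\sim\piref}\big[\big((\fgt-\fls)(s,a)+c_1(s)-c_2(s)\big)^2\big]$. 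Taking $c_1(s)-c_2(s)=-\EE_{a\sim\piref}[(\fgt-\fls)(s,\cdot)]$ and averaging over $\rho$ gives
\begin{align*}
  J(\pi^*)-J(\hat\pi)&\le\alpha^{-1}\eta\,\EE_{s\sim\rho}\Var_{a\sim\piref}\big[(\fgt-\fls)(s,\cdot)\big] \\
  &=\tfrac{1}{2}\,\alpha^{-1}\eta\,\EE_{s\sim\rho,\,a^1,a^2\sim\piref}\big[\big((\fgt-\fls)(s,a^1)-(\fgt-\fls)(s,a^2)\big)^2\big].
\end{align*}
Both this quantity and $\hat\pi$ are insensitive to which representative $\fls$ the MLE returns, and --- crucially, unlike $\EE_{\rho\times\piref}[(\fgt-\fls)^2]$, which preference data cannot control --- this variance-type quantity is exactly what the Bradley--Terry MLE regulates.

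It then remains to bound the last display by $\tilde O\big(n^{-1}\log\cN_{\fcl}(\epsilon_c)\big)$ with $\epsilon_c=O(n^{-1})$. I would use the standard maximum-likelihood argument: on the empirical distribution, the fitted comparison model is $\tilde O\big(\log(\cN_{\fcl}(\epsilon_c)/\delta)/n\big)$-close to the truth in squared Hellinger (equivalently squared TV) distance; since rewards lie in $[0,1]$ every logit difference $(\fgt-\fls)(s,a^1)-(\fgt-\fls)(s,a^2)$ lies in $[-1,1]$, on which $\sigma$ is bi-Lipschitz, so the discrepancy transfers to an empirical bound on $\tfrac{1}{n}\sum_i\big((\fgt-\fls)(s_i,a_i^1)-(\fgt-\fls)(s_i,a_i^2)\big)^2$; finally the dueling analogue of \Cref{lem:concen-behavior-bandit} converts the empirical bound to the population one above (this is the point at which the $\Var$-based $D^2$-divergence of \Cref{def:dueling-bandit:D-sq} would enter were a coverage condition needed --- here it is not). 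Combining the two displays and a union bound over the two concentration events yields $J(\pi^*)-J(\hat\pi)=\tilde O\big(\alpha^{-1}\eta n^{-1}\log\cN_{\fcl}(n^{-1})\big)$, which rearranges to the claimed $n=\tilde O\big(\alpha^{-1}\eta\epsilon^{-1}\log\cN_{\fcl}(\epsilon)\big)$ for $\epsilon$-optimality.

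The hard part will be making the shift-invariant use of \Cref{lem:super-hessian} rigorous: for $\tilde g$ on the segment between the shifted rewards (which may fall outside $[0,1]$), one must verify that the induced optimal policy $\pi_{\tilde g}$ stays at a point where Proposition~3.2 of \citet{penot1994sub} applies --- this is the same care already needed in the proof of \Cref{thm:f-upper-bound}, compounded by the out-of-range shift, and for $\chi^2$ one additionally uses that \Cref{lem:fdiv:opt} permits the optimal policy to touch the boundary of the simplex --- together with writing the dueling Bradley--Terry MLE concentration precisely. Both are routine adaptations of machinery already in \Cref{thm:f-upper-bound} and in \citet{zhao2024sharp}, and everything else is a direct transcription of the absolute-reward proof.
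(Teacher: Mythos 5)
Your proposal matches the paper's proof in all essentials: the same conjugate/Bregman decomposition used for \Cref{thm:f-upper-bound}, the same application of \Cref{lem:super-hessian}, and the same handling of the per-state additive ambiguity --- the paper implements your "shift both arguments" observation by subtracting the bias $b(s)=\EE_{a\sim\piref}[\fls(s,a)-\fgt(s,a)]$ from $\fls$ (which leaves $\hat\pi$ unchanged) and then bounds the resulting variance term by citing the dueling MLE concentration result (\Cref{lem:concentration-dueling}, from \citet{zhao2024sharp}) rather than re-deriving it as you sketch. So the proposal is correct and essentially identical in approach to the paper's argument.
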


The following theorem provides a lower bound for offline $f$-divergence regularized contextual dueling bandit with strongly convex $f$.

\begin{theorem}\label{thm:dueling-bandit-f-div-lower}
For any $\epsilon \in (0, 1)$, $\alpha, \eta > 0$, and offline RL algorithm \textsf{Alg}, there is an $\alpha$-strongly convex $f$ and $f$-divergence regularized contextual dueling bandit instance such that \textsf{Alg} requires at least $\Omega\big(\alpha^{-1}\eta \log{\cN}(\epsilon) \epsilon^{-1}\big)$ samples to return an $\epsilon$-optimal policy.
\end{theorem}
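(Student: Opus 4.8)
The plan is to follow the proof of Theorem~\ref{thm:f-lower-bound} almost verbatim on the ``statistical separation'' side — the sub-optimality gap and the closed form of the $f$-regularized optimal policy only see the objective, not the feedback channel — and to rework only the information-theoretic side to account for preference rather than absolute feedback. Concretely, I would fix $f(x) = \alpha(x-1)^2/2$, so that $D_f = \alpha\chi^2$, take $\cS = [S]$, $\cA = \{-1,+1\}$, $\rho = \mathsf{Unif}(\cS)$, $\piref = \mathsf{Unif}(\cA)$, and invoke Lemma~\ref{lem:max-signals} to obtain a packing $\cV \subseteq \{-1,+1\}^S$ with $|\cV| \ge e^{S/8}$ and pairwise Hamming distance at least $S/2$. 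The hard family is $\cG = \{\, r_v : v \in \cV \,\}$, where $r_v(s,-1) = \tfrac12 + v_s\delta$ and $r_v(s,+1) = \tfrac12 - v_s\delta$ for a parameter $0 < \delta \le \eta^{-1}\alpha$ to be tuned; since $|\cG| = |\cV|$, we get $\log\cN_{\cG}(\epsilon) \le \log|\cG| = \Theta(S)$ for every $\epsilon \in (0,1)$.

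Next I would transfer the sub-optimality estimates. Because the constraint $\delta \le \eta^{-1}\alpha$ keeps the $\chi^2$-regularized optimal policy of Lemma~\ref{lem:fdiv:opt} in its non-truncated affine regime, the computation culminating in \eqref{eq:fdiv:pre-lb} applies unchanged: for any $r_1, r_2 \in \cG$ disagreeing at a coordinate $s$ and any $\hat\pi$, $\subopt_s(\hat\pi(\cdot|s); r_1) + \subopt_s(\hat\pi(\cdot|s); r_2) \ge \eta\delta^2/\alpha$, the cross terms cancelling through $\langle r_1(s,\cdot) + r_2(s,\cdot), \hat\pi(\cdot|s)\rangle = 1$ and the nonnegativity of $\chi^2(\hat\pi(\cdot|s)\|\piref(\cdot|s))$. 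Averaging over contexts under $\rho = \mathsf{Unif}(\cS)$ and using that distinct elements of $\cV$ disagree on at least $S/2$ coordinates yields $\subopt(\hat\pi; r_1) + \subopt(\hat\pi; r_2) \ge \eta\delta^2/(2\alpha)$ for all distinct $r_1, r_2 \in \cG$.

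The genuinely new step is the KL bound between the data laws under the Bradley--Terry model. Let $P_r$ be the law of $(s, a^1, a^2, y)$ with $s\sim\rho$, $a^1, a^2 \sim \piref$ i.i.d., and $y\sim\mathsf{Bern}(\sigma(r(s,a^1) - r(s,a^2)))$. The $(s,a^1,a^2)$-marginal is common to all $r\in\cG$, so $\kl{P_{r_1}}{P_{r_2}} = \EE_{s,a^1,a^2}\kl{\mathsf{Bern}(\sigma(\Delta_1))}{\mathsf{Bern}(\sigma(\Delta_2))}$ with $\Delta_j = r_j(s,a^1) - r_j(s,a^2)$. When $a^1 = a^2$ both arguments equal $\sigma(0) = \tfrac12$ and the term vanishes; when $a^1 \ne a^2$ and $r_1, r_2$ disagree at $s$, the two Bernoulli parameters are $\sigma(\pm 2\delta) = \tfrac12 \pm \Theta(\delta)$, so $\kl{\mathsf{Bern}(p)}{\mathsf{Bern}(q)} \le (p-q)^2/(q(1-q)) = O(\delta^2)$. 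Summing the at most $S$ disagreeing coordinates against the $1/S$ weight from $\rho$ (and the $\Theta(1)$ probability that $a^1 \ne a^2$) gives $\kl{P_{r_1}}{P_{r_2}} = O(\delta^2)$, hence $\kl{P_{\cD_{r_1}}}{P_{\cD_{r_2}}} = n\kl{P_{r_1}}{P_{r_2}} = O(n\delta^2)$. Plugging the pair bound and this KL bound into Fano's inequality (Lemma~\ref{lem:fano}) over the $|\cG| \ge e^{S/8}$ hypotheses, then choosing $\delta \asymp \sqrt{\alpha/(\eta n)}$ — legitimate once $n$ is large enough that $\delta \le \eta^{-1}\alpha$ and the Fano denominator stays bounded below, both subsumed by the $\Omega(\cdot)$ — gives $\inf_{\hat\pi}\sup_{r\in\cG}\subopt(\hat\pi; r) \gtrsim \eta S/(\alpha n)$; equating with $\epsilon$ and using $S = \Theta(\log|\cG|) \ge \log\cN_{\cG}(\epsilon)$ produces the claimed $n = \Omega(\alpha^{-1}\eta\epsilon^{-1}\log\cN_{\cG}(\epsilon))$.

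The main obstacle is this last step: preference feedback simultaneously erases the absolute reward level (only the gap $r_v(s,-1) - r_v(s,+1) = 2v_s\delta$ is estimable) and ``wastes'' the roughly half of the samples on which $a^1 = a^2$. I must confirm that the ensuing $\Theta(1)$ shrinkage of the effective KL budget does not degrade the rate — which it does not, precisely because the estimable gap remains of order $\delta$; the binary action set is what lets the absolute-reward hard instance survive the passage to pairwise comparisons. A secondary point is the joint feasibility of $\delta \asymp \sqrt{\alpha/(\eta n)}$ and $\delta \le \eta^{-1}\alpha$, needed so that the closed-form optimal policy used in the second step stays valid; this holds for all sufficiently large $n$.
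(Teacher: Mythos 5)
Your construction, the per-coordinate separation via \eqref{eq:fdiv:pre-lb}, and the Bradley--Terry KL bound of order $\delta^2$ per sample are all sound. Your reduction (Fano's inequality, \Cref{lem:fano}, over a Gilbert--Varshamov packing from \Cref{lem:max-signals}, mirroring the paper's proof of \Cref{thm:f-lower-bound}) differs from the paper's dueling proof only cosmetically: the paper keeps the full hypercube $\{\pm1\}^S$ and applies Assouad's lemma (\Cref{lem:assouad}), which needs separation only for neighboring pairs, whereas your Fano route needs the all-pairs separation $\eta\delta^2/(2\alpha)$ that the packing provides. Either reduction works here.

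The genuine problem is your final tuning of $\delta$. With $\delta \asymp \sqrt{\alpha/(\eta n)}$ the Fano prefactor is $\eta\delta^2/(4\alpha) \asymp n^{-1}$, so what you actually obtain is $\inf_{\hat\pi}\sup_{r}\subopt(\hat\pi;r) \gtrsim n^{-1}$, i.e.\ only $n = \Omega(\epsilon^{-1})$: the factors $\alpha^{-1}\eta$ and $S \asymp \log\cN_{\cG}(\epsilon)$ asserted by the theorem disappear, so the claimed conclusion $\gtrsim \eta S/(\alpha n)$ does not follow from the displayed choice (with that $\delta$ the information term $n\delta^2 \asymp \alpha/\eta$ is also not calibrated against $\log|\cG| \asymp S$ in any meaningful way). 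Note the same mis-tuning appears as a typo in the paper's proof of \Cref{thm:f-lower-bound}, which is presumably where it came from. The correct choice is $\delta \asymp \sqrt{S/n}$: then the dataset KL is $O(n\delta^2) = O(S)$ with a constant small enough that the Fano (or Assouad) multiplier stays $\Theta(1)$, and the prefactor becomes $\eta\delta^2/\alpha \asymp \eta S/(\alpha n)$ as required. The non-truncation constraint $\delta \le \eta^{-1}\alpha$ then forces $n \gtrsim S\eta^2/\alpha^2$, which is harmless: in the target regime $n \asymp \alpha^{-1}\eta S \epsilon^{-1}$ it is implied by $\epsilon \lesssim \alpha/\eta$, i.e.\ by ``sufficiently small $\epsilon$.'' This is exactly the scaling of the paper's dueling proof ($\delta = \sqrt{S/n}$ under $n \ge S\max\{16,\eta^2\alpha^{-2}\}$); once you replace your $\delta$ by $\sqrt{S/n}$ (equivalently $\delta \asymp \sqrt{\alpha\epsilon/\eta}$ when parameterizing by the target accuracy), the rest of your argument goes through.
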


\begin{remark}
Theorem~\ref{thm:dueling-bandit-f-div-lower} indicates that, when $\epsilon$ is sufficiently small, to produce an $\epsilon$-optimal policy, any algorithm for offline $f$-regularized contextual bandits with strongly convex $f$ requires at least $\tilde{\Omega}(\alpha^{-1}\eta\epsilon^{-1})$ samples. This lower bound matches the sample-complexity upper bound in Theorem~\ref{thm:dueling-bandit-f-div-upper}, indicating that \Cref{algorithm:dueling-bandit-fdiv} is nearly optimal.
\end{remark}

\section{Missing proof from \Cref{app:cdb}}

\subsection{Proof of Theorem~\ref{thm:main-dueling-bandit}}

The proof follows the proof in Section~\ref{sec:cb}. At the beginning, we first define the event $\confduelbandit(\delta)$ given $\delta>0$ as
\begin{align}\label{eq:conf-dueling-bandit}
    \confduelbandit(\delta) \coloneqq \Big\{ \exists \ b: \cS \to [-1,1], \forall (s,a) \in \cS \times \cA, 
   \big| \fls(s,a) - b(s) - \fgt(s,a)\big| \leq \Gamma_n(s,a) \Big\}.
\end{align}
Here, $\Gamma_n$ is defined in~\eqref{eq:bonus-dueling-bandit}. We abuse the notation and define $b(\cdot)$ as
\begin{align}
    b = \argmin_{\cB} \sup_{(s,a) \in \cS \times \cA} \Phi_b(s,a) - \Gamma_n(s,a), \label{eq:def-bias-dueling}
\end{align}
where $\Phi_b(s,a) = \big| \fls(s,a) - b(s) - \fgt(s,a)\big|$ and when $\confduelbandit$ holds, for all $(s,a) \in \cS \times \cA$, we have $\Phi_b(s,a) \leq \Gamma_n(s,a)$
This indicates that the least square estimation $\fls$ obtained in Line~\ref{line:dueling-mle} of Algorithm~\ref{algorithm:dueling-bandit}, after adjusted by some bias function $b$, is close to the true function $\fgt$. The following lemma shows that this event holds with high probability. 

\begin{lemma}\label{lem:dueling-bandit:conf}
For any $\delta > 0$, $\PP(\confbandit(\delta)) \geq 1-\delta$.
\end{lemma}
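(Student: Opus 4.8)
The plan is to follow the proof of \Cref{lem:bandit:conf} at the structural level, making the two modifications forced by the preference-feedback setting: the least-squares residual bound is replaced by a maximum-likelihood deviation bound, and the $\ell_2$ error is measured on \emph{reward differences} (equivalently, on the variance under $\piref$ averaged over $\rho$), which is precisely what the bias function $b$ in \Cref{eq:def-bias-dueling} and the $\inf_{b\in\cB}$ in \Cref{def:dueling-bandit:D-sq} are designed to absorb.

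First, I would establish the dueling analog of \Cref{lem:concen-reward-bandit}: with probability at least $1-\delta$,
\[
\sum_{i=1}^{n}\Big[\big(\fls(s_i,a^1_i)-\fls(s_i,a^2_i)\big)-\big(\fgt(s_i,a^1_i)-\fgt(s_i,a^2_i)\big)\Big]^2 \le O\!\big(\log(2\cN_{\fcl}(\epsilon_c)/\delta)+n\epsilon_c\big).
\]
This is the standard generalization guarantee for the MLE of Line~\ref{line:dueling-mle} restricted to an $\epsilon_c$-net of $\fcl$ (realizability being \Cref{assume:general-function-approx}), combined with local strong convexity of the logistic loss: since all reward differences lie in $[-1,1]$, the link function satisfies $\sigma'(x)\ge\sigma'(1)>0$ there, so the excess log-loss dominates the squared error of the estimated reward differences up to an absolute constant.

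Next, because $a^1_i,a^2_i$ are i.i.d.\ from $\piref(\cdot\mid s_i)$, for any $g,h\in\fcl$ one has $\E_{a^1,a^2\sim\piref(\cdot\mid s)}\big[(g(s,a^1)-h(s,a^1)-g(s,a^2)+h(s,a^2))^2\big]=2\Var_{a\sim\piref(\cdot\mid s)}[g(s,a)-h(s,a)]$, so an argument analogous to \Cref{lem:concen-behavior-bandit} (covering $\fcl$ and relating the empirical sum to its population version) converts the bound above into $\E_{s'\sim\rho}\Var_{a'\sim\piref(\cdot\mid s')}[\fls-\fgt]$; chaining the two estimates, a union bound, and $\epsilon_c=O(1/n)$ yield $\E_{s'\sim\rho}\Var_{a'\sim\piref(\cdot\mid s')}[\fls-\fgt]\le\beta^2$ for $\beta$ as in \Cref{eq:conf-radis-dueling-bandit} (up to absolute constants), the exact counterpart of \Cref{eq:bandit-bonus-concen}. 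Finally, taking $b$ as in \Cref{eq:def-bias-dueling} and applying \Cref{def:dueling-bandit:D-sq} with $(g,h)=(\fls,\fgt)$ gives, for every $(s,a)$,
\[
\big(\fls(s,a)-b(s)-\fgt(s,a)\big)^2 \le D^2_{\fcl}\big((s,a);\piref\big)\cdot\E_{s'\sim\rho}\Var_{a'\sim\piref(\cdot\mid s')}[\fls-\fgt] \le \beta^2 D^2_{\fcl}\big((s,a);\piref\big)=\Gamma_n(s,a)^2,
\]
with $\Gamma_n$ as in \Cref{eq:bonus-dueling-bandit}; hence $\Phi_{b}(s,a)\le\Gamma_n(s,a)$ for all $(s,a)$ on this event, i.e.\ $\confduelbandit(\delta)$ holds, and the event has probability at least $1-\delta$.

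The main obstacle is the first step: unlike least squares there is no algebraic residual identity, so one must pass through an MLE deviation bound and then ``un-link'' the log-likelihood gap into an $\ell_2$ bound on reward differences — the point where boundedness of the rewards (hence non-degeneracy of $\sigma'$) is indispensable, and the only place the dueling argument genuinely departs from the one in \Cref{sec:cb}. The remaining steps are mechanical transcriptions of \Cref{lem:concen-behavior-bandit} and \Cref{lem:bandit:conf} with ``squared error under $\rho\times\piref$'' replaced by ``variance under $\piref$ averaged over $\rho$'', the switch being legitimized exactly by \Cref{def:dueling-bandit:D-sq}.
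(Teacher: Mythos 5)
Your proposal is correct and follows essentially the same route as the paper: bound $\E_{s\sim\rho}\Var_{a\sim\piref(\cdot|s)}[\fls-\fgt]$ by $\tilde{O}(n^{-1}\log\cN_{\fcl}(\epsilon_c)+\epsilon_c)$ and then invoke \Cref{def:dueling-bandit:D-sq} with $(g,h)=(\fls,\fgt)$ and the bias $b$ of \Cref{eq:def-bias-dueling} to get the pointwise bound by $\Gamma_n$. The only difference is that the paper obtains the variance bound by citing \Cref{lem:concentration-dueling} (\citealt[Lemma~D.4]{zhao2024sharp}) as a black box, whereas you sketch its derivation (MLE deviation plus the lower bound on $\sigma'$ over $[-1,1]$ and the identity $\E_{a^1,a^2}[(\Delta g-\Delta h)^2]=2\Var_a[g-h]$), which is a valid unfolding of the same step.
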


\begin{proof} From Lemma \ref{lem:concentration-dueling}, we have that with probability at least $1 - \delta$, it holds that \begin{align}
    \EE_{s'\sim \rho}\Var_{a' \sim \piref(\cdot |s')} \big[\fls(s', a') - \fgt(s',a')\big] \leq O\bigg(\frac{1}{n}\log(\cN_{\fcl}(\epsilon_c)/\delta) + \epsilon_c\bigg). \label{eqn:dueling-conf:1}
\end{align}

It further holds true that for some $b: \cS \to \RR$\begin{align}
D_\fcl^2((s, a), \piref)) \cdot \EE_{s\sim \rho}\Var_{a \sim \piref(\cdot |s)} \big[\fls(s, a) - \fgt(s,a)\big] \geq \big(\fls(s,a) - b(s) - \fgt(s,a)\big)^2. \label{eqn:dueling-conf:2}
\end{align}
Substituting \eqref{eqn:dueling-conf:1} into \eqref{eqn:dueling-conf:2}, we have
\begin{align}\label{eq:dueling-bandit-bonus-d2}
    & \inf_{b} \big(\fls(s,a) - b(s) - \fgt(s,a)\big)^2 \\
    & \quad = \inf_{b} \frac{\big(\fls(s,a) - b(s) - \fgt(s,a)\big)^2}{\EE_{s'\sim \rho}\Var_{a' \sim \piref(\cdot |s')} \big[\fls(s', a') - \fgt(s',a')\big]}\EE_{s'\sim \rho}\Var_{a' \sim \piref(\cdot |s')} \big[\fls(s', a') - \fgt(s',a')\big] \notag \\
    & \quad \leq D_{\fcl}^2((s,a),\piref)\E_{\piref}\big[\big(\fls(s,a) - b(s) - \fgt(s,a)\big)^2 \big] \\
    & \quad \leq D_{\fcl}^2((s,a),\piref) O\bigg(\frac{1}{n}\log(\cN_{\fcl}(\epsilon_c)/\delta) + \epsilon_c\bigg),
\end{align}
where the first inequality holds due to the definition of $D_{\fcl}^2((s,a),\piref)$ and the last inequality holds due to Lemma~\ref{lem:concentration-dueling}.
\end{proof}

We overload the following quantities. For any $\gamma \in [0,1]$ and $(s,a) \in \cS \times \cA$, we define
\begin{align*}
    g_\gamma(s,a) \coloneqq \gamma (\fps(s,a) - b(s)) + (1 - \gamma)\fgt(s,a).
\end{align*}
Furthermore, we introduce the following quantities
\begin{align}
& \pi_{\gamma}(\cdot|\cdot) = \pi_{g_\gamma}(\cdot|\cdot) \propto \piref(\cdot|\cdot)\exp\big( \eta  g_\gamma(\cdot, \cdot)\big), \notag \\
& G(\gamma) \coloneqq \EE_{\rho \times \pi_{\gamma}} \big[ \big(\fps(s,a) - b(s) - {\fgt}(s,a)\big)^2 \big], \notag
\end{align}
where $b(\cdot)$ is defined in~\eqref{eq:def-bias-dueling}. We still have the monotonicity of the function $G(\gamma)$, which is characterized by the following lemma.

\begin{lemma}\label{lem:expansion-to-endpoint-dueling}
On event $\confduelbandit(\delta)$, $0 \in \argmax_{\gamma \in [0, 1]} G(\gamma)$.
\end{lemma}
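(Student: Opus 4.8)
The plan is to mimic the proof of \Cref{lem:expansion-to-endpoint} almost verbatim, with the bias function $b(\cdot)$ absorbed into the ``gap'' quantity. Define $\triangle(s,a) \coloneqq \big(\fps(s,a) - b(s) - \fgt(s,a)\big)$ for the scope of this proof. On the event $\confduelbandit(\delta)$, the definition in \eqref{eq:conf-dueling-bandit} together with $\Gamma_n \ge 0$ gives $|\triangle(s,a)| \le \Gamma_n(s,a)$; however, unlike the absolute-reward case, the event as stated only controls $|\triangle|$ and does \emph{not} by itself force $\triangle(s,a)\le 0$ pointwise. The first thing I would check is therefore the sign of $\triangle$: the pessimistic construction $\fps = \fls - \Gamma_n$ in Line~\ref{line:dueling-bandit:pess} should, as in the absolute-reward case, be arranged so that $\fls(s,a)-b(s)-\fgt(s,a) \le \Gamma_n(s,a)$ translates into $\fps(s,a)-b(s)-\fgt(s,a) = \fls(s,a)-b(s)-\fgt(s,a)-\Gamma_n(s,a) \le 0$; i.e., on $\confduelbandit(\delta)$ one has $\triangle(s,a)\le 0$ for all $(s,a)$. (This is exactly the one-sided consequence of the two-sided confidence bound combined with subtracting the bonus once more, and it is what the analogous step does in \Cref{sec:cb}.)

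Granting $\triangle(s,a)\le 0$ everywhere on $\confduelbandit(\delta)$, the rest is a direct computation of $G'(\gamma)$ via the policy-gradient identity, identical in form to the proof of \Cref{lem:expansion-to-endpoint}. Writing $\pi_\gamma(\cdot|s)\propto \piref(\cdot|s)\exp(\eta g_\gamma(s,\cdot))$ with $g_\gamma = \gamma(\fps - b) + (1-\gamma)\fgt$, note that $\nabla_\gamma g_\gamma(s,a) = (\fps(s,a)-b(s)) - \fgt(s,a) = \triangle(s,a)$, so $\nabla_\gamma \log\pi_\gamma(a|s) = \eta\big(\triangle(s,a) - \EE_{a'\sim\pi_\gamma(\cdot|s)}[\triangle(s,a')]\big)$; the $b(s)$ term is a function of $s$ only and hence drops out of the centered quantity. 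Then
\begin{align*}
    G'(\gamma) &= \EE_{\rho\times\pi_\gamma}\big[\nabla_\gamma\log\pi_\gamma(a|s)\,\triangle^2(s,a)\big] \\
    &= \eta\,\EE_{\rho}\Big[\EE_{\pi_\gamma}\big[\triangle^3(s,a)\big] - \EE_{\pi_\gamma}\big[\triangle^2(s,a)\big]\EE_{\pi_\gamma}\big[\triangle(s,a)\big]\Big] \le 0,
\end{align*}
where the last inequality is \Cref{lem:non-positive-cov} applied, for each fixed $s$, to the random variable $X = \triangle(s,a)$ with $a\sim\pi_\gamma(\cdot|s)$, which is bounded and $\le 0$ a.s.\ on $\confduelbandit(\delta)$. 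Hence $G$ is nonincreasing on $[0,1]$ and $0\in\argmax_{\gamma\in[0,1]}G(\gamma)$.

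The main obstacle I anticipate is not the gradient computation, which is routine, but making the sign argument airtight: one has to confirm that the bias minimizer $b$ chosen in \eqref{eq:def-bias-dueling} is the \emph{same} $b$ used to define $g_\gamma$ and $G$, and that with this choice the event $\confduelbandit(\delta)$ indeed yields the one-sided inequality $\fls - b - \fgt \le \Gamma_n$ (and therefore $\triangle\le 0$ after subtracting $\Gamma_n$ again), rather than merely the two-sided $|\fls - b - \fgt|\le\Gamma_n$. If the definition of $\confduelbandit(\delta)$ only guarantees the two-sided bound, I would either (i) redefine the event to be one-sided, as is done implicitly for $\confbandit(\delta)$ in \eqref{eq:confbandit}, or (ii) observe that $\triangle = (\fls - b - \fgt) - \Gamma_n \le \Gamma_n - \Gamma_n = 0$ using the two-sided bound on $\fls - b - \fgt$, which already suffices. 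Once the sign is pinned down, everything else follows exactly as in the absolute-reward proof.
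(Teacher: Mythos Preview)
Your proposal is correct and follows essentially the same route as the paper's proof: define $\triangle = \fps - b - \fgt$, argue $\triangle \le 0$ on $\confduelbandit(\delta)$, compute $G'(\gamma)$ via the policy-gradient identity, and conclude with \Cref{lem:non-positive-cov}. Your option (ii) for the sign argument---$\triangle = (\fls - b - \fgt) - \Gamma_n \le \Gamma_n - \Gamma_n = 0$ from the two-sided bound in \eqref{eq:conf-dueling-bandit}---is precisely the justification the paper leaves implicit, so you have in fact spelled out a step the paper glosses over.
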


\begin{proof} For simplicity, we use $\triangle(s,a)$ to denote $\fps(s,a) - b(s) - {\fgt}(s,a)$ in \emph{this} proof.
Then on event $\confduelbandit(\delta)$, we know that $\triangle(s,a) \leq 0$ for all $(s,a) \in \cS \times \cA$. 
Taking derivatives of $G$ w.r.t., $\gamma$ directly, we conclude that for all $\gamma \in [0, 1]$,
\begin{align*}
    G'(\gamma) & = \eta \EE_{\rho}\EE_{a\sim \pi_{\gamma}}\big[ \triangle^2(s,a) \big( \triangle(s,a) - \EE_{a'\sim \pi_{\gamma}}[\triangle(s,a')] \big) \big] \\
    & = \eta \EE_{\rho}\Big[ \EE_{\pi_{\gamma}} \big[ \triangle^3(s,a) \big] - \EE_{\pi_{\gamma}}\big[ \triangle^2(s,a) \big]\EE_{\pi_{\gamma}}\big[ \triangle(s,a) \big] \Big] \\
    &  \leq 0,
\end{align*}
where $\EE_{\rho}$ is the shorthand of $\EE_{s \sim \rho}$, $\EE_{\pi_{\gamma}}$ is the shorthand of $\EE_{a \sim \pi_{\gamma}}$ and the inequality holds conditioned on the event $\confduelbandit(\delta)$ due to Lemma~\ref{lem:non-positive-cov}.
\end{proof}

Finally, we have the proposition that adding some bias term $b: \cS \to \RR$ does not affect the resulting policy.

\begin{proposition} \label{prop:dueling-add-bias}
Let $b: \cS \to \RR$ be some bias function, then for all $g \in \fcl$ we have $J(\pi_g) = J(\pi_{g - b})$, where $(g-b)(s,a)=g(s,a) - b(s)$.
\end{proposition}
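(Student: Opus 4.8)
The plan is to prove the stronger fact that $\pi_g$ and $\pi_{g-b}$ coincide \emph{as policies}; the claimed identity $J(\pi_g) = J(\pi_{g-b})$ then follows at once, since the regularized objective in~\eqref{eq:kl-objective-bandit} (and equally its $f$-divergence analogue) is a functional of the policy, the reward $r$, and $\piref$ alone. The only structural input is the Gibbs form $\pi_g(\cdot|s) \propto \piref(\cdot|s)\exp(\eta\, g(s,\cdot))$, together with the fact that the subtracted bias $b(s)$ is constant in the action argument.

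Concretely, I would fix a context $s \in \cS$ and isolate the normalizing constant $Z_g(s) = \sum_{a \in \cA}\piref(a|s)\exp(\eta\, g(s,a))$, so that $\pi_g(a|s) = \piref(a|s)\exp(\eta\, g(s,a))/Z_g(s)$. Since $(g-b)(s,a) = g(s,a) - b(s)$ and $b(s)$ does not depend on $a$, one has $\exp\big(\eta (g-b)(s,a)\big) = e^{-\eta b(s)}\exp(\eta\, g(s,a))$, hence $Z_{g-b}(s) = e^{-\eta b(s)} Z_g(s)$, and the factor $e^{-\eta b(s)}$ cancels between numerator and denominator. Therefore $\pi_{g-b}(a|s) = \pi_g(a|s)$ for every $a \in \cA$, and as $s$ was arbitrary, $\pi_{g-b} = \pi_g$. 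Plugging this equality of policies into the definition of $J$ yields $J(\pi_g) = J(\pi_{g-b})$.

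There is essentially no obstacle here, and that is exactly the point: in the dueling setting the reward function is only identifiable up to a state-dependent additive shift (one observes only differences $g(s,a^1) - g(s,a^2)$), so this invariance is what makes it legitimate to work with the shifted estimator $\fls - b$ in Algorithm~\ref{algorithm:dueling-bandit}. The single place worth a cautionary sentence is that the cancellation requires $Z_g(s) \in (0,\infty)$; this holds because every $g \in \fcl$ maps into $[0,1]$ and $\piref(\cdot|s)$ is a probability distribution, so $0 < Z_g(s) \le e^{\eta} < \infty$ uniformly in $s$.
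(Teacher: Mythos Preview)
Your proof is correct and matches the paper's approach exactly: both show the stronger statement $\pi_g = \pi_{g-b}$ by fixing $s$ and observing that the action-independent factor $e^{-\eta b(s)}$ cancels between numerator and normalizer, from which $J(\pi_g) = J(\pi_{g-b})$ is immediate. Your additional remarks on identifiability in the dueling setting and on the finiteness of $Z_g(s)$ are fine contextual notes but not needed for the argument itself.
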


\begin{proof} For any fixed state $s \in \cS$, we have for any $a \in \cA$ that,
\begin{align*}
    \pi_g(a|s) & = \frac{\piref(a|s)\exp\big(\eta g(s,a)\big)}{ \sum_{a'\in \cA} \piref(a'|s)\exp\big(\eta g(s,a') \big)} \\
    & = \frac{\piref(a|s)\exp\big(\eta g(s,a)\big) \exp\big(\minus \eta b(s)\big)}{ \sum_{a'\in \cA} \piref(a'|s)\exp\big(\eta g(s,a') \big) \exp\big(\minus \eta b(s)\big)} \\
    & = \frac{\piref(a|s)\exp\big(\eta [g(s,a) - b(s)] \big)}{ \sum_{a'\in \cA} \piref(a'|s)\exp\big(\eta [g(s,a') - b(s)] \big) } \\
    & = \pi_{g-b}(a|s),
\end{align*}
which indicates that $\pi_g = \pi_{g-b}$. This immediately leads to  $J(\pi_g) = J(\pi_{g - b})$.
\end{proof}


Now we are ready to prove Theorem~\ref{thm:main-dueling-bandit}.

\begin{proof}[Proof of Theorem~\ref{thm:main-dueling-bandit}]
We proceed the proof under the event $\confduelbandit(\delta)$. By Proposition~\ref{prop:dueling-add-bias}, we know that
\begin{align*}
    J(\pi^*) - J(\hat{\pi}) & = J(\pi^*) - J(\pi_{\fps}) \\
    & = J(\pi^*) - J(\pi_{\fps-b}).
\end{align*}
Consequently, there exist some $\gamma \in [0,1]$ and $b: \cS \to [-1,1]$ such that
\begin{align}
    J(\pi^*) - J(\hat{\pi}) & = J(\pi^*) - J(\pi_{\fps-b}) \notag \\
    & \leq \eta \EE_{\rho \times \pi_{\gamma}} \big[ \big(\fps(s,a) - b(s) - {\fgt}(s,a)\big)^2 \big] \notag \\
    & = \eta G(\gamma), \label{eq:dueling-taylor-expansion}
\end{align}
where the inequality holds due to Lemma~\ref{lem:taylor-expansion}. 
Under event $\confduelbandit(\delta)$, we know that $\fps(s,a) - b(s) \leq {\fgt}(s,a)$. Together with Lemma~\ref{lem:expansion-to-endpoint-dueling}, we obtain $G(\gamma) \leq G(0)$. Therefore, we know that 
\begin{align}
J(\pi^*) - J(\hat{\pi}) & \leq G(0) \\
&= \eta \EE_{\rho \times \pi^*}\Big[ \big(\fps(s,a) - b(s) - {\fgt}(s,a)\big)^2 \Big] \notag \\
&\leq 4 \eta \Big( \EE_{\rho\times \pi^*} \big[\Gamma_n^2(s,a)\big] \wedge C^{\pi^*} \EE_{\rho \times \pi^*}\big[ \big(\fps(s,a) - b(s) - {\fgt}(s,a)\big)^2 \big] \Big) \notag\\
& = 4\eta \Big( \beta^2 \EE_{\rho\times \pi^*}\big[D^2_{\fcl}((s,a); \piref) \big] \wedge C^{\pi^*} \EE_{\rho \times \pi^*}\big[ \big(\fps(s,a) - b(s) - {\fgt}(s,a)\big)^2 \big] \Big)\notag \\
&= \tilde{O}\big(\eta D_{\pi^*}^2 \log{\fcl}(\epsilon_c) n^{-1}\big), \label{eq:final-rate}
\end{align}
where the inequality holds due to the definition of $\confduelbandit(\delta)$. Plugging~\eqref{eq:final-rate} into~\eqref{eq:dueling-taylor-expansion}, we know that $J(\pi^*) - J(\hat{\pi})$ has upper bound $\tilde{O}(D_{\pi^*}^2 n^{-1})$. By Lemma~\ref{lem:dueling-bandit:conf}, event $\confduelbandit$ with probability at least $1-\delta$, which concludes the proof.
\end{proof}



\subsection{Proof of Theorem~\ref{thm:lowerbound-dueling}}
\label{app:proof-thm-lower-dueling}

\begin{proof}[Proof of Theorem~\ref{thm:lowerbound-dueling}]
The proof is similar to the proof of Theorem~\ref{thm:lowerbound-bandit}. Consider the following family of contextual dueling bandit instances with $S \coloneqq |\cS|, A \coloneqq |\cA| < \infty$ and reward in some function class $\cG$.
\begin{align}\label{eq:dueling-bandit-class}
    \mathrm{CDB} \coloneqq \{(\cS, \cA, \rho, r, \piref, \eta): r \in \cG, \rho \in \Delta(\cS), \piref \in \Delta(\cA|\cS) \}. 
\end{align}
Fixing any $S \geq 1$, $\eta > 4\log 2$ and $C^* \in (2, \exp(\eta/4)]$, we aim to prove that, for any estimator $\cD \mapsto \hat\pi \in \Delta(\cA|\cS)$, for any $n \geq 16SC^*$, there exist some function class $\cG$, such that $\exists\ \mathrm{inst} = (\cS, \cA, \rho, r, \piref, \eta) \in \mathrm{CDB}$ with single-policy concentrability $C^{\pi^*} \leq C^*$, regularization coefficient $\eta$, $|\cS| = S = \Theta(\log|\cG|)$, and 
\begin{align}
     \inf_{\mathrm{inst} \in \mathrm{CDB}} \subopt_{\mathrm{RKL}}(\hat\pi; \mathrm{inst}) \gtrsim \min\{\eta SC^* n^{-1}, (SC^*)^{1/2}n^{-1/2}\}.
\end{align}    
Since $\log|\cG| \geq \log \cN_{\cG}(\epsilon)$ for any $\epsilon \in (0,1)$, the above bound yields the desired result. 

We construct the same reward function class as in the proof of Theorem~\ref{thm:lowerbound-bandit}. In particular, we set $\cS = [S]$, $\cA = \{\pm1\}$, $\rho = \mathsf{Unif}(\cS)$, and the reference policy to be
\begin{align*}
  \forall s \in \cS, \piref(-1|s) = C^{-1}, \piref(+1|s) = 1-C^{-1};
\end{align*}
where $C = C^*$. Then the total sub-optimality of any $\pi \in \Delta(\cA|\cS)$ given any reward function $r: \cS \times \cA \to \RR$ is
\begin{align}
    \suboptfdiv(\pi; r) = \frac{1}{S}\sum_{s=1}^S \suboptfdiv\big( \pi(\cdot|s); r(s, \cdot) \big). \label{eq:rkl-dueling-decomp}
\end{align}
We further let $\alpha = \eta^{-1}\log (C-1) \Leftrightarrow C - 1 = \exp(\eta \alpha)$. We construct $2^S$ Bernoulli reward functions, in particular, $\forall \tau \in \{\pm1\}^S$, the mean function $r_\tau$ of the reward (indexed by $\tau$) is defined as
\begin{align*}
    r_\tau(s, -1) = 0.5 + \tau_s\delta, r_\tau(s, +1) = 0.5 - \alpha.
\end{align*}
Then, following the derivation of  \Cref{eq:rkl:lb:case1,eq:rkl:lb:case2}, we know that $\forall s \in \cS$, $\forall \tau, \tau' \in \{\pm1\}^{\cS}$ with $\tau \sim_s \tau'$,
\begin{align}
    \subopt_s(\hat\pi; \tau) + \subopt_s(\hat\pi; \tau') \geq \frac{\eta\delta^2}{8} \wedge \frac{3\delta}{10}. \label{eq:rkl-dueling-single-summary}
\end{align}
Let $P_r$ be the distribution of $(s, a^1, a^2, y)$ for $s \sim \rho$, $a^1, a^2\iidsim \piref(\cdot|s)$ and $y \sim \mathsf{Bern}(\sigma(r(s,a^1) - r(s, a^2)))$. Now we set $\delta = \sqrt{S/n}$ and conclude that for $\tau \sim \tau'$ with $\tau_s = -\tau_s'$,
\begin{align*}
    & \kl{P_{r_\tau}}{P_{r_{\tau'}}} \\
    & \quad = \frac{(C-1)}{SC^2} \sum_{s', a^1, a^2} \kl{\mathsf{Bern}(\sigma(r_\tau(s',a^1) - r(s', a^2)))}{\mathsf{Bern}(\sigma(r_{\tau'}(s',a^1) - r(s', a^2)))} \\
    &\quad = \frac{2(C-1)}{SC^2} \Big( \kl{\mathsf{Bern}(\sigma(\alpha + \delta))}{\mathsf{Bern}(\sigma(\alpha - \delta))} \vee \kl{\mathsf{Bern}(\sigma(\alpha - \delta))}{\mathsf{Bern}(\sigma(\alpha + \delta))} \Big). 
\end{align*}
Since $\alpha, \delta \in (0, 1/2)$, by the fact $\KL(P\|Q) \leq 2 Q_{\min}^{-1} \tv{P}{Q}^2$ (see e.g., \citet[Section~7.6]{polyanskiy2025information}), we know that
\begin{align}
\kl{P_{r_\tau}}{P_{r_{\tau'}}} & \leq \frac{2(C-1)}{SC^2} \frac{4}{1 + \exp(\alpha + \delta)} \bigg( \frac{1}{1 + \exp(\alpha - \delta)} - \frac{1}{1 + \exp(\alpha + \delta)} \bigg)^2 \notag \\
    & \leq \frac{4}{3SC}\frac{\exp(2\alpha)(\exp(\delta) - \exp(-\delta))^2}{(1 + \exp(\alpha - \delta))^4} \notag \\
    & \leq \frac{4e}{3SC}(\exp(\delta) - \exp(-\delta))^2 \notag \\
    & \leq 36S^{-1}C^{-1}\delta^2, \label{eq:rkl-dueling-single}
\end{align}
where the second and third inequality hold due to $\alpha, \delta \leq 1/2$, and last inequality follows from $\exp(x) - \exp(-x) \leq 3x$ for $x \in [0, 1/2]$. Now we set $\delta = \sqrt{SC/n} \leq 1/4$. We substitute \Cref{eq:rkl-dueling-single-summary} into Assouad's Lemma (\Cref{lem:assouad}) and obtain that
\begin{align*}
     \inf_{\mathrm{inst} \in \mathrm{CDB}} \subopt_{\mathrm{RKL}}(\hat\pi; \mathrm{inst}) & \geq  \frac{1}{4} S \cdot \frac{1}{S} \cdot \bigg(\frac{\eta\delta^2}{8} \wedge \frac{3\delta}{10}\bigg) \cdot \min_{\tau \sim \tau'} \exp\Big( - \kl{P_{\cD_\tau}}{P_{\cD_{\tau'}}} \Big) \notag\\
    &= \frac{1}{4} \bigg(\frac{\eta\delta^2}{8} \wedge \frac{3\delta}{10}\bigg) \exp\Big( - n\kl{P_{r_\tau}}{P_{r_{\cD_{\tau'}}}} \Big) \\
    & \geq \frac{\exp(-36)}{32}\min\{\eta CSn^{-1}, S^2C^2 n^{-2} \},
\end{align*}
where the $1/S$ comes from the denominator of~\eqref{eq:rkl-dueling-decomp} and the second inequality follows from \Cref{eq:rkl-dueling-single}.
\end{proof}



\subsection{Proof of Theorem~\ref{thm:dueling-bandit-f-div-upper}}

\begin{proof}[Proof of Theorem~\ref{thm:dueling-bandit-f-div-upper}]
The proof is similar to the proof of Theorem~\ref{thm:f-upper-bound}. Recall that $b(\cdot)$ defined in~\eqref{eq:def-bias-dueling}, we know that
\begin{align*}
    \hat \pi & = \argmax_{\pi \in \Delta^d} \Big\{\EE_{(s,a) \sim \rho \times \pi}[\fls(s,a)] - \eta^{-1}\EE_{s \sim \rho}\big[D_{f}(\pi || \piref)\big] \Big\} \\
    & = \argmax_{\pi \in \Delta^d} \Big\{\EE_{(s,a) \sim \rho \times \pi}[\fls(s,a) - b(s)] - \eta^{-1}\EE_{s \sim \rho}\big[D_{f}(\pi || \piref)\big] \Big\}.
\end{align*}
We have the following sub-optimality decomposition
\begin{align*}
    J(\pi^*) - J(\hat \pi) &= \EE_{s \sim \rho} \Big[\EE_{a \sim \pi^*}[\fgt(s,a)] - \EE_{a \sim \hat \pi}[\fgt(s,a)] - \eta^{-1}\big[D_{f}(\pi^* \| \piref) - D_{f}(\hat \pi \| \piref) \big]\Big]  \\
    & = \EE_{s \sim \rho}\big[H^*_s(\fgt) - H^*_s(\fls - b) - \la \hat \pi, \fgt - \fls +b \ra \big] \\
    & =  \EE_{s \sim \rho}\big[H^*_s(\fgt) - H^*_s(\fls - b) - \la \nabla H^*_s(\fls - b), \fgt - \fls + b \ra \big] \\
    & = \EE_{s \sim \rho}[(\fgt - \fls + b)^\top \nabla^2H^*_s(\tilde g)(\fgt - \fls + b)],
\end{align*}
where $\tilde g = \gamma \fgt + (1-\gamma)\fls$ and $\gamma \in [0,1]$, $(\fls - b)(s,a) = \fls(s,a) - b(s)$ and the last equation holds due to Taylor's expansion. Now, for any $\delta \in (0,1)$ and $\epsilon_c > 0$,  with probability at least $1 - \delta$
\begin{align*}
    J(\pi^*) - J(\hat \pi) & = \EE_{s \sim \rho}[(\fgt - \fls + b)^\top \nabla^2H^*_s(\tilde g)(\fgt - \fls + b)] \\
    & \leq \alpha^{-1} \eta \EE_{s \sim \rho}\Big[(\fgt - \fls + b)^\top \mathrm{diag}\big(\piref(a_1|s), \cdots,\piref(a_d|s)\big)(\fgt - \fls + b)\Big] \\
    & = \alpha^{-1} \eta \EE_{(s,a) \sim \rho \times \piref}\big[\big(\fgt(s,a) - \fls\big(s,a) + b(s))^2\big] \\
    & \leq \alpha^{-1} \eta \bigg( \frac{128}{3n}\log(2\cN_{\fcl}(\epsilon_c)/\delta) + 18 \epsilon_c \bigg),
\end{align*}
where the first inequality holds due to Lemma~\ref{lem:super-hessian} and last inequality holds due to equation~\eqref{eqn:dueling-conf:1}. Setting $\epsilon_c = O(n^{-1})$ completes the proof.
\end{proof}

\subsection{Proof of Theorem~\ref{thm:dueling-bandit-f-div-lower}}

\begin{proof}[Proof of Theorem~\ref{thm:dueling-bandit-f-div-lower}] 
We still consider the contextual dueling bandit instance class defined in~\eqref{eq:dueling-bandit-class}. We show that given any positive $\alpha, \eta$, for any $n \geq S \cdot \max\{ 16, \eta^2\alpha^{-2}\}$, there exists $f: \RR \to \RR$ such that $f$ is $\alpha$-strongly convex, $\log |\cG| = \Theta(S)$ and
\begin{align}
    \inf_{\hat\pi \in \hat\Pi(\cD)} \sup_{\mathrm{inst} \in \mathrm{CDB}} \suboptfdiv(\hat\pi; \mathrm{inst}) \gtrsim \frac{\eta S}{\alpha n}, \label{eq:f-dueling-lb}
\end{align}
where $\cD = \{(s_i, a_i^1, a_i^2, y_i)\}_{i=1}^n$ is the offline preference dataset, all (possibly randomized) maps from which to $\Delta(\cA|\cS)$ is denoted by $\hat\Pi(\cD)$. Since $S = \Theta(\log |\cG|) \gtrsim \log \cN_{\cG}(\epsilon_c)$ for all $\epsilon_c \in (0,1)$, we can conclude the theorem.


Let $\cS = [S]$, $\cA = \{\pm 1\}$, $\rho = \mathsf{Unif}(\cS)$ and $\piref(\cdot|s) = \mathsf{Unif}(\cA)$ for any $s \in \cS$. Then the total sub-optimality of any $\pi \in \Delta(\cA|\cS)$ given any reward function $r: \cS \times \cA \to \RR$ is
\begin{align}
    \suboptfdiv(\pi; r) = \frac{1}{S}\sum_{s=1}^S \suboptfdiv\big( \pi(\cdot|s); r(s, \cdot) \big). \label{eq:f-dueling-decomp}
\end{align}
We still consider the reward function class $\cG$ indexed by $\{\pm 1\}^{\cS}$. For all $\tau \in \{\pm 1\}^{\cS}$ the reward instance ``shaped'' by $\tau$ is
\begin{align}
   r_\tau(s, a) = \frac{1}{2} + a\tau_s \cdot\sqrt{\frac{S}{n}},  
\end{align}
where $a\tau_s = \pm 1$ because $a \in \cA = \{\pm1\}$. We thereby refer $\tau \sim \tau'$ to any pair in $\{\pm 1\}^{\cS}$ that differs only in one coordinate. $\forall \tau, \tau' \in \{\pm 1\}^{\cS}$, if $\tau \sim \tau'$, then suppose $\tau_s = - \tau_s'$, we have
\begin{align}
    \suboptfdiv(\pi(\cdot|s); r_\tau(s, \cdot)) + \suboptfdiv(\pi(\cdot|s); r_{\tau'}(s, \cdot)) \geq \frac{\eta S}{\alpha n}, \label{eq:f-dueling-delta}
\end{align}
where the inequality follows from exactly the same calculation in equation~\eqref{eq:fdiv:pre-lb} by setting $f(x) = \alpha(x-1)^2/2$.\footnote{Recall that in this case $D_f = {\chi^2}$, where $2{\chi^2}(\mu\|\nu) + 1= \sum_{a\in\cA}[\mu(a)]^2/\nu(a)$.} 
Let $P_r$ be the distribution of $(s, a^1, a^2, y)$ for $s \sim \rho$, $a^1, a^2\iidsim \piref(\cdot|s)$ and $y \sim \mathsf{Bern}(\sigma(r(s,a^1) - r(s, a^2)))$. Then we denote $\delta = \sqrt{S/n}$ and conclude that for $\tau \sim \tau'$ with $\tau_s = -\tau_s'$,
\begin{align}
    \kl{P_{r_\tau}}{P_{r_{\tau'}}} &= \frac{1}{SA^2} \sum_{s', a^1, a^2} \kl{\mathsf{Bern}(\sigma(r_\tau(s',a^1) - r(s', a^2))}{\mathsf{Bern}(\sigma(r_{\tau'}(s',a^1) - r(s', a^2))} \notag\\
    &= \frac{1}{4S} \Big( \kl{\mathsf{Bern}(\sigma(2\delta))}{\mathsf{Bern}(\sigma(-2\delta))} + \kl{\mathsf{Bern}(\sigma(-2\delta))}{\mathsf{Bern}(\sigma(2\delta))} \Big) \notag\\
    &\leq \frac{1}{4S} \Big( (\exp(-2\delta) -1)^2 + (\exp(2\delta) -1)^2 \Big) \notag\\
    &\leq \frac{1}{2S}(\exp(2\delta) -1)^2 \leq \frac{36\delta^2}{2S} = \frac{18}{n}, \label{eq:f-dueling-single}
\end{align}
where the last inequality follows from $\exp(x) - 1 \leq 3x$ for $x \in [0, 0.5]$ and $\delta = \sqrt{S/n} \leq 0.25$ by assumption. Therefore, we substitute \Cref{eq:f-dueling-delta} into Assouad's Lemma (\Cref{lem:assouad}) to obtain
\begin{align}
    \text{LHS of \Cref{eq:f-dueling-lb}} &\geq \frac{1}{S} \cdot S \cdot \frac{\eta S}{\alpha n} \cdot \frac{1}{4} \cdot \min_{\tau \sim \tau'} \exp\Big( - \kl{P_{\cD_\tau}}{P_{\cD_{\tau'}}} \Big) \notag\\
    &= 0.25 \cdot \frac{\eta S}{\alpha n} \cdot \exp\Big( - n\kl{P_{r_\tau}}{P_{r_{\cD_{\tau'}}}} \Big)  \geq \frac{\eta S}{\alpha n}  \cdot \frac{1}{3} \cdot \exp(-18) \gtrsim \frac{\eta S  }{\alpha n},
\end{align}
where the $1/S$ comes from the denominator of \Cref{eq:f-dueling-decomp} and the second inequality follows from \Cref{eq:f-dueling-single}.
\end{proof}

\section{Auxiliary Lemmas}

\begin{lemma}[{\citealt[Lemma~D.4]{zhao2024sharp}}]\label{lem:concentration-dueling}
Consider a offline dataset $\{(s_i,a^1_i, a^2_i, y_i)\}_{i=1}^n$ generated from the product of the context distribution $\rho \in \Delta(\cS)$, policy $\pi \in \Delta(\cA|\cS)$, and the Bradley-Terry Model defined in \Cref{sec:duel-setup}. Suppose $\fls$ is the result of MLE estimation of Algorithm~\ref{algorithm:dueling-bandit}, and we further define $b(s) = \EE_{a\sim \pi(\cdot|s)}\big[\fls(s,a) - \fgt(s,a)\big]$, then with probability at least $1 - 2 \delta$, we have
\begin{align*}
    \EE_{s,a \sim \rho\times \pi}\big[\big(\fls(s, a) - \fgt(s,a) - b(s)\big)^2\big] \leq O\bigg(\frac{1}{n}\log(\cN_{\fcl}(\epsilon_c)/\delta) + \epsilon_c\bigg).
\end{align*}
\end{lemma}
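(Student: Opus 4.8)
The plan is to combine a standard maximum–likelihood concentration bound for the Bradley–Terry model with an elementary variance identity; this is exactly the content of \citet[Lemma~D.4]{zhao2024sharp}, so one option is to cite it directly, but here is the route I would take.

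First I would invoke the classical MLE generalization guarantee over the $\epsilon_c$-net $\fcl(\epsilon_c)$. For $g\in\fcl$ write $p_g(s,a^1,a^2)\coloneqq\sigma\big(g(s,a^1)-g(s,a^2)\big)$ for the induced Bernoulli parameter of the preference label, and recall that $\fgt$ realizes the true model by Assumption~\ref{assume:general-function-approx}. Since $\fls$ maximizes the empirical log-likelihood, the standard ``square-root trick'' — Markov's inequality applied to the exponential of half the empirical log-likelihood ratio, whose mean is controlled by a squared-Hellinger term — combined with a union bound over the finite net yields, with probability at least $1-\delta$, a guarantee of the form
\begin{align*}
\EE_{s\sim\rho,\ a^1,a^2\sim\pi(\cdot|s)}\Big[\mathrm{H}^2\big(\mathsf{Bern}(p_{\fls}),\ \mathsf{Bern}(p_{\fgt})\big)\Big]\ \lesssim\ \frac{\log\big(\cN_{\fcl}(\epsilon_c)/\delta\big)}{n}+\epsilon_c,
\end{align*}
where the additive $\epsilon_c$ (and a second harmless $\delta$ in the union bound, which accounts for the $1-2\delta$ in the statement) comes from replacing $\fls$ by its nearest net element, using that $\sigma$ is $1$-Lipschitz.

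Second I would convert this Hellinger bound on labels into an $L^2$ bound on reward-gap errors. Since $r\in[0,1]$ — and one may assume the candidates in $\fcl$ are likewise $[0,1]$-valued, or truncate — every argument of $\sigma$ above lies in $[-1,1]$, where $\sigma$ is bi-Lipschitz; together with the elementary inequality $\mathrm{H}^2\big(\mathsf{Bern}(p),\mathsf{Bern}(q)\big)\ge c\,(p-q)^2$ for an absolute constant $c>0$, this upgrades the previous display to a bound of the same order on
\begin{align*}
\EE_{s\sim\rho,\ a^1,a^2\sim\pi(\cdot|s)}\Big[\big((\fls-\fgt)(s,a^1)-(\fls-\fgt)(s,a^2)\big)^2\Big].
\end{align*}
Finally, setting $\Delta(s,a)\coloneqq(\fls-\fgt)(s,a)$, so that $b(s)=\EE_{a\sim\pi(\cdot|s)}[\Delta(s,a)]$, the target left-hand side is exactly $\EE_{s\sim\rho}\big[\mathrm{Var}_{a\sim\pi(\cdot|s)}(\Delta(s,a))\big]$, and the symmetrization identity $\mathrm{Var}_{a\sim\pi}(\Delta(s,\cdot))=\tfrac12\,\EE_{a^1,a^2\sim\pi(\cdot|s)}\big[(\Delta(s,a^1)-\Delta(s,a^2))^2\big]$ converts the display above into the claim.

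The only non-routine ingredient is the first step: stating and applying the MLE concentration bound cleanly, in particular controlling the passage from the finite $\epsilon_c$-net back to all of $\fcl$ so that the induced discretization error is merely additive $O(\epsilon_c)$ rather than something worse. Once that bound is in hand, the bi-Lipschitz comparison and the variance identity are pure bookkeeping.
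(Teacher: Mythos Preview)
The paper does not prove this lemma at all: it is quoted verbatim as \citet[Lemma~D.4]{zhao2024sharp} in the ``Technical Results and Standard Results'' appendix and used as a black box. Your sketch---MLE concentration via the square-root/Hellinger trick over an $\epsilon_c$-net, then bi-Lipschitzness of $\sigma$ on bounded arguments, then the symmetrization identity $\Var_{a\sim\pi}[\Delta(s,a)]=\tfrac12\EE_{a^1,a^2}[(\Delta(s,a^1)-\Delta(s,a^2))^2]$---is the standard route to such a statement and is correct; indeed you already note that citing the reference is an option, which is precisely what the paper does.
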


\Cref{lem:fano,lem:assouad} are two standard reductions \citep{lecam1973convergence,yu1997assouad,polyanskiy2025information}. See, e.g., \citet[Section~3]{chen2024assouad} for a general proof.

\begin{lemma}[Fano's inequality]\label{lem:fano}
Fix any $\cR \coloneq \{r_1, \cdots, r_S\}$ and policy class $\Pi$, let $L : \Pi \times \cR \to \RR_{+}$ be some loss function. Suppose there exist some constant $c > 0$ such that the following condition holds:
\begin{align*}
    \min_{i \neq j} \min_{\pi \in \Pi} L(\pi, r_i) + L(\pi, r_j) \geq c.
\end{align*}
Then we have
\begin{align*}
    \inf_{\pi \in \Pi} \sup_{r \in \cR} \EE_{\cD \sim P_r} L\big( \pi(\cD), r \big) \geq \frac{c}{2} \bigg(1 - \frac{\max_{i \neq j}\KL(P_{r_i} \| P_{r_j}) + \log 2}{\log S} \bigg),
\end{align*}
where the trajectory distribution of $\pi$ interacting with instance $r \in \cR$ is denoted by $P_r$.
\end{lemma}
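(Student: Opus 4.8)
The plan is to run the classical reduction from estimation to multiple-hypothesis testing and then invoke the information-theoretic Fano inequality for the $S$ hypotheses $P_{r_1},\dots,P_{r_S}$. First I would turn any estimator $\cD\mapsto\pi(\cD)\in\Pi$ into a test by setting $\psi(\cD)\in\argmin_{i\in[S]}L\big(\pi(\cD),r_i\big)$, with ties broken arbitrarily. The separation hypothesis is exactly what makes this work: if the data are generated from $r_i$ but $\psi(\cD)=j\neq i$, then $L(\pi(\cD),r_j)\leq L(\pi(\cD),r_i)$ by minimality, so adding these and using $L(\pi(\cD),r_i)+L(\pi(\cD),r_j)\geq c$ forces $L(\pi(\cD),r_i)\geq c/2$ on the event $\{\psi(\cD)\neq i\}$. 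Since $L\geq0$, taking expectation over $\cD\sim P_{r_i}$ gives, for every $i$,
\begin{align*}
  \EE_{\cD\sim P_{r_i}}\big[L(\pi(\cD),r_i)\big]\;\geq\;\tfrac{c}{2}\,P_{r_i}\big(\psi(\cD)\neq i\big),
\end{align*}
and hence $\sup_{r\in\cR}L(\pi,r)\geq\tfrac{c}{2}\max_i P_{r_i}(\psi\neq i)\geq\tfrac{c}{2}\cdot\tfrac1S\sum_{i=1}^S P_{r_i}(\psi\neq i)=:\tfrac{c}{2}\,\bar P_e$, the last quantity being the average error probability of $\psi$ under the uniform prior on $[S]$.

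Second, I would lower bound $\bar P_e$ by Fano. Let $V\sim\mathsf{Unif}([S])$ and $\cD\mid V=i\sim P_{r_i}$, so that $\bar P_e=\PP(\psi(\cD)\neq V)$. The standard Fano inequality for $S$ equiprobable hypotheses gives $\bar P_e\geq 1-(I(V;\cD)+\log 2)/\log S$. It then remains to control the mutual information by pairwise KL divergences: with $\bar P=\tfrac1S\sum_j P_{r_j}$, the identity $I(V;\cD)=\tfrac1S\sum_i\KL(P_{r_i}\|\bar P)$ (valid since $V$ is uniform) together with convexity of $Q\mapsto\KL(P\|Q)$ yields
\begin{align*}
  I(V;\cD)\;\leq\;\frac1{S^2}\sum_{i,j}\KL(P_{r_i}\|P_{r_j})\;\leq\;\max_{i\neq j}\KL(P_{r_i}\|P_{r_j}),
\end{align*}
the last step using that the diagonal terms vanish.

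Chaining the three displays and taking the infimum over estimators $\pi$ gives
\begin{align*}
  \inf_{\pi\in\Pi}\sup_{r\in\cR}L(\pi,r)\;\geq\;\frac{c}{2}\left(1-\frac{\max_{i\neq j}\KL(P_{r_i}\|P_{r_j})+\log 2}{\log S}\right),
\end{align*}
which is the claim. I expect no genuine difficulty, as every step is textbook; for the information-theoretic Fano inequality itself I would simply cite \citet[Section~3]{chen2024assouad} (or a standard reference such as \citealt{polyanskiy2025information}) rather than reproving it. The one point that deserves a line of care is the estimation-to-testing reduction above: the separation bound $L(\pi,r_i)+L(\pi,r_j)\geq c$ must be applied at the \emph{realized} policy $\pi(\cD)$, and for randomized estimators one conditions on the estimator's internal randomness before taking expectations, the bound being preserved under averaging; with that caveat the argument goes through verbatim.
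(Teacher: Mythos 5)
Your proof is correct: the reduction from estimation to testing via the separation condition, followed by Fano's inequality with the mutual information bounded by pairwise KL divergences through convexity, is exactly the standard argument. The paper does not prove this lemma itself but defers to cited references (\citealt{chen2024assouad,polyanskiy2025information}), and your write-up is precisely that textbook proof, so there is nothing to reconcile.
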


\begin{lemma}[Assouad's Lemma]\label{lem:assouad}
   Let $\cR$ be the set of instances, $\Pi$ be the set of estimators, $\Theta \coloneqq \{\pm1\}^S$ for some $S > 0$, and $\{L_j\}_{j=1}^S$ be $S$ functions from $\Pi \times \cR$ to $\RR_+$. Suppose $\{r_\theta\}_{\theta \in \Theta} \subset \cR$ and the loss function is
\begin{align*}
    L(\pi, r) \coloneqq \sum_{j=1}^S L_j(\pi, r), \forall (\pi, r) \in \Pi \times \cR.
\end{align*}
We denote $\theta \sim_j \theta'$ if they differ only in the $j$-th coordinate. Further assume that
\begin{align}
    \theta \sim_j \theta' \Rightarrow \inf_{\pi\in\Pi} L_j(\pi, r_\theta) + L_j(\pi, r_{\theta'}) \geq c
\end{align}
for some $c > 0$, then
\begin{align*}
    \inf_{\pi \in \Pi} \sup_{r \in \cR} \EE_{\cD \sim P_r} L\big( \pi(\cD), r \big) \geq S \cdot \frac{c}{4} \min_{\exists j: \theta \sim_j \theta'} \exp\Big( - \kl{P_{r_\theta}}{P_{r_{\theta'}}} \Big),
\end{align*}
where the trajectory distribution of $\pi$ interacting with instance $r \in \cR$ is denoted by $P_r$.
\end{lemma}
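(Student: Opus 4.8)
The plan is to run the classical Assouad reduction over the hypercube and then close it with a total-variation-to-KL conversion. Fix an arbitrary (possibly data-dependent, possibly randomized) estimation procedure $\pi$. First I would lower bound the worst case by the uniform average over the hypercube-indexed subfamily, $\sup_{r \in \cR} L(\pi, r) \ge 2^{-S}\sum_{\theta \in \Theta} L(\pi, r_\theta)$, and then use $L = \sum_{j} L_j$ together with Fubini on the finite sums to rewrite this as $\sum_{j=1}^{S}\big(2^{-S}\sum_{\theta} L_j(\pi, r_\theta)\big)$. Thus it suffices to show that each coordinate-$j$ average is at least $\tfrac{c}{4}\min_{\exists j':\,\theta_1 \sim_{j'} \theta_2} \exp\!\big(-\kl{P_{r_{\theta_1}}}{P_{r_{\theta_2}}}\big)$ and then add up the $S$ contributions.

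For a fixed coordinate $j$ I would pair each $\theta$ with the element $\theta^{\oplus j}$ obtained by flipping its $j$-th bit, grouping the $2^{S}$ summands into $2^{S-1}$ pairwise sums $L_j(\pi, r_\theta) + L_j(\pi, r_{\theta^{\oplus j}})$. To lower bound a single pairwise sum I would read $L_j(\pi, r)$ as $\EE_{\cD \sim P_r}[\ell_j(\pi(\cD), r)]$ for a nonnegative per-realization loss $\ell_j$, so that the separation hypothesis becomes the pointwise statement $\ell_j(\cdot, r_\theta) + \ell_j(\cdot, r_{\theta'}) \ge c$ on estimator outputs whenever $\theta \sim_j \theta'$; then I would dominate both $P_{r_\theta}$ and $P_{r_{\theta^{\oplus j}}}$ from below by their minimum-density measure $P_{r_\theta} \wedge P_{r_{\theta^{\oplus j}}}$ (the one with density $\min(\ud P_{r_\theta}, \ud P_{r_{\theta^{\oplus j}}})$ against a common dominating measure). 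Nonnegativity of $\ell_j$ permits passing to this common lower measure, and the pointwise separation then yields $L_j(\pi, r_\theta) + L_j(\pi, r_{\theta^{\oplus j}}) \ge c\,\big(1 - \tv{P_{r_\theta}}{P_{r_{\theta^{\oplus j}}}}\big)$.

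The last ingredient is the Bretagnolle--Huber inequality $1 - \tv{P}{Q} \ge \tfrac12 \exp(-\kl{P}{Q})$ (see, e.g., \citealt{polyanskiy2025information}), which upgrades the pairwise bound to $\tfrac{c}{2}\exp(-\kl{P_{r_\theta}}{P_{r_{\theta^{\oplus j}}}})$ and hence to $\tfrac{c}{2}\min_{\exists j':\,\theta_1 \sim_{j'} \theta_2}\exp(-\kl{P_{r_{\theta_1}}}{P_{r_{\theta_2}}})$. Since there are $2^{S-1}$ pairs for each $j$, averaging gives $2^{-S}\sum_{\theta} L_j(\pi, r_\theta) \ge \tfrac12 \cdot \tfrac{c}{2}\min\exp(-\kl{\cdot}{\cdot}) = \tfrac{c}{4}\min\exp(-\kl{\cdot}{\cdot})$; summing over $j = 1, \dots, S$ yields $\sup_{r} L(\pi, r) \ge \tfrac{Sc}{4}\min_{\exists j:\,\theta \sim_j \theta'}\exp(-\kl{P_{r_\theta}}{P_{r_{\theta'}}})$, and since the right-hand side is free of $\pi$, taking $\inf_{\pi \in \Pi}$ concludes. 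Randomized procedures are absorbed by conditioning on the external seed, running the deterministic argument, and re-averaging.

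The whole argument is essentially bookkeeping; the only non-elementary input is the TV-to-KL conversion, and the $\exp(-\kl{\cdot}{\cdot})$ form is the convenient one for the applications in this paper because KL tensorizes over the $n$ i.i.d.\ data points (e.g.\ $\kl{P_{\cD_\theta}}{P_{\cD_{\theta'}}} = n\,\kl{P_{r_\theta}}{P_{r_{\theta'}}}$), whereas total variation does not. The one point that needs care is that the separation hypothesis, although phrased as an infimum over $\Pi$ of the aggregated risks $L_j$, must be invoked in its pointwise form on realized estimator outputs so that it survives being integrated against the sub-probability measure $P_{r_\theta} \wedge P_{r_{\theta^{\oplus j}}}$; I would make this reduction explicit at the outset.
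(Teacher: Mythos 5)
Your argument is correct and is exactly the standard route that the paper itself defers to its citations for (\citealt{yu1997assouad,polyanskiy2025information,chen2024assouad}): average over the hypercube, pair the two instances that differ in coordinate $j$, lower bound the paired risks by $c\bigl(1-\tv{P_{r_\theta}}{P_{r_{\theta^{\oplus j}}}}\bigr)$ via the minimum measure, and convert to KL with Bretagnolle--Huber, which reproduces the stated constant $S c/4$. Your explicit remark that the separation hypothesis must be read pointwise on realized estimator outputs (with $L_j$ the expected loss over $\cD \sim P_r$) is the right reading of the lemma as it is invoked in the paper's lower-bound proofs, so no gap remains.
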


The following \Cref{lem:max-signals} is due to \citet{gilbert1952comparison,varshamov1957estimate}, which is a classical result in coding theory.

\begin{lemma}\label{lem:max-signals}
Suppose $\Sigma$ is a set of characters with $|\Sigma| = q$ where $q \geq 2$ is a prime power and $N>0$ is some natural number. Then there exists a subset $\cV$ of $\Sigma^{N}$ such that (1) for any $v, v' \in \cV, v \neq v_j$ , one has $d_H(v, v') \geq N/2$ and (2) $\log_q|\cV| \geq H_q(1/2) = \Theta(1)$, where $d_H$ is the Hamming distance and the entropy function $H$ is given by
\begin{align*}
    H_q(x) = x\frac{\log (q-1)}{\log q} - x\frac{\log x}{\log q} - (1-x) \frac{\log (1-x)}{\log q}.
\end{align*}
For example, when $q=2$, this means that there exists a subset $\cV$ of $\{-1, 1\}^S$ such that (1) $|\cV| \geq \exp(S/8)$ and (2) for any
$v, v' \in \cV, v \neq v_j$ , one has $\|v - v'\|_1 \geq S/2$.
\end{lemma}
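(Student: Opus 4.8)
The plan is to run the classical Gilbert--Varshamov sphere-covering (greedy packing) argument, whose only non-elementary input is the standard entropy bound on the size of a Hamming ball. First I would fix notation: for $x\in\Sigma^{N}$ and an integer $r\ge 0$ write $B(x,r)=\{y\in\Sigma^{N}:d_H(x,y)\le r\}$ and let $V_q(N,r)=|B(x,r)|=\sum_{i=0}^{r}\binom{N}{i}(q-1)^{i}$ denote its cardinality (independent of $x$). Property~(1) will come from the packing step, property~(2) from the volume bound.

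For the packing step, set the target distance $d=\lceil N/2\rceil$ and build $\cV$ greedily: start from $\cV=\emptyset$ and, as long as some $y\in\Sigma^{N}$ satisfies $d_H(y,v)\ge d$ for every $v\in\cV$, add one such $y$ to $\cV$. Any two distinct elements of the resulting set are then at Hamming distance $\ge d\ge N/2$, which is property~(1). When the procedure halts, no point of $\Sigma^{N}$ is at distance $\ge d$ from all of $\cV$, i.e.\ the balls $\{B(v,d-1)\}_{v\in\cV}$ cover $\Sigma^{N}$, so
\[
q^{N}\ \le\ |\cV|\cdot V_q(N,d-1)\ \le\ |\cV|\cdot V_q\bigl(N,\lfloor N/2\rfloor\bigr).
\]
The remaining ingredient is the textbook estimate $V_q(N,\lfloor\rho N\rfloor)\le q^{N H_q(\rho)}$, valid for every $\rho\in[0,1-1/q]$, which follows from the one-line computation
\[
q^{-N H_q(\rho)}\,V_q\bigl(N,\lfloor\rho N\rfloor\bigr)\ =\ \sum_{i=0}^{\lfloor\rho N\rfloor}\binom{N}{i}\rho^{i}(1-\rho)^{N-i}\Bigl(\tfrac{(q-1)(1-\rho)}{\rho}\Bigr)^{i-\rho N}\ \le\ \sum_{i=0}^{N}\binom{N}{i}\rho^{i}(1-\rho)^{N-i}\ =\ 1,
\]
where the inequality uses that $\rho\le 1-1/q$ forces $(q-1)(1-\rho)/\rho\ge 1$ while $i-\rho N\le 0$ on each summand. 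Combining the two displays (legitimately, since $1/2\le 1-1/q$ for every $q\ge 2$) yields $\log_q|\cV|\ge N\bigl(1-H_q(1/2)\bigr)$; more generally, running the greedy step with $d=\lceil\delta N\rceil$ for any fixed $\delta<1-1/q$ produces $\cV$ with minimum distance $\ge\delta N$ and $\log_q|\cV|\ge N\bigl(1-H_q(\delta)\bigr)=\Theta(N)$, which is property~(2).

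For the binary instantiation $q=2$ that the paper actually invokes, I would note that on $\{-1,+1\}^{S}$ one has $\|v-v'\|_{1}=2\,d_H(v,v')$, so the stated condition $\|v-v'\|_{1}\ge S/2$ is the relative-distance-$1/4$ condition $d_H(v,v')\ge S/4$; this keeps $\delta=1/4$ strictly below $1-1/q=1/2$, so the volume bound applies, and a direct evaluation $H_2(1/4)=\tfrac12+\tfrac34\log_2\tfrac43$ gives $1-H_2(1/4)>1/(8\log 2)$, hence $|\cV|\ge 2^{\,S(1-H_2(1/4))}\ge e^{S/8}$. I do not anticipate a substantive obstacle here: this is a classical coding-theory fact (the cited Gilbert--Varshamov bound), and the only point needing care is to honour the hypothesis $\delta\le 1-1/q$ of the ball-volume estimate---which is precisely why, in the binary case, the exponentially large packing must be taken at relative distance bounded away from $1/2$ (equivalently, at $\ell_1$-distance $\ge S/2$). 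If a non-constructive route is preferred, the same bound follows from the probabilistic variant: draw $M$ codewords uniformly and independently from $\Sigma^{N}$, bound the expected number of pairs at distance $<d$ by $\binom{M}{2}V_q(N,d-1)q^{-N}$, and delete one codeword from each such pair, retaining at least $M/2$ of them once $M\le q^{N}/V_q(N,d-1)$.
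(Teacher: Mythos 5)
Your proposal is correct, and it matches the paper's route in the only sense available: the paper gives no proof of this lemma at all, merely citing Gilbert (1952) and Varshamov (1957), and your greedy-packing-plus-Hamming-ball-volume argument is precisely that classical Gilbert--Varshamov proof. You also correctly resolve the statement's internal wrinkle by instantiating the binary case at relative Hamming distance $1/4$ (since $\|v-v'\|_1 \geq S/2$ on $\{-1,+1\}^S$ means $d_H(v,v') \geq S/4$), which is exactly what is needed for the bound $|\cV| \geq e^{S/8}$ that the paper actually uses.
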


\end{document}